\def\shownotes{1} 
\newcommand{\authnote}[2]{{[#1: #2]}}
\newcommand{\authnote}[2]{}
\title[Fine-Grained Gap-Dependent Bounds for Tabular MDP]{Fine-Grained Gap-Dependent Bounds for Tabular MDPs\\ via Adaptive Multi-Step Bootstrap}
\def\set@curr@file#1{\def\@curr@file{#1}} %
\newcommand{\MYdummylabel}[1]{}
\newcommand{\torestate}[2]{
  #2
  \expandafter\gdef\csname content@#1\endcsname{#2}
}
      \protected@write\@auxout{}{%
        \string\@restatetheorem{#1}{#2}{\csname the#1\endcsname}{\detokenize\expandafter{\BODY}}%
      }%
\gdef\csname title@#2\endcsname{#3}
    \def\restatethm@getthmcountercsname#1{\def\thethmcsname{rs#1}}%
        \def\restatethm@getthmcountercsname#1{%
            \def\thethmcsname{\expandafter\expandafter\expandafter\restatethm@ntheorem@getthmcountercsname@helper\csname mkheader@#1\endcsname}}%
        \def\restatethm@ntheorem@getthmcountercsname@helper#1\@thm#2#3#4{#3}
            \def\restatethm@getthmcountercsname#1{\edef\thethmcsname{\expandafter\expandafter\expandafter\@thirdoffour\csname#1\endcsname}}%
            \def\restatethm@getthmcountercsname#1{\edef\thethmcsname{\expandafter\expandafter\expandafter\@secondofthree\csname#1\endcsname}}%
\newcommand{\@restatetheorem}[4]{%
  \expandafter\gdef\csname restatethis@#2\endcsname{%
    \begingroup
    \restatethm@getthmcountercsname{#1}
    \expandafter\def\csname the\thethmcsname\endcsname{#3}%

    \let\MYoldlabel\label
    \let\label\MYdummylabel
    
    \begin{rs#1}[\csname title@#2\endcsname]
    
    #4
    
    \end{rs#1}
    
    \let\label\MYoldlabel
    
    \endgroup
  }%
}
\newcommand{\restate}[1]{\csname restatethis@#1\endcsname}
\renewcommand{\emph}[1]{\textit{#1}}
\numberwithin{equation}{section}
\newcommand{\argmax}{\mathrm{argmax}}
\newcommand{\bch}[3]{b_{#1}\left(#2,#3\right)}
\newcommand{\cbch}[3]{\check{b}_{#1}\left(#2,#3\right)}
\newcommand{\hcbch}[3]{\halfclip{b}_{#1}\left(#2,#3\right)}
\newcommand{\ubq}[3]{\overline{Q}_{#1}(#2,#3)}
\newcommand{\ubqh}[4]{\overline{Q}_{#1}(#3,#4)}
\newcommand{\lbq}[3]{\underline{Q}_{#1}(#2,#3)}
\newcommand{\lbqh}[4]{\underline{Q}_{#1}(#3,#4)}
\newcommand{\ubv}[2]{\overline{V}_{#1}(#2)}
\newcommand{\lbv}[2]{\underline{V}_{#1}(#2)}
\newcommand{\ulbq}[3]{\ubq{#1}{#2}{#3}-\lbq{#1}{#2}{#3}}
\newcommand{\ulbqh}[4]{\ubqh{#1}{#2}{#3}{#4}-\lbqh{#1}{#2}{#3}{#4}}
\newcommand{\ulbv}[2]{\ubv{#1}{#2}-\lbv{#1}{#2}}
\newcommand{\clip}[2]{\textup{clip}\Big[#1\big\vert#2\Big]}
\newcommand{\qpast}{{\Delta Q_{\textup{past}}}}
\newcommand{\hqpast}{{\Delta \halfclip{Q}_{\textup{past}}}}
\newcommand{\qb}[3]{Q^{*b}_{#1}(#2,#3)}
\newcommand{\qr}[3]{Q^{*r}_{#1}(#2,#3)}
\newcommand{\hqb}[3]{\hat{Q}^{*b}_{#1}(#2,#3)}
\newcommand{\gk}[1]{G_{#1}}
\newcommand{\ak}[2]{A_{#1}(#2)}
\newcommand{\id}[1]{\mb{I}\left[#1\right]}
\newcommand{\expect}{\mb{E}}
\newcommand{\eventB}{\mc{E}^B}
\newcommand{\eventR}{\mc{E}^R}
\newcommand{\event}{\mc{E}}
\newcommand{\gap}[2]{\mathrm{\Delta}(#1,#2)}
\newcommand{\gaph}[3]{\mathrm{\Delta}_{#1}(#2,#3)}
\newcommand{\gapmin}{\mathrm{\Delta}_{\mathrm{min}}}
\newcommand{\gapminx}[1]{\mathrm{\Delta}_{\mathrm{min}}(#1)}
\newcommand{\gaphminx}[2]{\mathrm{\Delta}_{#1,\mathrm{min}}(#2)}
\newcommand{\zopthx}[2]{Z^{#1}_{\mathrm{opt}}(#2)}
\newcommand{\zoptx}[1]{Z_{\mathrm{opt}}(#1)}
\newcommand{\range}[1]{\Delta #1}
\newcommand{\halfclip}[1]{\ddot{#1}}
\newcommand{\mc}[1]{\mathcal{#1}}
\newcommand{\mb}[1]{\mathbb{#1}}
\newcommand{\ourS}{\mc{S}}
\newcommand{\ourSize}{S}
\newcommand{\states}{\mc{S}}
\newcommand{\actions}{\mc{A}}
\newcommand{\reward}{\mc{R}}
\newcommand{\algonamefull}{\textbf{A}daptive \textbf{M}ulti-step \textbf{B}ootstrap}
\newcommand{\algoname}{\textsf{AMB}}
\newcommand{\poly}{\mathrm{poly}}
\newcommand{\abs}[1]{\left|#1\right|}
\newcommand{\zmul}{Z_{\mathrm{mul}}}
\newcommand{\zopt}{Z_{\mathrm{opt}}}
\newcommand{\alg}{\textup{ALG}}
\newcommand{\prob}[1]{\mb{#1}}
\newcommand{\pdf}[1]{\mc{#1}}
\newcommand{\relentropy}{\textup{D}}
\newcommand{\skh}{s_{k,h}}
\newcommand{\akh}{a_{k,h}}
\newtheorem{thm}{Theorem}[section]
\newtheorem{lem}[thm]{Lemma}
\newtheorem{prop}[thm]{Proposition}
\newtheorem{clm}[thm]{Claim}
\newtheorem{cor}[thm]{Corollary}
\newtheorem{defn}[thm]{Definition}
\newtheorem{ass}[thm]{Assumption}
\begin{document}
\maketitle

\begin{abstract}
	
This paper presents a new model-free algorithm for episodic finite-horizon Markov Decision Processes (MDP), \algonamefull~(\algoname),  which enjoys a stronger gap-dependent regret bound.
The first innovation is to estimate the optimal $Q$-function by combining an optimistic bootstrap with an \textbf{adaptive} multi-step Monte Carlo rollout.
The second innovation is to select the action with the largest confidence interval length among admissible actions that are not dominated by any other actions. 
We show when each state has a unique optimal action, \algoname~achieves a gap-dependent regret bound that only scales with the sum of the inverse of the sub-optimality gaps.
In contrast, \citet{simchowitz2019non} showed  all upper-confidence-bound (UCB) algorithms suffer an additional $\Omega\left(\frac{S}{\gapmin}\right)$ regret due to over-exploration where $\gapmin$ is the minimum sub-optimality gap and $S$ is the number of states.
We further show that for general MDPs, \algoname~suffers an additional $\frac{\abs{\zmul}}{\gapmin}$  regret, where $\zmul$ is the set of state-action pairs $(s,a)$'s satisfying $a$ is a \emph{non-unique optimal action} for $s$.
We complement our upper bound with a lower bound showing the dependency on $\frac{\abs{\zmul}}{\gapmin}$ is unavoidable for any consistent algorithm.
This lower bound also implies a separation between reinforcement learning and contextual bandits.

\end{abstract}

\begin{keywords}%
reinforcement learning, Markov Decision Process, gap-dependent bounds
\end{keywords}

\section{Introduction}
\label{sec:intro}

In reinforcement learning (RL), an agent iteratively interacts with an unknown environment with the goal of maximizing the  reward.
The state-of-the-art algorithms and analyses for tabular Markov Decision Process (MDP)  achieve  regret bounds  that scale with $\sqrt{K}$
where $K$ is the number of episodes. 
These regret bounds hold for worst-case MDPs and are conservative---if a specific problem instance has benign structures, a much smaller regret is possible. 
One such structure is a nontrivial sub-optimality gap for the optimal $Q$-function---for every state $s$, the best action (or the set of best actions) is better than other actions by a margin.
This structure exists in 
many real-world scenarios such as board games (tic-tac-toe, Chess) and Atari games (e.g., Freeway)~\citep{mnih2013playing}.

Researchers have extensively studied leveraging the suboptimality gap in the contextual bandits, which is a simplification of RL with horizon $H=1$.
It is well-known that the standard upper-confidence-bound (UCB) algorithm can achieve an optimal $O\left(\left(\sum_{\substack{(s,a) \in \states \times \actions:\\ \gap{s}{a} > 0}}\frac{1}{\gap{s}{a}}\right)\log K\right)$ gap-dependent regret bound~\citep{bubeck2012regret,lattimore2020bandit,slivkins2019introduction}.
Here, $\states$ is the state space with $\abs{\states}=S$, $\actions$ is the action space with $\abs{\actions} = A$, and $\gap{s}{a}$ is the suboptimality gap of action $a$ at the state $s$ (that is, the advantage function at $(s,a)$). 
Notably, this regret only scales with $\log K$ instead of $\sqrt{K}$ as in the formulation without the gap condition.
One fruitful direction is to develop similar gap-dependent regret bounds for RL.

Previous gap-dependent RL regret bounds are mostly asymptotic~\citep{jaksch2010near,tewari2008optimistic,ok2018exploration}.
Recently, ~\citet{simchowitz2019non,lykouris2019corruption,yang2020q} have developed non-asymptotic gap-dependent regret bounds for tabular MDPs.
In particular, \citet{simchowitz2019non} showed that a UCB-based algorithm can achieve an 
\begin{align}
\widetilde{O}\left(\left(\sum_{\substack{(s,a, h) \in \states \times \actions \times [H]:\\ \gaph{h}{s}{a} > 0}}\frac{1}{\gaph{h}{s}{a}} +  \frac{\abs{\zopt}}{\gapmin}+S^2A\right)\poly\left(H\right) \log K\right)\footnote{$\widetilde{O}\left(\cdot\right)$ ignores logartihmic factors on $S$,$A$, $H$ and gaps.} \label{eqn:sj_bound}
\end{align}
regret bound where $\gaph{h}{s}{a}$ is the sub-optimality gap of state-action pair $(s,a)$ at the $h$-th level (that is, $h$-th step),
$\gapmin$ is the smallest gap among all state-action pairs at all levels, 
and $\zopt$ is the set of all optimal state-action pairs which satisfies $S\le \abs{\zopt} \le 
SA$.
Comparing with gap-dependent regret bound of contextual bandits, there is an additional $\abs{\zopt}/\gapmin$ term.

Interestingly, \citet{simchowitz2019non} constructed an intriguing example in which $A=2, H=2$ and every state has a unique optimal action.
They proved that all UCB algorithms will suffer an $\Omega\left(S/\gapmin\right)$ regret on this example.
See Section~\ref{sec:challenge} for more expositions.
One open question asked by \citet{simchowitz2019non} is 
\begin{center}
\textbf{Can we develop a \emph{non-UCB  algorithm}
	 whose regret does not depend on $S/\gapmin$?}
\end{center}
The answer to this question has an important conceptual message.
Recall in the contextual bandits setting, the regret does not depend on $S/\gapmin$.
Therefore, if the answer to the above question is negative, it demonstrates a formal  \emph{separation between contextual bandits and RL}.\footnote{For the worst-case regret bound, it is still unclear whether there is a separation between contextual bandits and RL. See \cite{jiang2018open,wang2020long,zhang2020reinforcement}.}
On the other hand, if the answer is positive, then RL may not be more difficult than contextual bandits in terms of the gap-dependent regret.

\subsection{Our Contributions}
\label{sec:contributions}

In this paper, we give both positive and negative results.
\paragraph{An Improved Algorithm.}
First, we design a new algorithm, \algonamefull~(\algoname), which enjoys the following gap-dependent regret guarantee. 
\begin{restatable}{thm}{regSJgeneral}\label{thm:regSJgeneral}
For fixed $K$, \algoname~algorithm enjoys a gap-dependent regret upper bound with high probability
\[\widetilde{O}\left(\left(\sum_{\substack{(s,a, h) \in \states \times \actions \times [H]:\\ \gaph{h}{s}{a} > 0}}\frac{1}{\gaph{h}{s}{a}} + \frac{\abs{\zmul}}{\gapmin}+SA\right)\poly\left(H\right)\log K\right)\]
where $\zmul$ is the set of state-action pairs $(s,a)$'s satisfying $a$ is an non-unique optimal action for $s$. 

\end{restatable}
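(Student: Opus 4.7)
The plan is to adapt the regret decomposition of \citet{simchowitz2019non} to \algoname, exploiting the two algorithmic innovations to tighten the over-exploration cost on optimal state-action pairs from $\abs{\zopt}/\gapmin$ to $\abs{\zmul}/\gapmin$ and to shave one $H$ factor off via the multi-step rollout. First I would establish a high-probability ``good event'' $\event$ on which the running optimistic estimate $\ubq{h}{s}{a}$ and pessimistic estimate $\lbq{h}{s}{a}$ sandwich the true $Q^*_h(s,a)$ throughout all $K$ episodes. Because the target is produced by an \emph{adaptive} multi-step Monte Carlo rollout rather than a single one-step Bellman backup, the concentration argument must handle sums over trajectory segments whose lengths are data-dependent stopping times; I would do this with a Freedman-type martingale inequality applied to the cumulative reward along each rollout, together with a union bound over the possible stopping rules.

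Conditioned on $\event$, I would use the standard per-step decomposition $V^*_1(s_{k,1}) - V^{\pi_k}_1(s_{k,1}) = \sum_h \expect_{\pi_k}[\gaph{h}{\skh}{\akh}]$ to reduce the problem to upper bounding $\sum_k \sum_h \gaph{h}{\skh}{\akh}$, absorbing the martingale difference into low-order terms. For any strictly suboptimal triple $(s,a,h)$ with $\gaph{h}{s}{a}>0$, I would apply the clipping technique of Simchowitz--Jamieson: the selection rule of \algoname~implies that whenever $(s,a)$ is played at level $h$, its gap $\gaph{h}{s}{a}$ is controlled by the clipped CI width $\sclip{\ulbq{h}{s}{a}}{\gaph{h}{s}{a}/H}$, and a pigeonhole/potential argument over visit counts collapses this into the $\sum 1/\gaph{h}{s}{a}$ term times $\poly(H)\log K$. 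The multi-step bootstrap is crucial here: it propagates only empirical rollout noise rather than chains of one-step Bellman errors, removing one power of $H$ relative to the Simchowitz--Jamieson bound.

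The delicate step, and the heart of the improvement, is bounding the contribution of \emph{optimal} state-action pairs. Here the second innovation matters: \algoname~picks the non-dominated admissible action with the largest CI width, so on $\event$, if $a^*$ is the \emph{unique} optimal action at $(s,h)$, then once the CI widths at that state shrink below $\gapminx{s}/2$ the alternative actions become dominated by $a^*$ and are never played again; the excess cost of reaching this lock-on regime across all unique-optimal states sums to only $SA$ after pigeonholing. By contrast, when $(s,a) \in \zmul$, admissibility cannot single $a$ out among its co-optimal peers, so the algorithm may continue to split visits among them; I would bound this residual cost by $\widetilde{O}(\abs{\zmul}/\gapmin)$ by charging each additional visit either to shrinking a local CI (paid out of the $SA$ budget) or to triggering a downstream suboptimal visit (paid out of the first sum).

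The main obstacle I anticipate is rigorously proving the lock-on lemma, because the adaptive multi-step rollouts couple CI widths across different $(s,a,h)$ tuples, and the non-domination rule entangles the action choice at level $h$ with the descendant CIs at levels $h+1,\dots,H$. I would tackle this by backward induction over $h$: at level $H$ the rollout is trivial and the CI bound follows from pure Monte Carlo concentration; at each earlier level, the inductive hypothesis controls the rollout variance needed to lift the concentration and domination arguments one step up. Accumulating the inductive error through $H$ levels yields the $\poly(H)$ factor in the theorem, completing the proof.
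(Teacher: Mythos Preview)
Your high-level architecture (good event, clip the bonus against a local gap, solve a recursion over levels) matches the paper, but two of your claims are incorrect and one key mechanism is missing.

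First, the assertion that the multi-step bootstrap ``removes one power of $H$ relative to the Simchowitz--Jamieson bound'' is false. The paper's final dependence is $H^5$, which is \emph{worse} than \cite{simchowitz2019non}; the authors say so explicitly. The role of the adaptive rollout is not variance reduction in $H$ but rather to let the CI at an undecided pair $(x,a)$ bootstrap directly from the \emph{next undecided} state $x'$ rather than from every intermediate state. This is what breaks the $\Omega(S/\gapmin)$ barrier for UCB algorithms; it buys nothing in $H$.

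Second, your lock-on accounting for unique-optimal states is miscounted. Reaching $x\in G_k$ requires every remaining action's CI, including that of $a^*(x)$, to shrink to order $\gapminx{x}$; the cost is $\poly(H)/\gapminx{x}$ per state, not $O(1)$. This does not collapse to $SA$; it is absorbed into the first sum because $\gapminx{x}=\gap{x}{a}$ for some suboptimal $a$. The $SA$ term in the theorem comes from the warm-up cost $\alpha_{n}^{0}H$ (the first visit to each $(x,a)$), not from lock-on.

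Third, and this is the real gap, your mechanism for the $|\zmul|/\gapmin$ term does not work as written. When $|\zoptx{x}|>1$ the state $x$ never enters $G_k$, so its CI contributes to the recursion indefinitely, and there is \emph{no} local gap at $(x,a)\in\zmul$ to clip the bonus against ($\gapminx{x}=0$). Your ``charging'' argument cannot close this: a visit to such an $(x,a)$ incurs zero instantaneous regret, so there is nothing to charge it to, yet its unclipped bonus $b_{n}(x,a)$ still propagates upstream through the CI recursion. The paper's fix is the \emph{half-clipping} trick of \cite{simchowitz2019non}: define a surrogate range function in which every bonus is globally pre-clipped at $\gapmin/(4H)$, show that this surrogate undershoots the true range by at most $\gapmin/4$ (hence, restricted to the visits with $a\notin\zoptx{x}$ that actually carry regret, it still upper-bounds the regret up to a constant), and only then solve the recursion. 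Summing the pre-clipped bonuses over $(x,a)\in\zmul$ yields exactly $|\zmul|\cdot\poly(H)/\gapmin$. Without this global pre-clip, the contribution of $\zmul$ states to the recursion is unbounded and your argument does not terminate.
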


The main difference between our bound and those in \cite{simchowitz2019non} is about the second term in~\eqref{eqn:sj_bound}---ours scales with $\abs{\zmul}/\gapmin$ whereas theirs scales with $\abs{\zopt}/\gapmin$, although our $H$ dependency is worse than theirs. The following corollary illustrates the main improvement of our result in the special case where every state has a unique optimal action, that is, $\abs{\zmul} = 0$ but $\abs{\zopt}=S$.
\begin{cor}
	\label{cor:unique}
For a fixed $K$, if every state of a MDP has a unique optimal action, then \algoname~enjoys a gap-dependent regret bound with high probability %
	\[\widetilde{O}\left(\left(\sum_{\substack{(s,a, h) \in \states \times \actions \times [H]:\\ \gaph{h}{s}{a} > 0}}\frac{1}{\gaph{h}{x}{a}}+SA\right)\poly\left(H\right)\log\left(K\right)\right).\]
\end{cor}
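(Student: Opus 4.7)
The plan is to obtain Corollary~\ref{cor:unique} as an immediate specialization of Theorem~\ref{thm:regSJgeneral}, not through any fresh algorithmic analysis. The theorem's bound contains three terms: the sum of reciprocal gaps, the additive $\abs{\zmul}/\gapmin$ term, and the $SA$ term. Of these, only the middle term depends on whether states have unique optimal actions, so the whole task is to show that this term disappears under the corollary's hypothesis.

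First I would recall the definition of $\zmul$ given in the statement of Theorem~\ref{thm:regSJgeneral}: it is the set of state-action pairs $(s,a)$ such that $a$ is a \emph{non-unique} optimal action at $s$ (understood at the relevant level $h$, since the rest of the bound is level-indexed through $\gaph{h}{s}{a}$). The corollary's hypothesis asserts that every state has a unique optimal action, i.e., for every relevant $(s,h)$ the set $\argmax_{a} Q^*_h(s,a)$ is a singleton. Consequently no state-action pair can be non-uniquely optimal at any level, so $\zmul = \emptyset$ and $\abs{\zmul} = 0$.

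Substituting $\abs{\zmul} = 0$ into the bound of Theorem~\ref{thm:regSJgeneral} removes the middle $\abs{\zmul}/\gapmin$ term and leaves exactly the expression claimed in the corollary. There is essentially no obstacle in the argument; this is strictly a bookkeeping specialization, and the corollary could be presented as a one-line remark. The only point worth a sanity check is the indexing convention for $\zmul$ relative to the level-indexed gap sum, but under any reasonable reading of the definition the uniqueness hypothesis forces $\zmul$ to be empty, and the conclusion follows without further work.
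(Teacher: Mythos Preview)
Your proposal is correct and matches the paper's intended derivation: immediately after stating Theorem~\ref{thm:regSJgeneral}, the paper presents Corollary~\ref{cor:unique} precisely as the special case in which every state has a unique optimal action, noting explicitly that then $\abs{\zmul}=0$ (while $\abs{\zopt}=S$), so the $\abs{\zmul}/\gapmin$ term vanishes and the remaining bound is exactly as stated. The paper also supplies, in Appendix~A, a self-contained proof of the sharper Theorem~\ref{thm:regret} directly under Assumption~\ref{asm:unique_best_arm} and remarks that this implies the corollary, but that is a warm-up analysis for the general case rather than a genuinely different argument; your one-line specialization from Theorem~\ref{thm:regSJgeneral} is the route the paper itself signals in the main text.
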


In this case, the regret of the UCB-based algorithm in \cite{simchowitz2019non}  has an $S/\gapmin$ term because $\abs{\zopt}=S$, but ours does not.
Therefore, when $\gapmin$ is small, the improvement of our bound is significant.
More importantly, this improvement is not only from a better analysis, but also from fundamental algorithmic innovations. ~\cite{simchowitz2019non} show that $\Omega\left(S/\gapmin\right)$ regret is necessary for all UCB algorithms. {\algoname}, instead, bypasses this technical barrier by considering both upper and lower confidence bounds of the $Q$-values (instead of only upper bounds as in UCB).

\sloppy Another advantage is that our algorithm is model-free, which is more memory- and time-efficient than the model-based algorithms in \cite{simchowitz2019non}.\footnote{ For tabular MDPs, a model-free algorithm's  space complexity scales at most linearly in $S$, and a model-based algorithm's space complexity scales quadratically with $S$~\citep{strehl2006pac,jin2018q}. } Comparing with the previous $\widetilde{O}\left(SA\cdot\poly(H)\cdot(\log K)/\gapmin\right)$ model-free gap-dependent regret bound in \cite{yang2020q}, ours is more fine-grained as ours depends on the sum of the inverse of gaps and $\abs{\zmul}$, which in many instances are significantly tighter.

We note that our algorithm also enjoys a worst-case regret bound that scales with $\sqrt{K}$.
 See Corollary~\ref{cor:gap_independent_bound} in Section~\ref{sec:proof_general} for the formal statement and proofs.

\paragraph{A New Lower Bound.}
Now we turn to the negative result.
Note for some MDPs, the quantity $\abs{\zmul}$ can be as large as $SA$.
The next natural question is whether the dependency on $\abs{\zmul}/\gapmin$ is necessary.
Our negative result shows that this is unavoidable.

\begin{thm}\label{thm:informal_lb}
(Informal) There is no algorithm $\alg$ that can achieve a regret such that for all MDP $M$ and $K$ approaching infinity,
\[\expect\left[\textup{Regret}_K(M,\alg)\right] = O\left(\left(\sum_{\substack{(s,a, h) \in \states \times \actions \times [H]:\\ \gaph{h}{s}{a} > 0}}\frac{1}{\gaph{h}{x}{a}}\right)\log\left(K\right)\right)\]
\end{thm}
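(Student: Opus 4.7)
The strategy is a classical Lai-Robbins-style change-of-measure lower bound: construct a family of MDPs with tied optimal actions (so $\zmul$ is nonempty) but small $\sum_{\gap{\cdot}{\cdot}>0} 1/\gap{\cdot}{\cdot}$, and show that the information-theoretic cost of distinguishing its members exceeds the claimed regret bound.

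First I would build a base MDP $M_0$ containing a state $s$ (reached in every episode) at which two actions $a_1, a_2$ have identical Bernoulli rewards---so both are optimal and $(s,a_1), (s,a_2) \in \zmul$---and choose the rest of $M_0$ so that $\sum_{\gap{\cdot}{\cdot}>0} 1/\gap{\cdot}{\cdot}$ equals a fixed constant $C$. For each $\epsilon \in (0, 1/2)$ and $j \in \{1,2\}$, let $M_\epsilon^j$ perturb $M_0$ by decreasing the mean reward of $a_j$ at $s$ by $\epsilon$. Then $a_j$ becomes uniquely suboptimal at $s$ in $M_\epsilon^j$ with $\gap{s}{a_j} = \epsilon$, and the sum for $M_\epsilon^j$ is $C + 1/\epsilon$.

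Assuming for contradiction that $\alg$ satisfies the claimed bound on every MDP, the regret on $M_\epsilon^j$ is $O((C+1/\epsilon)\log K)$; since each play of $a_j$ at $s$ contributes $\epsilon$ to the regret, this forces $\expect_{M_\epsilon^j}[N_{s,a_j}(K)] \leq O(\log K/\epsilon^2)$. Applying the Bretagnolle-Huber inequality to the trajectory distributions under $M_\epsilon^1$ and $M_\epsilon^2$---whose KL-divergence decomposes over plays of $a_1, a_2$ at $s$ with per-sample contribution $O(\epsilon^2)$---and taking the event $\{N_{s,a_1}(K) \geq K/2\}$ (controlled by Markov's inequality via the sampling bounds), one obtains a chain of inequalities that must collapse into a contradiction for an appropriate choice of $\epsilon$.

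The main obstacle is choosing $\epsilon = \epsilon(K)$ so that the Bretagnolle-Huber tail is actually smaller than the probability upper bounds obtained via Markov: a single-pair construction requires $K\epsilon^2 = O(1)$ simultaneously with $\log K/(K\epsilon^2) = o(1)$, which is not jointly satisfiable. The resolution is to amplify the construction across $\abs{\zmul}$ independent tied-optimality gadgets (one per pair in $\zmul$) and perturb a uniformly random gadget, spreading the KL budget so the contradiction manifests at the $\abs{\zmul}/\gapmin$ scale that exactly matches the upper bound in Theorem~\ref{thm:regSJgeneral}. An alternative route is to apply the asymptotic Lai-Robbins lower bound directly with fixed $\epsilon$ and $K \to \infty$, using consistency of $\alg$ across the continuously parameterized family, and obtain the same conclusion.
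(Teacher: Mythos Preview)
Your change-of-measure machinery (divergence decomposition, Bretagnolle--Huber, Lai--Robbins-style argument) matches the paper's, but the construction has a genuine gap that neither of your proposed fixes resolves.

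In your base MDP $M_0$, the tied actions at $s$ are \emph{globally optimal}: playing either of them costs zero regret, so no amount of forced exploration there contradicts the claimed bound on $M_0$. In each perturbed MDP $M_\epsilon^j$, the gap sum picks up an extra $1/\epsilon$, and the claimed bound therefore \emph{permits} $O((\log K)/\epsilon^2)$ plays of $a_j$; that is exactly what a good algorithm would do, so there is no contradiction on $M_\epsilon^j$ either. Comparing $M_\epsilon^1$ against $M_\epsilon^2$ does not help, because the KL picks up both $N_{s,a_1}$ and $N_{s,a_2}$, and one of these is $\Theta(K)$ under either measure, forcing $K\epsilon^2=O(1)$ as you noted. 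Amplifying to many independent tied-optimality gadgets does not change the picture: in every MDP in your family, either all tied actions are optimal (no regret from exploring) or the perturbed action contributes $1/\epsilon$ to the gap sum (so the bound licenses enough exploration). The asymptotic Lai--Robbins route inherits the same defect.

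The idea you are missing is that the paper decouples ``locally tied'' from ``globally optimal.'' It places the tied-action states as \emph{leaves of a binary tree}; all actions at leaf $x_i$ ($i\ge 2$) have identical reward $1/2$, so their gaps are zero and they contribute nothing to the gap sum, yet reaching $x_i$ instead of the single good leaf $x_1$ already costs $\gamma$ regret through the internal nodes. Only the $O(\log S)$ internal-node gaps (each of size~$\gamma$) enter the gap sum, so $\sum 1/\Delta = O((\log S)/\gamma)$. Now the alternatives $M_{i,j}$ \emph{raise} the reward of $(x_i,a_j)$ to $1/2+2\gamma$, making it the unique global optimum; consistency forces $\Omega((\log K)/\gamma^2)$ visits to each of the $\Theta(SA)$ pairs $(x_i,a_j)$ under $M$, and each such visit costs $\gamma$ in $M$. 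This yields regret $\Omega((SA\log K)/\gamma)$ on $M$ while the gap sum stays at $O((\log S)/\gamma)$, which is the desired separation.
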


This lower bound shows that it is not possible to achieve a regret bound that solely depends on the sum of the inverse of the gaps.
This lower bound also conveys a conceptual message that there is a separation between RL and contextual bandits because we know the UCB algorithm can achieve a regret bound that solely depends on $O\left(\sum \frac{1}{\gap{s}{a}}\right)$. 
As will be clear in Section~\ref{sec:lb}, the transition operator in MDP allows us to construct harder problem instance that cannot be constructed in contextual bandits.

\subsection{Main Challenges and Technique Overview}
\label{sec:challenge}

\begin{figure}[!ht]
	\centering
    \includegraphics[width=0.8\textwidth]{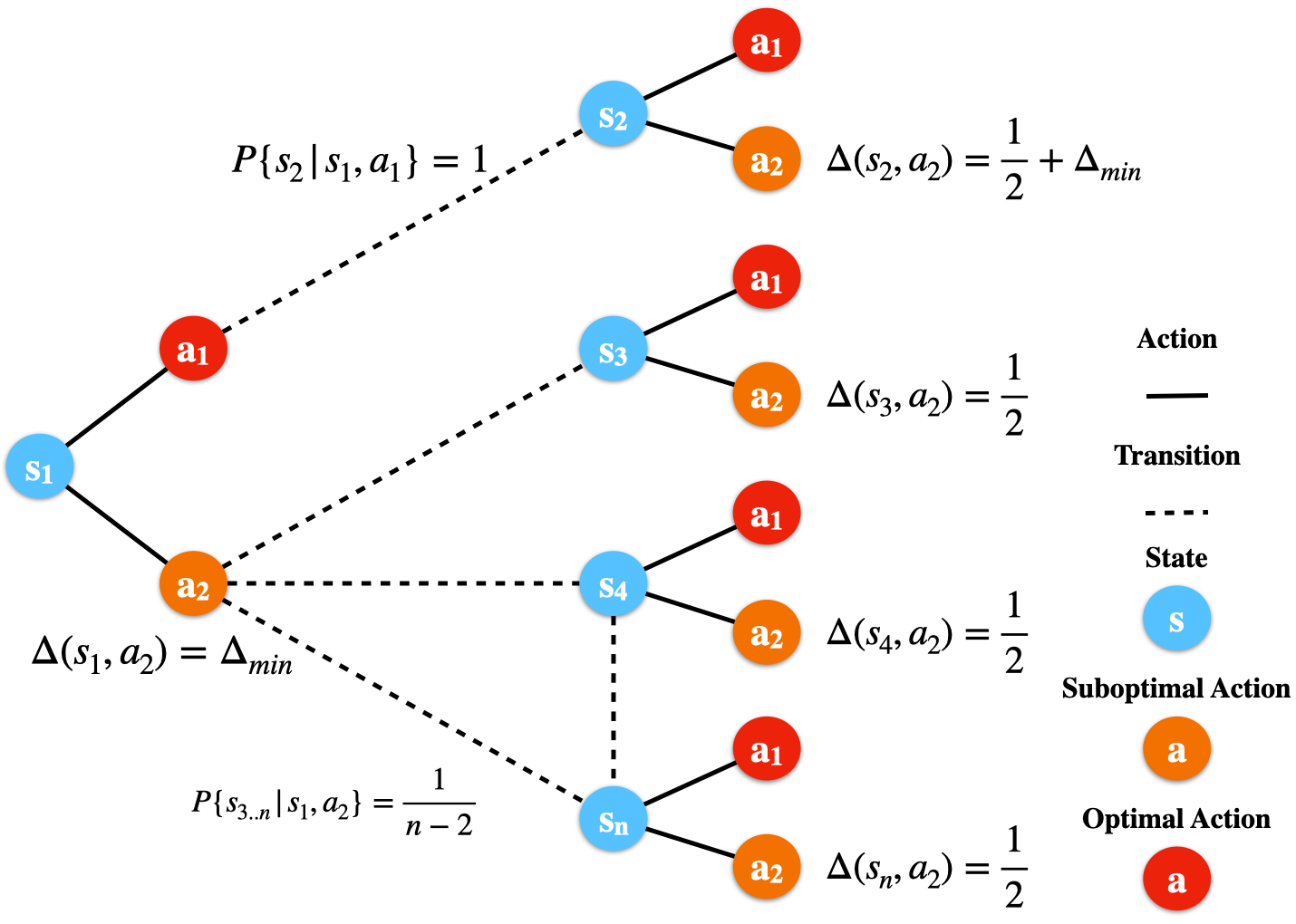}
	\caption{A simplified $H=2$ hard instance that make all UCB algorithms incur an $\Omega\left(S/\gapmin\right)$ regret. There are $n$ states and two actions, $a_1$ and $a_2$. $a_1$ is always the optimal action. The reward distribution satisfies $\reward(s_2,a_1) = 1/2+\gapmin + \eta$ and $r(s_i,a_1)=1/2+\eta$ for $i=3,\ldots,n$, where $\eta$ is a zero-mean noise with variance $1$. All other state-action pairs have reward $0$.  $\gap{s_1}{a_2} = \gapmin$, $\gap{s_2}{a_2}=1/2+\gapmin$, and $\gap{s_i}{a_2} = 1/2$ for $i=3,\ldots,n$. $s_1$ is the starting state. $(s_1,a_1)$ will transit to $s_2$ deterministically, and $(s_1,a_2)$ will transit to $s_3,\ldots,s_n$, each with probability $\frac{1}{n-2}$.
}
	\label{fig:SJ_instance_intro}
\end{figure}

\subsubsection{The Hard Example in \cite{simchowitz2019non}} 

We first review the intuition about the hard instance in \cite{simchowitz2019non} (cf. Figure~\ref{fig:SJ_instance_intro}) that makes all UCB algorithms suffer an $\Omega\left(S/\gapmin\right)$ regret.
In the hard instance, there are two actions $a_1$ and $a_2$.
At the starting state $s_1$, $a_1$ is the optimal action with $Q^*(s_1,a_1) = 1/2 + \gapmin$, and $a_2$ is the suboptimal action with $Q^*(s_1,a_2)=1/2$.
In order to find the optimal action $a_1$, the agent needs to estimate $Q^*(s_1,a_2)$ within $\gapmin$ error.
To estimate $Q^*(s_1,a_2) = \frac{1}{n-2}\sum_{i=3}^{n}V^*(s_i)$,
all UCB algorithms rely on \emph{optimistic bootstrap}, i.e., they maintain exploration bonuses, $b(s_3),\ldots,b(s_n)$  for $V^*(s_3),\ldots,V^*(s_n)$,  and the over-estimation of $Q^*(s_1,a_2)$ will have a term $\approx \frac{1}{n-2} \sum_{i=3}^{n}b(s_i)$.
To make this term smaller than $\gapmin$, these algorithms essentially need $b(s_i) = O \left(\gapmin\right)$ 
for \emph{all} $i=3,\ldots,n$, which leads to an $\Omega\left(S/\gapmin\right)$ regret.

\subsubsection{Gap-dependent Upper Bound}

\paragraph{Monte Carlo V.S. Optimistic Bootstrap}
To bypass the $\Omega\left(S/\gapmin\right)$ lower bound in \cite{simchowitz2019non}'s example, our main technique is to \emph{collapse paths}.
Notice we only need to pay $\widetilde{O}(S)$ regret to find the optimal action $a_1$, for $s_3,\ldots,s_n$ because these states have an $\Omega\left(1\right)$ gap.
Now, to estimate $Q^*(s_1,a_2)$, instead of using the optimistic bootstrap in UCB algorithms, we use \emph{Monte Carlo}.
Since we know the optimal policy $\pi^*$ for $\{s_3,\ldots,s_n\}$, just by  executing $\pi^*$, we can directly estimate $Q^*(s_1,a_2)$.
This estimator only needs to pay $\widetilde{O}\left(1/\gapmin\right)$ regret in order to estimate $Q^*(s_1,a_2)$ within error $O(\gapmin)$, in sharp contrast to UCB  algorithms which need to pay $\Omega\left(S/\gapmin\right)$ regret.
The intuition is that Monte Carlo is estimating the mean of \emph{one} random variable, whereas the optimistic bootstrap needs to estimate the means of \emph{$S$} random variables.

This example shows the power of Monte Carlo for the scenario when subsequent states' optimal actions are known.
This observation natural leads to a new estimator for $Q^*$, which \emph{adaptively} combines optimistic bootstrap and Monte Carlo.
We note that algorithmically, combining bootstrap and Monte Carlo is not new (see e.g.,  \cite{sutton1998reinforcement}).
However, to our knowledge, our algorithm is the first that adaptively combines optimistic bootstrap and Monte Carlo, and enjoys provable theoretical gains.
See Section~\ref{sec:alg} for more details.
Besides this  new estimator, we also need some additional technical ingredients to obtain the improved gap-dependent bound.

\paragraph{Maximal Confidence Interval} 
The estimator requires to identify a set of states whose the best action has been found. 
Identifying the best action inevitably involves the \emph{action elimination} operation.
Unfortunately, the existing UCB algorithms have no such operation.
Our algorithm maintains an upper and a lower bound of each $Q^*$ value.
Importantly, at each episode, we select the action that 1) has not been eliminated and 2) has the largest uncertainty (measured by the difference between the upper and the lower confidence bound).
Our action selection scheme is crucial because it has been shown in \cite{lykouris2019corruption} that the naive action selection scheme, randomly sampling one action from the remaining action set, suffers an exponential regret. 
Also note that selecting the action based on UCB may not work, because UCB never chooses actions which do not have the highest optimistic value, which makes their confidence bounds not tight. However, the action elimination operation requires accurate estimation for all un-eliminated actions.

\subsubsection{Gap-dependent Lower Bound}
The construction of our hard instance for $\Omega\left(SA/\gapmin\right)$ lower bound relies on simple intuition: \emph{A tabular MDP can simulate a multi-armed bandits with $SA$ arms,} in which  $\sum_{\gaph{h}{s}{a}>0}\frac{1}{\gaph{h}{s}{a}}$ is much smaller than $\frac{SA}{\gapmin}$.
Therefore, we can construct a multi-armed bandits example that has $\Omega\left(SA\right)$ arms with gap $\gapmin$, while the MDP has only a few ($O(\log S)$) state-action pairs gap $\gapmin$.
This is in sharp contrast to contextual bandits with $S$ states and $A$ actions, which \emph{cannot} simulate a multi-armed bandits with $SA$ arms.

\section{Related Work}

There is a long line of results about worst-case regret bound of  tabular RL.
An incomplete list includes ~\cite{kearns2002near,brafman2002r,strehl2006pac,jaksch2010near,dann2015sample,azar2017minimax,dann2017unifying,jin2018q,dann2019policy,zhang2020almost,yang2020q,wang2020long,zhang2020reinforcement}. 
Algorithmically, we use the same step size for the model-free update in \cite{jin2018q}. 
The state-of-the-art result by \citet{zhang2020reinforcement} showed one can achieve $\widetilde{O}\left(\sqrt{SAK}+S^2A\right)$ regret bound.\footnote{
	Their result holds for the setting where 
	the reward is non-negative and the total reward is bounded by $1$. This is a fair scaling when comparing with contextual bandits. See more expositions in \cite{jiang2018open}.
	}
Contextual bandits can be viewed as an episodic RL problem with $H=1$, and its worst-case regret bound is $\Theta(\sqrt{SAK})$.
Till today, it is still unclear whether there is a separation between RL and contextual bandits for the worst-case regret bound.

When there is a strictly positive sub-optimality gap, it is possible to achieve $\log K$-type regret bounds.
This type of results have been widely studied in the bandit literature.
In RL, earlier work obtained asymptotic logarithmic regret bounds~\cite{auer2007logarithmic,tewari2008optimistic}.
Recently, non-asymptotic logarithmic regret bounds were obtained~\citep{jaksch2010near,simchowitz2019non,yang2020q}.
Specially, \citet{jaksch2010near} developed a model-based algorithm, and their bound depends with the policy gap instead of the action gap studied in this paper.
\citet{simchowitz2019non} extended the model-based algorithm by \cite{zanette2019tighter} and obtained logarithmic regret bounds.
\citet{yang2020q} showed the  model-free algorithm, the optimistic $Q$-learning algorithm by \citet{jin2018q} enjoyed a logarithmic regret.
More recently, logarithmic regret bounds are obtained in linear function approximation settings~\citep{he2020logarithmic}.
Lastly, \citet{ok2018exploration} derived problem-specific $\log K$-type lower bounds for both structured and unstructured MDPs.

\section{Preliminary}\label{sec:pre}
We denote a tabular episodic Markov Decision Process (MDP) by $M=(\mc{S},\mc{A},H,\mc{R},\mc{P},\mu)$
 where $\mc{S}$ is the state space with $\abs{\states}=S$, $\mc{A}$ is the action space with $\abs{\actions}=A$, $H$ is the episode length (horizon), $\mc{R}:\mc{S}\times\mc{A}\to[0,1]$ is the reward distribution, $\mc{P}:\mc{S}\times\mc{A}\to \Delta(\mc{S})$ is the transition probability distribution, and $\mu\in \Delta(\mc{S})$ is the initial state probability distribution. 
To streamline our analysis, we make a standard assumption for episodic settings that $\states$ can be partitioned into disjoint sets $\states_h$, $h \in [H]$, such that $\mc{P}\left(\cdot\mid s,a\right)$ is supported on $\states_{h+1}$ whenever $s \in \states_h$.\footnote{One can always augment the state space of the original episodic MDP to satisfy this assumption. The augmented state space is $H$ times larger than the original one. 
	}
A deterministic policy,
 $\pi$, assigns an action for each state, and can be seen as a function $\pi:\states\to\mc{A}$. 
 Playing a policy $\pi$ on a MDP $M$ will induce a trajectory: $s_1,a_1,r_1,s_2,a_2,r_2,...s_h,a_h,r_h$, where $s_1\sim \mu$, $a_1=\pi(s_1)$, $r_1\sim \mc{R}(s_1,a_1)$, $s_2\sim \mc{P}(s_1,a_1)$, etc.

For a given policy $\pi$, at each level $h=1,\ldots,H$, we define the the value function $V_h^\pi:\states_h\to \mc{R}$ and the $Q$-function $Q_h^\pi:\states_h\times\mc{A}\to \mc{R}$ as
\begin{align*}
V^{\pi}_h(x)=\mb{E}^{\pi}\left[\sum_{h'=h}^H r(s_{h'},a_{h'})\big| s_h=x\right],\qquad  Q^{\pi}_h(x,a)=\mb{E}^{\pi}\left[\sum_{h'=h}^H r(s_{h'},a_{h'})\big| s_h=x,a_h=a\right]
\end{align*}
For simplicity, we define $V^{\pi}_0=\mb{E}\left[V^{\pi}_1(s_1)\right]$ to denote the value of a policy $\pi$.
We use $\pi^*$ to denote the optimal policy and $a^*(x)$ to denote the optimal action at state $x$ (arbitrarily break tie if there are multiple optimal actions). This implies $\pi^*(x)=a^*(x)$.
We write $V^*_h(x)$ in short for $V^{\pi^*}_h(x)$ and $Q^*_h(x,a)$ for $Q^{\pi^*}_h(x,a)$.

The agent interacts with the environment for $K$ episodes. On each episode $k\in[1,K]$, the agent uses a policy $\pi_k$.
We use cumulative simple regret $\textup{Regret}_K=\sum_{k=1}^K V^*_0-V^{\pi_k}_0$ to measure the performance.

We focus on gap-dependent regret.
For $(x,a,h) \in \states\times \actions \times [H]$, the gap is defined as: $\gaph{h}{x}{a}=V^*_{h}(x)-Q^*_{h}(x,a)$. 
Note the optimal action at a state has the gap equals to zero.
Following \cite{simchowitz2019non}, we let $\zopthx{h}{x}$ denote the set of optimal actions for a state $x$ on level $h \in [H]$, i.e.,  $\zopthx{h}{x} = \{a \in \actions: \gaph{h}{x}{a}=0\}$.
We use $\zopt = \left\{(h,x,a)\mid \gaph{h}{x}{a}=0\right\}$ to denote the set of optimal state-action pairs.
We also define the local minimal gap: $\gaphminx{h}{x}=\min_{a\neq a^*(x)}\gaph{h}{x}{a}$ which should be $0$ if $|\zopthx{h}{x}|>1$,
and global minimal gap: $\gapmin=\min\limits_{(x,a,h) \in \states \times \actions \times [H]\land \gaph{h}{x}{a} >0}\gaph{h}{x}{a}$.
Our paper gives a fine-grained characterization of gap-dependent bounds, which rely on the following set
\begin{align}
\zmul=\left\{(h,x,a)\big|\gaph{h}{x}{a}=0\land|\zopthx{h}{x}|>1\right\}.
\end{align}
This is the set of state-action pairs whose states have multiple optimal actions.
Note we always have $\abs{\zmul} \le \abs{\zopt}$.
Furthermore, under the following assumption, $\abs{\zmul} = 0$ whereas $\abs{\zopt}=S$.
In the analysis below, we may drop the subscript $h$ for some quantities because, by our assumption, any chosen state $x$ implicitly contains the information of the level $h$ which it belongs to.
\begin{ass}[Unique Optimal Action]\label{asm:unique_best_arm}
	We say a MDP satisfies the unique optimal action assumption if for any state $x\in S$, it has a unique optimal action, i.e. $\forall x\in S, h\in [H], \left|\zopthx{h}{x}\right|=1$.
\end{ass}

\newcommand{\qtarget}{Q_{\textup{}}}
\begin{algorithm2e}[!ht]
	\caption{\algonamefull~(\algoname)}\label{alg:algorithm}
	\LinesNumbered
	\SetAlgoNoLine
	\DontPrintSemicolon
	\KwIn{$\delta \in (0,1/3)$ (failure probability), $H,\mc{A},\mc{S},K \ge 1$} 
	$\forall x, a$, $\ubqh{0}{h}{x}{a}\gets H$, $\lbqh{0}{h}{x}{a}\gets 0$, $\gk{1}=\varnothing$, and $\ak{1}{x}\gets \mc{A}$. $\forall k$$, \ubv{k}{\bot} = \lbv{k}{\bot} = 0$ \;
	$\forall k$, let $\alpha_k=\frac{H+1}{H+k}$.\; 
	\For{ $k=1,2,...$}{
		\textbf{Collect data:}\;
		Rollout from a random initial state $s_{k,1}\sim \mu$ using the policy $\pi_k$, defined as
		\vspace{-0.3cm}
		\begin{align}
		\pi_k(x) \triangleq \left\{\begin{array}{ll}\argmax_{a\in \ak{k}{x}} \ubqh{k-1}{h}{x}{a}-\lbqh{k-1}{h}{x}{a} & \textup{ if }  |A_k(x)| > 1\\
		\textup{the element in $A_k(x)$} & \textup{ if }  |A_k(x)| = 1
		\end{array}\right.,\nonumber
		\end{align}
		and obtain an episode $s_{k,1}, \dots, s_{k,H}$.\label{code:maximal_confidence_interval}\;
		\textbf{Update $Q$-function:}\;
		\For{$h=H,H-1,...1$}{
			\If{$s_{k,h}\not\in G_k$}{ 
				Let $n = n_{k}(s_{k,h},a_{k,h})$ be the number of visits to $(s_{k,h},a_{k,h})$.\;
				Suppose $s_{k,h'}$ be the first state in the episode after $s_{k,h}$ that is not in $G_k$. (If such a state does not exist, let $h'=H+1$ and $s_{k,h'} = \bot$.)\;
				Set the bonus $b_n =  c\sqrt{H^3\log(SAK/\delta)/n}$ for some universal constant $c$.\;
				Let $\hqb{k}{s_{k,h}}{a_{k,h}} =  \sum_{h\le i< h'}r_{k,i}$\;
				\mbox{$\ubq{k}{s_{k,h}}{a_{k,h}}=\min\left\{H, (1-\alpha_{n})\ubq{k-1}{s_{k,h}}{a_{k,h}}+\alpha_{n}\left(\hqb{k}{s_{k,h}}{a_{k,h}}+\ubv{k-1}{s_{k,h'}}+b_n\right)\right\}$\label{code:ubqh_update}}\;
				\mbox{$\lbq{k}{s_{k,h}}{a_{k,h}}= \max\left\{0, (1-\alpha_{n})\lbq{k-1}{s_{k,h}}{a_{k,h}}+\alpha_{n}\left(\hqb{k}{s_{k,h}}{a_{k,h}}+\lbv{k-1}{s_{k,h'}}-b_n\right)\right\}$\label{code:lbqh_update}}\;
				$\ubv{k}{s_{k,h}}=\max_{a\in \ak{k}{s_{k,h}}}\ubq{k}{s_{k,h}}{a}$\label{code:updateubv}\;
				$\lbv{k}{s_{k,h}}=\max_{a\in \ak{k}{s_{k,h}}}\lbq{k}{s_{k,h}}{a}$\label{code:updatelbv}\;
			}
		}
		\For{$(x,a) \in \{\mc{S}\times \ak{k}{x}\} \backslash \{(\skh, \akh)\}_{h=1}^{H}$}{
			$\ubq{k}{x}{a} = \ubq{k-1}{x}{a}, \lbq{k}{x}{a} = \lbq{k-1}{x}{a}, \ubv{k}{x}=\ubv{k-1}{x}$, and $\lbv{k}{x}=\lbv{k-1}{x}$.\;
		}
		\textbf{Eliminate the sub-optimal actions:}\;
		$\forall x\in \ourS$, set $A_{k+1}(x) = \{a\in A_k(x): \ubq{k}{x}{a} \ge \lbv{k}{x}\}$\label{code:elimination_condition}\;
		Let $G_{k+1} = \{x\in \ourS: |A_{k+1}(x)| = 1\}.$\label{code:update_g}\;
	}
\end{algorithm2e}

\section{Algorithm and Analysis Sketch}\label{sec:alg}
We will first describe the main algorithm and then in subsection~\ref{sec:proof_main} we will provide a proof sketch.  
Pseudocodes are listed in Algorithm~\ref{alg:algorithm}.

Our algorithm maintains valid upper bounds and lower bounds of the $Q$-function at every episode $k$, denoted by $\ubq{k}{x}{a}$ and $\lbq{k}{x}{a}$, respectively. Given these bounds, for every state $x$, it maintains a set of candidate optimal actions, denoted by $\ak{k}{x}$, by eliminating every action $a$ whose $Q$-value upper bound is lower than another action's lower bound. Once only a single action survives for a state $x$, that is, $|\ak{k}{x}|=1$, we know that we have found the optimal action, and we call the state $x$ a \textit{``decided'' state}. Otherwise we call $x$ an \textit{``undecided'' state}. 
Let $\gk{k}=\{x\big |\ |\ak{k}{x}|=1\}$ represent the subset of all decided states. 

The key idea of the paper is to construct the upper and lower bounds of the $Q$-function by spliting the $Q$-function into two parts: the rewards from the decided states and those from the undecided states. This allows us to estimate the former part with more accurate sampling. Concretely, given the decided states $\gk{k}$ at episode $k$, we have
\begin{align}
Q^*(x,a)=\qb{k}{x}{a}+\qr{k}{x}{a}\label{line:qstar_decompose}
\end{align}
where $\qb{k}{x}{a}$ is the expected reward received by playing $\pi^*$ after $(x,a)$ until arriving a state that does not belong to $\gk{k}$, and the $\qr{k}{x}{a}$ is the expected reward of the rest of the steps after seeing any state that does not belong to $\gk{k}$. Formally, suppose the state $x$ is on the level $h$, and after taking the optimal actions, we arrive at the sequence of states $x_{h+1}, \dots, x_H$. Let $h'$ be the smallest index (that is at least $h+1$) such that $x_{h'}\not \in G_k$, then we can decompose $Q^*(x,a)$ into the sum of the following two quantities:
\begin{align}
\qb{k}{x}{a}  \triangleq \mb{E}\Big[\sum_{\ell=h}^{h'-1}r(x_{\ell},a^*(x_{\ell}))\Big]~~\text{and}~~
\qr{k}{x}{a}  \triangleq \mb{E}\left[V^*(x_{h'})\right]. \label{eqn:11} %
\end{align}
For  $\qb{k}{x}{a}$, the summation of observed empirical rewards can serve as an unbiased estimate, because we have taken the optimal action for states $a^*(x_\ell)$ for $h+1 \le \ell < h'$:
\begin{align}
\hqb{k}{s_{k,h}}{a_{k,h}} = \sum_{\ell=h}^{h'-1}r_{k,\ell}
\end{align}
On the other hand, for $\qr{k}{x}{a}$, we can use the exiting $V$-values estimates on $x_{h'}$ to perform the bootstrapping, similarly to standard Bellman updates~\citep{szepesvari2010algorithms}. We will add a reward bonus term to counterbalance the stochasticity introduced in the estimation~\eqref{eqn:11}, so that finally we maintain valid upper and lower bounds in the sense that $\lbq{k}{x}{a}\le Q^*(x,a)\le \ubq{k}{x}{a}$ and $\lbv{k}{x}\le V^*(x)\le \ubv{k}{x}$. Concretely, the target value of the new $Q$-value is 
\begin{align}
\hqb{k}{x}{a}+\ubv{k-1}{x_{h'}}+\textup{bonus}\label{line:estimate}
\end{align}
To make the update stable, following the standard framework proposed in \cite{jin2018q}, we linearly interpolate the target value in~\eqref{line:estimate} and the existing $Q$ value with a learning rate $\alpha_k$. We can derive a lower bound for the $Q$-values similarly and the resulting upper and lower bounds for the $V$-values. This part of the algorithm is described between Line~\ref{code:ubqh_update} and Line~\ref{code:lbqh_update} in Algorithm~\ref{alg:algorithm}. 

Our Bellman update is very reminiscent to the multi-step rollout Bellman updates that have been used successfully in practice~\citep{sutton1998reinforcement}. However, in contrast to them, the Monte-Carlo rollout horizon in our algorithms adaptively depends on whether we have found the optimal actions in the following states. 

As alluded before, given the upper and lower bounds, we can potentially eliminate more sub-optimal actions and build a small set of viable actions $A_{k+1}(x)$ (See Line~\ref{code:elimination_condition} and Line~\ref{code:update_g} in Algorithm~\ref{alg:algorithm}.)

The obtained new upper bound $\ubq{k}{x}{a}$ and lower bound $\lbq{k}{x}{a}$ will induce a new policy $\pi_{k+1}$. Instead of using UCB, we take actions that maximizes the length of the confidence interval
\begin{align}
\pi_{k+1}(x) = \argmax_{a\in \ak{k+1}{x}} \ubqh{k}{h}{x}{a}-\lbqh{k}{h}{x}{a}
\end{align}
We rollout with policy $\pi_{k+1}$ in the next episode as in Line~\ref{code:maximal_confidence_interval} of Alg.~\ref{alg:algorithm}. 

\paragraph{Comparison to previous algorithms.}
Compared with the existing model-free algorithms with regret guarantees, such as the one in \cite{jin2018q}, %
there are two main differences between our algorithm and UCB-based algorithms:

\begin{itemize}
\item[(1)] In estimating $\overline{Q}$ and $\overline{V}$, through a $Q$-function decomposition, we give tighter estimation for the first part $\qb{k}{x}{a}$, instead of directly summing up next level states' upper bounds.
\item[(2)] Instead of choosing  actions with the largest upper bounds as in UCB, we choose the actions with largest confidence interval lengths and eliminate an action when it can be excluded from the potential optimal actions.
\end{itemize}

\paragraph{Technical nuances.} For notational convenience, we use $\bot$ to denote a special termination state, and consider it to be on the level of $H+1$. The value functions for this state is set to be zero in all cases, and we consider $\bot \not\in G_k$ for all $k$. 
We also note that once $x\in G_k$ for some $k$, it will remain there for forever, and we will no longer update $Q_k(x,a)$ anymore---we will always take the unique optimal action as soon as $x\in G_k$ and the $Q$-values are no longer relevant anymore.

\subsection{Proof Outline}
\label{sec:proof_main}

In this section, we listed several key components of the proof for the case where each state has a unique optimal action (c.f. Corollary~\ref{cor:unique}).
Technical proofs are deferred to Appendix~\ref{sec:missing_proof_main}.
The proof for the general case (c.f. Theorem~\ref{thm:regSJgeneral}) is deferred to Appendix~\ref{sec:proof_general}.
We first introduce some of the key notations and concepts in the analysis of the algorithm. 

\paragraph{Key notations and concepts.}  We use $n_k(x,a)$ to denote the number of visits to the state-action pair $(x,a)$ before and including episode $k$.  For any $t$, let $k[t](x,a)$ be the episode number of the $t$-th visit to the state-action pair $(x,a)$. We will only use this notation when the algorithm indeed visits $(x,a)$ for at least $t$ times. When the pair $(x,a)$  is clear in some context, we oftentimes omit $(x,a)$ and just write $k[t]$ for simplicity.  
For any state $x$, let $x_{k[t]}'$ denotes the first undecided state (according to $G_{k[t]}$) in the episode $k[t]$ after the state $x$.

\subsubsection{Backgrounds on learning rates and concentration properties.} Our general framework follows the recent analysis of $Q$-learning algorithms~\citep{jin2018q}, in terms of the choice of learning rates. We first define the quantity $\alpha_t^i$ that shows up in the analysis frequently when we expand the update rules for the $Q$-functions:
\begin{align}
\alpha^0_n & \triangleq \prod_{j=1}^n\left(1-\alpha_j\right) \textup{ ~~and ~~} \alpha^t_n \triangleq \alpha_t\prod_{j=t+1}^n\left(1-\alpha_j\right)\label{line:alpha_def}%
\end{align}
Intuitively, $\alpha_n^t$ effectively measures how the update of $Q(x,a)$ at the $n$-visit to $(x,a)$ depends on the past $Q$-value at the $t$-th visit of $(x,a)$, as characterized in the following statement: %
\begin{align}
\ubq{k[n]}{x}{a} & = \alpha^0_nH+\sum_{t=1}^{n}\alpha^t_n\left(\hqb{k[t]}{x}{a}+\ubv{k[t]-1}{x'_{k[t]}}+\bch{t}{x}{a}\right)\label{line:ubq_ineq}
\end{align}

The statement~\eqref{line:ubq_ineq} can be obtained by a straightforward recursive expansion of the update rule in Line~\ref{code:ubqh_update} of Alg.~\ref{alg:algorithm}.  (For a complete proof, see Section~\ref{sec:missing_proof_main}.)

Similarly to the standard analysis of $Q$-learning, we will control $\ubv{k[t]-1}{x'_{k[t]}}$ on the RHS of equation~\eqref{line:ubq_ineq} by recursion and $\hqb{k[t]}{x}{a}$ on the RHS by concentration inequality. The former part requires innovations but the latter part follows standard concentration inequality. 

\begin{restatethis}{lem}{eventb}{Concentration}\label{lm:event_b}
With probability at least $1-\delta$ over the randomness of the environment, for all episodes $k\in [K]$, the following concentration inequalities hold: 
\begin{align}
\forall x\in \ourS\setminus\gk{k}, a\in\ak{k}{x}, ~~& \left|\sum_{t=1}^{n_k}\alpha^t_{n_k}\left(\hqb{k[t]}{x}{a}-\qb{k[t]}{x}{a}\right)\right|\le \frac12\bch{n_k}{x}{a}\label{eq:eventb} \\
\forall x\in \ourS\setminus\gk{k}, a\in\ak{k}{x}, ~~& \left|\sum_{t=1}^{n_k}\alpha^t_{n_k}\left(V^*(x'_{k[t]})-\qr{k[t]}{x}{a}\right)\right|\le \frac12\bch{n_k}{x}{a}\label{eq:eventr}
\end{align}
\end{restatethis}
For the sake of simplicity, in the following analysis, we use $\eventB_j$ to denote the union of those inequalities' validity in~(4.8)
over episode $k=1,\dots, j$, $\eventR_j$ the union of those inequalities' validity in~(4.9) over episode $k=1,\dots, j$, and let $\event_j \triangleq \eventB_j \cap \eventR_j$. We also write $\eventB$ for $\eventB_{K}$, $\eventR$ for $\eventR_{K}$, and $\event$ for $\eventB \cap \eventR$.

\subsubsection{Key steps in the proofs}
Now we list several key lemmas in the proof.
The following lemma shows our confidence intervals about the $Q$-function and the $V$-function are valid.

\begin{restatethis}{lem}{validestimation}{Valid Confidence Interval}\label{lm:valid_estimation}
\torestate{validestimation}{
For all $(x,a)\in \ourS \times \mc{A}$ and at any episode $k$, when event $\event_{k}$ happens, the upper and lower confidence bounds in Algorithm~\ref{alg:algorithm} are valid: 
\begin{align}
\ubv{k}{x}\ge V^*(x)\ge \lbv{k}{x}\quad\textup{and}\quad \ubq{k}{x}{a}\ge Q^*(x,a)\ge \lbq{k}{x}{a}
\end{align}
}
\end{restatethis}

With this lemma, we can easily show that we never eliminate the optimal action.
\begin{restatethis}{prop}{validityofG}{Action Elimination}\label{lm: validity of G}
\torestate{validityofG}{ 
When the event $\event_{k-1}$ happens, for all $x\in \ourS$, all the optimal actions for $x$ are in the set $\ak{k}{x}$. As a direct consequence, $\forall x\in \gk{k}$, the set $\ak{k}{x}$ contains the unique optimal action for $x$.
}
\end{restatethis}
Now we turn to bounding the regret.
The following lemma shows the length of the confidence interval is an upper bound of the regret, conditioned on the event $\event_k$.

\begin{restatethis}{lem}{decomposeregret}{Bounding Regret By the Confidence Interval Length}\label{lm:decompose_regret}
\torestate{decomposeregret}{
For any episode $k$, conditioning on the event $\event_{k-1}$,  the regret can be bounded by the confidence interval length of those undecided states that are not in $G_k$: 
\begin{align*}
\left(V^*_0-V^{\pi_k}_0\right)\bigg|\event_{k-1},\mc{F}_{k-1}\le 2\expect\left[\sum_{h=1}^H\left(\ulbq{k-1}{s_{k,h}}{a_{k,h}}\right)\cdot\id{s_{k,h}\notin G_k}\bigg|\event_{k-1},\mc{F}_{k-1}\right]
\end{align*}
}
\end{restatethis}
We note this lemma is different from the decomposition for the UCB-based algorithms, which admit the property 
	that $\ubq{k}{s_{k,h}}{a_{k,h}}\ge\ubq{k}{s_{k,h}}{a^*(s_{k,h})}\ge V^*(s_{k,h})$ and then use the estimation error, $\ubq{k}{s_{k,h}}{a_{k,h}}-Q^*(s_{k,h},a_{k,h})$, as an upper bound for regret. 
However, since we do not always choose the action that maximizes the estimated $Q$-value, we need a new upper bound on the regret.
 Fortunately, for our analysis, with the action elimination mechanism, the regret can be simply bounded by the maximal confidence interval.

With Lemma~\ref{lm:decompose_regret} at hand, it suffices to bound the above the confidence intervals.
Our analysis relies on the clip function \[\clip{x}{y} \triangleq x\cdot\id{x\ge y}.\]
 We obtain the  following recursion. 

\begin{restatethis}{prop}{clipsubopt}{Confidence Interval Length Recursion}\label{prop:clip_subopt}
\torestate{clipsubopt}{
Suppose $\event_{k-1}$ happens. Suppose $(x,a) = (\skh, \akh)$ is a state-action pair visited in the $k$-th episode where $x\not\in G_k$ is an undecided state. 
Let $\qpast$ be a shorthand for 
\begin{align}
\qpast \triangleq \sum_{t=1}^{n_{k-1}}\alpha^t_{n_{k-1}}\left(\ulbq{k[t]-1}{x'_{k[t]}}{a'_{k[t]}}\right)\cdot\id{x'_{k[t]}\notin \gk{k[t]}}
\end{align}
We have the following recursion bound for the confidence interval length of an undecided state: 
\begin{align}
&\left(\ulbq{k-1}{x}{a}\right)\cdot\id{x\notin\gk{k}}\\
\le &\alpha^0_{n_{k-1}}H+ (1+\frac{1}{H})\qpast + \left\{\begin{array}{ll}
\clip{4\bch{n_{k-1}}{x}{a}}{\frac{\gap{x}{a}}{4H}} & ~~ \textup{if } a \not= a^*(x)\\
\clip{4\bch{n_{k-1}}{x}{a}}{\frac{\gapminx{x}}{4H}} & ~~ \textup{if } a = a^*(x)
\end{array}\right.
\end{align} 
}
\end{restatethis}
The clipping operation was proposed in \cite{simchowitz2019non}  to derive gap-dependent logarithmic regret bounds.
We use one particular property about this opreation (c.f. Claim~\ref{clm:clipping_summation}). 
The main difference between their use and ours is that for the second case, $a=a^*(x)$, in Proposition~\ref{prop:clip_subopt}, \citet{simchowitz2019non} introduce half-clipping trick and results in $O(\frac{1}{\gapmin})$ regret for each state, while we use action elimination mechanism to give $O(\frac{1}{\gapminx{x}})$ regret.
This is crucial to avoid the $S/\gapmin$ dependency.

To proceed, we can solve the recursion by induction, and obtain the follwing lemma.

\begin{restatethis}{lem}{iteratedclipping}{Solving Recursion}\label{lm:iterated_clipping}
\torestate{iteratedclipping}{
We define clipped reward function as
\begin{align}
\cbch{n_k}{s_{k,h}}{a_{k,h}}\triangleq\clip{4\bch{n_k}{s_{k,h}}{a_{k,h}}}{\max\left(\frac{\gap{s_{k,h}}{a_{k,h}}}{4H},\frac{\gapminx{s_{k,h}}}{4H}\right)}
\end{align}
When event $\event$ happens, we can upper bound the regret by a linear combination of clipped reward:
\begin{align}
&\sum_{k=1}^K\sum_{h=1}^H\left(\ulbq{k-1}{s_{k,h}}{a_{k,h}}\right)\cdot\id{s_{k,h}\notin \gk{k}}
\\\le &e^2H^2SA+e^2H\sum_{k=1}^K\sum_{h=1}^H \cbch{n_{k-1}}{s_{k,h}}{a_{k,h}}\cdot\id{s_{k,h}\notin \gk{k}}
\end{align}
}
\end{restatethis}

To finish the proof of Corollary~\ref{cor:unique}, we use the upper bound of the failure probability and  property of the clipping trick (Claim~\ref{clm:clipping_summation}) .
See details in Section~\ref{sec:missing_proof_main}.

\section{Lower Bound}
\label{sec:lb}
Here we present our formal lower bound.
\begin{thm}\label{thm:lb_thm}
	Given integers $S$, $A$, $H \ge \log_2(S)$, $0 < \gapmin < \min(\frac18,1/H)$, and $S\le \abs{\zmul}\le \frac{SA}{2}$, there exists an MDP which has $S$ states, $A$ actions, $H$ levels, and satisfies $\sum_{\substack{(s,a, h) \in \states \times \actions \times [H]:\\ \gaph{h}{s}{a} > 0}}\frac{1}{\gaph{h}{x}{a}} = c_1\log(S)/\gapmin$ for some absolute constants $c_1>0$.
	On this MDP, there exists an absolute constant $c_2>0$ such that as $K \rightarrow \infty$, any consistent algorithm suffers a regret at least $c_2\abs{\zmul}\log K/\gapmin$.
\end{thm}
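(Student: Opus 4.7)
The plan is to construct a hard MDP $M^*$ whose $\abs{\zmul}$ multi-optimal pairs are all ``bandit-arm-like'' and reachable only by traversing a single $\gapmin$-suboptimal ``exploration'' bottleneck, and then to apply a Lai--Robbins / Kaufmann--Korda--Munos (KKM) change-of-measure argument once per pair in $\zmul$. Each pair's visits in $M^*$ will be lower-bounded by $\Omega(\log K/\gapmin^2)$, and each such visit will cost $\Theta(\gapmin)$ at the bottleneck, yielding total regret $\Omega(\abs{\zmul}\log K/\gapmin)$.

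\emph{Construction of $M^*$.} I would build a tree rooted at $s_0$ of total depth $O(\log S)\le H$, terminating in $\Theta(\abs{\zmul}/A)$ ``bandit leaves'' each of which has $A$ actions sharing the same mean reward $\mu$, so all $A$ actions are co-optimal at that leaf and together they contribute $\Theta(\abs{\zmul})$ pairs to $\zmul$. At $s_0$ I would place an ``exploit'' action that deterministically yields value $V^*=\mu+\gapmin$ and an ``explore'' action that enters the tree; the remaining $A-2$ actions at $s_0$ are constant-gap dead-ends. A short chain of $O(\log S)$ states between $s_0$ and the tree root carries $O(\log S)$ strictly sub-optimal ``side-step'' actions, each of gap exactly $\gapmin$, which are not in $\zmul$ and make the total sum of reciprocal gaps equal $c_1\log S/\gapmin$. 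The internal nodes of the exploration tree itself use $A$-ary branching with all child-directions co-optimal, introducing no additional positive gaps. Direct bookkeeping then verifies all the parameter requirements of the theorem under $H\ge\log_2 S$, $\gapmin<1/H$, and $S\le\abs{\zmul}\le SA/2$.

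\emph{Alternative MDPs and change of measure.} For each leaf-arm pair $(x,a)\in\zmul$, define $M^*_{(x,a)}$ to be $M^*$ with the mean reward of $(x,a)$ shifted upward by $2\gapmin$; then in $M^*_{(x,a)}$ the pair $(x,a)$ becomes uniquely optimal globally, with value $\mu+2\gapmin > V^*$. In $M^*_{(x,a)}$ the unique best actions become: explore at $s_0$, every advance along the chain, the correct branch at every internal node, and $a$ at $x$; every other state-action has gap at least $\gapmin$ in $M^*_{(x,a)}$. Consistency on $M^*_{(x,a)}$ forces each such deviation to have expected visit count $o(K^\alpha)/\gapmin$ for every $\alpha>0$, whence $\expect_{M^*_{(x,a)}}[N_K(x,a)] = K(1-o(1))$. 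The KKM inequality applied to $M^*$ versus $M^*_{(x,a)}$ (which differ only at $(x,a)$ with per-sample KL of order $\gapmin^2$) with the event $\{N_K(x,a)\ge K/2\}$ then yields $\expect_{M^*}[N_K(x,a)]\ge(1-o(1))\log K/\gapmin^2$. Summing over the $\Theta(\abs{\zmul})$ leaf-arm pairs and invoking the construction---every episode that visits any leaf arm must traverse the $\gapmin$-suboptimal ``explore'' action at $s_0$---gives $\expect_{M^*}[\textup{Regret}_K] \ge \gapmin\cdot\Theta(\abs{\zmul})\log K/\gapmin^2 = c_2\abs{\zmul}\log K/\gapmin$, as desired.

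\emph{Main obstacle.} The delicate part is the design of $M^*$: one must pack $\Theta(\abs{\zmul})$ distinct bandit-arm pairs into only $S$ states, route all of them through a \emph{shared} $\gapmin$-suboptimal navigation action (so that each of the $\abs{\zmul}$ visit lower bounds translates into an independent $\Theta(\gapmin)$ of regret rather than being amortized), and simultaneously keep $\sum 1/\gap=\Theta(\log S/\gapmin)$ rather than growing linearly with $\abs{\zmul}$. A secondary subtlety is that the KKM applications for different $(x,a),(x',a')\in\zmul$ must remain effectively independent; this follows because each alternative perturbs only a single reward distribution and only a single leaf-arm visit count appears in each KKM bound, so linearity of expectation makes the visit-count lower bounds additive even though the $M^*_{(x,a)}$ share the same $M^*$ on the left-hand side.
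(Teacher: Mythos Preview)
Your proposal is correct and follows essentially the same strategy as the paper: embed $\Theta(|\zmul|)$ co-optimal leaf-arm pairs in a tree that is reachable only through a $\gapmin$-costly choice, apply a Bretagnolle--Huber change-of-measure argument once per leaf-arm pair (perturbing that arm's reward upward by $2\gapmin$ to make it globally uniquely optimal), deduce $\expect_{M^*}[N_K(x,a)]\gtrsim\log K/\gapmin^2$ for each, and sum by linearity of expectation to get $\textup{Regret}\ge\gapmin\sum_{(x,a)}\expect[N_K(x,a)]\gtrsim|\zmul|\log K/\gapmin$. Your ``independence'' worry in the last paragraph is a non-issue for exactly the reason you state: each alternative $M^*_{(x,a)}$ perturbs a single reward, the divergence decomposition isolates a single visit count, and the individual lower bounds add because each explore-episode contributes to exactly one $N_K(x,a)$.

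The paper's construction is slightly leaner than yours. Rather than an explicit exploit arm at the root plus a separate chain carrying the $O(\log S)$ gap-$\gapmin$ side-steps, it takes a plain binary tree in which one distinguished leaf $x_1$ has a single arm of reward $\tfrac12+\gamma$ and every other leaf has all $A$ arms at reward $\tfrac12$. The $O(\log S)$ positive-gap pairs then arise automatically as the ``wrong-direction'' branches along the root-to-$x_1$ path, and the $\gapmin$ regret of reaching any other leaf is paid at the first wrong branch rather than at a dedicated bottleneck. This dispenses with your chain, dead-ends, and separate exploit arm, but the change-of-measure step, the event $\{N_K(x,a)\ge K/2\}$, and the final summation are identical to what you wrote.
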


\begin{figure}[!ht]
	\centering
	\includegraphics[width=0.7\textwidth]{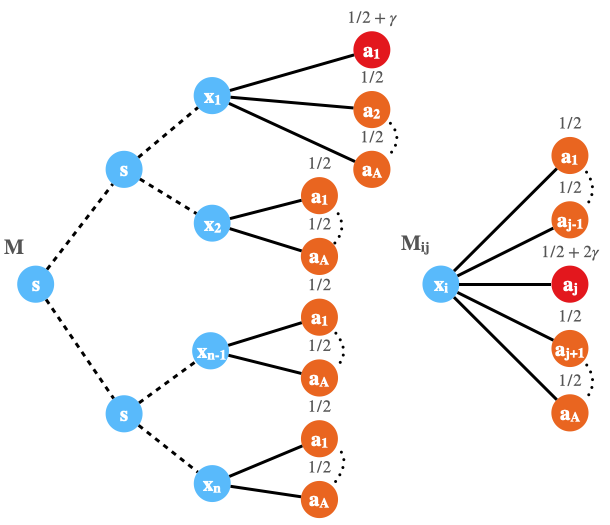}
	\caption{		
		A family of hard instances that make all consistent algorithms incur an $\Omega\left(SA\log K/\gapmin\right)$ regret. 
		$M$ is the base MDP and $M_{ij}$ is another MDP  we wish to distinguish from $M$.  
	}
	\label{fig:instance2}
\end{figure}

In this section we describe our main ideas for proving Theorem~\ref{thm:lb_thm}.
A graphical illustration of our hard-instance construction is shown in Figure~\ref{fig:instance2}.
At a high level, we use an MDP to simulate a multi-armed bandit problem with $SA$ arms, and then we choose the canonical hard instance in multi-armed bandit to prove the lower bound.

We construct an MDP $M$ whose states form a complete binary tree, where there are $|S|=2n-1$ states and $n$ leaves. We label their last horizon's states with $\{x_1,...x_n\}$. 
All states previous to the last horizon have two actions $\{a_1,a_2\}$, while states on the last level have $|\mc{A}|=A$ actions $\{a_1,...a_{A}\}$. All transitions are deterministic and follow the binary tree structure. The actions taken in states on the last level drawn on the rightest column is the only place where non-zero rewards are given to the agent. 
Following the standard proof  of the lower bound for multi-armed bandits, we assume all rewards follow a Bernoulli distribution, whose mean is labeled on the top of the action. 
Only one state $x_1$ has one $\frac12+\gamma$ reward action. All $x_1$'s other actions and all other states' actions have reward mean $\frac12$.
Equivalently, this is a multi-armed bandit problem with $SA$ arms and the only non-zero reward is on $(x_1,a_1)$.

Now we construct a set of MDPs $\{M_{ij}\}_{i \in \{2,\ldots,\abs{S}\}, j \in \{1,\ldots,A\}}$.
 For MDP $M_{i,j}$, all transitions and rewards are the same with $M$, except that one action $a_j$ of state $x_i$ has reward $\frac12+2\gamma$. We note that these MDPs only have four possible rewards $\{0, \frac12, \frac12+\gamma, \frac12+2\gamma\}$.
 
 To prove the lower bound, we follow the standard technique in multi-armed bandits (e.g., \cite{lattimore2020bandit}) to show $\Omega\left(SA\log K/\gapmin\right)$ regret.
 See Section~\ref{sec:lb_proof} for details.

\section{Conclusion}
In this paper, we design a new algorithm enjoying an improved gap-dependent regret bound for episodic finite-horizon MDPs. This new regret bound is significant tighter than previous bounds when all states have a unique optimal action. The two innovations involved are the use of adaptive multi-step bootstrap in $Q$-value estimation and choosing the action that has the largest confidence interval. We also prove a new regret lower bound showing that achieving the tighter regret bound for general MDPs is impossible.

\section*{Acknowledgment}
TM acknowledges support of Google Faculty Award, NSF IIS 2045685, Lam Research, and JD.com

\bibliography{simonduref.bib}

\newpage
\appendix

\section{Missing Proofs in Section \ref{sec:proof_main}}\label{sec:missing_proof_main}
In this section we prove the following result under Assumption~\ref{asm:unique_best_arm}. Note Theorem~\ref{thm:regret} implies Corollary~\ref{cor:unique}.
\begin{restatethis}{thm}{mainregret}{Main Result Under Assumption~\ref{asm:unique_best_arm}}\label{thm:regret}
	\torestate{mainregret}{
		Under Assumption~\ref{asm:unique_best_arm}, for fixed $K$, with probability at least $1-\delta$, we have the following regret upper bound
		\begin{align}
		\textup{Regret}_K\le O\left(H^2\ourSize A+\sum_{x\in \ourS}\left(\sum_{a\neq a^*(x)}\frac{H^5}{\gap{x}{a}}\right)\log\left(\frac{\ourSize AK}{\delta}\right)\right)
		\end{align}
	}
\end{restatethis}

Here we also briefly summarize why we have $H^5$ the bound.
First, since we use $\alpha_i$ as the learning rate in Line~\ref{code:ubqh_update} of Algorithm~\ref{alg:algorithm}, we need to set bonus as large as $\Omega(\sqrt{\frac{H^3}{n}})$. 
Second, we use the clipping trick to clip the bonus at $\Omega(\frac{\Delta}{H})$ (Lemma~\ref{prop:clip_subopt}), so the summation over bonus until it is clipped will yield $O(\frac{H^4}{\Delta})$. Finally, our use of confidence interval to decompose regret (Lemma~\ref{lm:decompose_regret}) and solving the recursion (Lemma~\ref{lm:iterated_clipping}) incurs another factor of $H$.
We leave it as a future direction is to improve the dependency on $H$.

\begin{restatethis}{prop}{alphaproperty}{Lemma 4.1 of \cite{jin2018q}}\label{prop:alpha_property}
\torestate{alphaproperty}{
	Recall that $\alpha_t=\frac{H+1}{H+t}$. Define $\alpha^0_t\triangleq \prod_{j=1}^t\left(1-\alpha_j\right)$, and $\alpha^i_t\triangleq \alpha_i\prod_{j=i+1}^t\left(1-\alpha_j\right)$. \sloppy
	Then, we have the following properties:
	\begin{itemize}
		\item[(1)] 
		$\begin{cases}
		\sum_{i=1}^t\alpha^i_t=1 \textup{ and } \alpha^0_t=0 &\forall t\ge 1\\
		\sum_{i=1}^t\alpha^i_t=0 \textup{ and } \alpha^0_t=1 &\textup{for } t=0
		\end{cases}$
		\item[(2)] $\forall t\ge 1, \frac{1}{\sqrt{t}}\le\sum_{i=1}^t\frac{\alpha^i_t}{\sqrt{i}}\le\frac{2}{\sqrt{t}}$
		\item[(3)] $\forall i\ge 1, \sum_{t=i}^{\infty}\alpha^i_t=1+\frac{1}{H}$
		\item[(4)] $\forall i\ge 1, \sum_{i=1}^t(\alpha^i_t)^2\le \frac{2H}{t}$
	\end{itemize}
}
\end{restatethis}

\subsection{Proof of lemma \ref{lm:valid_estimation}}

Here we prove an extended version of Lemma~\ref{lm:valid_estimation}, where we additionally prove an upper bound of $\ubv{k}{x}-\lbq{k}{x}{a}$. This inequality is useful for bounding regret in later analysis.
\begin{lem}[Extended version of Lemma~\ref{lm:valid_estimation}]\label{lm:extended_valid_estimation}
	For all $(x,a)\in \ourS\setminus\gk{k} \times \ak{k}{x}$ and at any episode $k$, when event $\event_{k}$ happens, the upper and lower confidence bounds are valid: 
	\begin{align}
	& \ubv{k}{x}\ge V^*(x)\ge \lbv{k}{x} \label{eqn:12} \\
	& \ubq{k}{x}{a}\ge Q^*(x,a)\ge \lbq{k}{x}{a}\label{eqn:13}
	\end{align}
Moreover, for any $k,x$ and $a\in \ak{k+1}{x}$, we have that 
\begin{align}
\ubv{k}{x} - \lbq{k}{x}{a} \le 2 \max_{a'\in \ak{k+1}{x}}\left(\ubq{k}{x}{a'} - \lbq{k}{x}{a'}\right)\label{eqn:14}
\end{align}
\end{lem}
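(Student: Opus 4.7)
The plan is to establish the validity of the upper and lower confidence bounds \eqref{eqn:12}--\eqref{eqn:13} by induction on the episode index $k$, and then deduce \eqref{eqn:14} from the action-elimination rule as an algebraic consequence. The central algebraic tool will be the recursive expansion \eqref{line:ubq_ineq}, which writes $\ubq{k}{x}{a}$ (at the $n = n_k(x,a)$-th visit of $(x,a)$) as an $\alpha_n^t$-convex combination of the Monte-Carlo partial returns $\hqb{k[t]}{x}{a}$, the bootstrapped values $\ubv{k[t]-1}{x'_{k[t]}}$, and the bonuses $\bch{t}{x}{a}$, plus the residual weight $\alpha_n^0$ on the initial estimate $H$; an analogous identity holds for $\lbq{k}{x}{a}$.

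For the inductive step at episode $k$ and a visited pair $(x,a) = (\skh, \akh)$ with $x \notin \gk{k}$, I would subtract the decomposition $Q^*(x,a) = \qb{k[t]}{x}{a} + \qr{k[t]}{x}{a}$ from \eqref{line:ubq_ineq} term by term. The $\alpha_n^0 (H - Q^*(x,a))$ piece is nonnegative since $Q^* \le H$. The reward contribution $\sum_{t=1}^n \alpha_n^t(\hqb{k[t]}{x}{a} - \qb{k[t]}{x}{a})$ is at least $-\tfrac12 \bch{n}{x}{a}$ by \eqref{eq:eventb}. For the bootstrapped contribution $\sum_{t=1}^n \alpha_n^t(\ubv{k[t]-1}{x'_{k[t]}} - \qr{k[t]}{x}{a})$, I would insert $V^*(x'_{k[t]})$: the inductive hypothesis at episode $k[t] - 1 < k$ gives $\ubv{k[t]-1}{x'_{k[t]}} \ge V^*(x'_{k[t]})$, and \eqref{eq:eventr} bounds the remaining difference by $-\tfrac12 \bch{n}{x}{a}$. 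Property~(2) of Proposition~\ref{prop:alpha_property}, combined with the square-root shape of the bonus, yields $\sum_{t=1}^n \alpha_n^t \bch{t}{x}{a} \ge \bch{n}{x}{a}$, which absorbs both $-\tfrac12\bch{n}{x}{a}$ slacks, so the un-clipped target is $\ge Q^*(x,a)$; the outer $\min\{H,\cdot\}$ preserves this since $Q^*\le H$. The lower side $\lbq{k}{x}{a}\le Q^*(x,a)$ is symmetric. Finally, $\ubv{k}{x} \ge V^*(x) \ge \lbv{k}{x}$ follows from $V^*(x)=Q^*(x,a^*(x))$ together with Proposition~\ref{lm: validity of G}, which guarantees $a^*(x)\in\ak{k}{x}$, so $\ubv{k}{x} \ge \ubq{k}{x}{a^*(x)} \ge V^*(x)$ and likewise for the lower bound.

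For \eqref{eqn:14}, set $a^{\star}=\argmax_{a'\in \ak{k}{x}}\ubq{k}{x}{a'}$, so that $\ubv{k}{x}=\ubq{k}{x}{a^{\star}}$. Part~\eqref{eqn:12} gives $\ubq{k}{x}{a^{\star}}=\ubv{k}{x}\ge V^*(x)\ge\lbv{k}{x}$, so by the elimination rule on Line~\ref{code:elimination_condition} the action $a^{\star}$ survives, i.e. $a^{\star}\in \ak{k+1}{x}$. Any $a\in \ak{k+1}{x}$ satisfies $\ubq{k}{x}{a}\ge\lbv{k}{x}\ge\lbq{k}{x}{a^{\star}}$, which rearranges to $\lbq{k}{x}{a^{\star}}-\lbq{k}{x}{a}\le\ubq{k}{x}{a}-\lbq{k}{x}{a}$. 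Splitting
\[
\ubv{k}{x}-\lbq{k}{x}{a}=\bigl(\ubq{k}{x}{a^{\star}}-\lbq{k}{x}{a^{\star}}\bigr)+\bigl(\lbq{k}{x}{a^{\star}}-\lbq{k}{x}{a}\bigr)
\]
and bounding each summand by $\max_{a'\in \ak{k+1}{x}}\bigl(\ubq{k}{x}{a'}-\lbq{k}{x}{a'}\bigr)$ — valid because both $a^{\star}$ and $a$ lie in $\ak{k+1}{x}$ — delivers \eqref{eqn:14}.

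The main obstacle is the bookkeeping for the induction on \eqref{eqn:12}--\eqref{eqn:13}: the recursion \eqref{line:ubq_ineq} aggregates quantities across many past episodes, each using a different set $\gk{k[t]}$ in the decomposition $Q^*=\qb{k[t]}{x}{a}+\qr{k[t]}{x}{a}$, and each bootstrapped term $\ubv{k[t]-1}{x'_{k[t]}}$ sits at a state at a strictly later level than $x$. Lemma~\ref{lm:event_b} is engineered for exactly this time-varying decomposition, which is precisely why an episode-only induction suffices (no nested level-induction is required): when we invoke the hypothesis on $\ubv{k[t]-1}{x'_{k[t]}}$, the episode index $k[t]-1$ is strictly less than $k$. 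The corner case $x\in\gk{k}$ is painless because once $x$ enters $G$ its bounds are frozen at values that were valid at the time of freezing and therefore remain valid for the time-invariant $V^*(x)$.
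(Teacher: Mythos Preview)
Your proposal is correct and follows essentially the same route as the paper: expand the update via \eqref{line:ubq_ineq}, control the three resulting pieces with $\eventB_k$, the inductive hypothesis on $\ubv{k[t]-1}{x'_{k[t]}}$, and $\eventR_k$, then obtain \eqref{eqn:14} by splitting $\ubv{k}{x}-\lbq{k}{x}{a}$ through the maximizer $a^{\star}$ and using the survival condition $\ubq{k}{x}{a}\ge\lbv{k}{x}$ for $a\in\ak{k+1}{x}$. The only structural difference is that the paper presents the argument as a double induction on episode $k$ and horizon $h$ (in reverse order), whereas you correctly observe that the horizon layer is redundant because the bootstrap term $\ubv{k[t]-1}{\cdot}$ already has episode index $k[t]-1<k$; the paper's horizon hypothesis is in fact never invoked in its calculation.
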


\begin{proof}[Proof of lemma \ref{lm:extended_valid_estimation}] 

We first use induction to prove this inequality~\eqref{eqn:12}. %
The induction proceeds in two dimensions, episode $k$ and horizon $h$. We first check that the initialization in Algorithm~\ref{alg:algorithm} is valid. Then we assume that the induction is valid from episode $0$ to $k-1$. In the following, we prove that the argument is correct for a fixed episode $k$.

We consider the other induction dimension, horizon $h$, in a reversed order.
For the base case $h=H+1$, where there is only one state $\perp$. By the initialization in Algorithm~\ref{alg:algorithm}, we have $\ubv{k}{\perp}=\lbv{k}{\perp}=0$.
Therefore the induction argument is valid for the base case. Now, we assume that for state $x\in \{\mc{S}_i\}_{i>h}$, the induction argument is valid. Then, we want to prove that the argument is valid for $x\in \mc{S}_h$.

For the sake of simplicity, we denote $n_k$ to be an abbreviation for $n_k(x_k,a_k)$, the number of times we have visited the state action pair $(x,a)$, $k[t]$ to be the episode of the $t$-th arrival to the specified state $x$, $x'_{k[t]}$ to be the first not in $\gk{k[t]}$ state arrived starting at $x$ on episode $k[t]$. 

According to the update rule in Algorithm~\ref{alg:algorithm}, line~\eqref{code:ubqh_update}, we have the following expression for $\ubq{k}{x}{a}$:

\begin{align}
\ubq{k}{x}{a}=\min\left(H,\alpha^0_{n_k}H+\sum_{t=1}^{n_k}\alpha^t_{n_k}\left(\hqb{k[t]}{x}{a}+\ubv{k[t]-1}{x'_{k[t]}}+\bch{t}{x}{a}\right)\right)\label{line:valid_estimation_1}
\end{align}

Because by definition $Q^*(x,a)\le H$, the argument is true when the minimum of the two value equals $H$. Now we assume that the minimum takes the second term. %

Then, we have the following:

\begin{align}
&~~~\ubq{k}{x}{a}-Q^*(x,a)\\
&=\alpha^0_{n_k}H+\sum_{t=1}^{n_k}\alpha^t_{n_k}\left(\hqb{k[t]}{x}{a}+\ubv{k[t]-1}{x'_{k[t]}}+\bch{t}{x}{a}\right)-Q^*(x,a)\\
&=\alpha^0_{n_k}H+\sum_{t=1}^{n_k}\alpha^t_{n_k}\left(\hqb{k[t]}{x}{a}+\ubv{k[t]-1}{x'_{k[t]}}+\bch{t}{x}{a}-Q^*(x,a)\right)-\id{n_k=0}Q^*(x,a)\tag{By property (1) in Proposition~\ref{prop:alpha_property}}\\
&=\alpha^0_{n_k}H+\sum_{t=1}^{n_k}\alpha^t_{n_k}\left(\hqb{k[t]}{x}{a}+\ubv{k[t]-1}{x'_{k[t]}}-Q^*(x,a)\right)+\sum_{t=1}^{n_k}\alpha^t_{n_k}\bch{t}{x}{a}-\id{n_k=0}Q^*(x,a)\\
&\ge \alpha^0_{n_k}H+\sum_{t=1}^{n_k}\alpha^t_{n_k}\left(\hqb{k[t]}{x}{a}+\ubv{k[t]-1}{x'_{k[t]}}-Q^*(x,a)\right)+\id{n_k>0}\bch{n_k}{x}{a}-\id{n_k=0}Q^*(x,a)\label{line:valid_estimation_2}
\tag{By the definition of $\bch{k}{x}{a}$ and property (2) in Proposition~\ref{prop:alpha_property}}
\end{align}

When $n_k=0$, the first term above equals $H$, the last term equals $Q^*(x,a)$, and the second and third term becomes zero, so the RHS is greater than $0$. Now, we consider the case when $n_k>0$, then only the second and third term are non-zero. Next, we want to show that the second term is larger than $-\bch{k}{x}{a}$.

\begin{align}
&~~~~\sum_{t=1}^{n_k}\alpha^t_{n_k}\left(\hqb{k[t]}{x}{a}+\ubv{k[t]-1}{x'_{k[t]}}-Q^*(x,a)\right)\\
&=\sum_{t=1}^{n_k}\alpha^t_{n_k}\Big(\hqb{k[t]}{x}{a}+\ubv{k[t]-1}{x'_{k[t]}}-V^*(x'_{k[t]})+V^*(x'_{k[t]})-\qb{k[t]}{x}{a}-\qr{k[t]}{x}{a}\Big)\tag{By the decomposition of $Q^*(x,a)$ defined in Line~\eqref{line:qstar_decompose}}\\
&=\sum_{t=1}^{n_k}\alpha^t_{n_k}\Big(\hqb{k[t]}{x}{a}-\qb{k[t]}{x}{a}+\ubv{k[t]-1}{x'_{k[t]}}-V^*(x'_{k[t]})+V^*(x'_{k[t]})-\qr{k[t]}{x}{a}\Big)\\ 
&=\underbrace{\sum_{t=1}^{n_k}\alpha^t_{n_k}\left(\hqb{k[t]}{x}{a}-\qb{k[t]}{x}{a}\right)}_{\text{Bounded by event~$\eventB_{k}$}}+\underbrace{\sum_{t=1}^{n_k}\alpha^t_{n_k}\left(\ubv{k[t]-1}{x'_{k[t]}}-V^{*}(x'_{k[t]})\right)}_{\text{Bounded by the induction argument}}\nonumber\\
&\quad +\underbrace{\sum_{t=1}^{n_k}\alpha^t_{n_k}\left(V^{*}(x'_{k[t]})-\qr{k[t]}{x}{a}\right)}_{\text{Bounded by event~$\eventR_{k}$}}\\
&\ge -\frac12\bch{k}{x}{a}-0-\frac12\bch{k}{x}{a}\\
&\ge -\bch{k}{x}{a}
\end{align}

A similar argument shows $\lbq{k}{x}{a}\le Q^*(x,a)$ as well. 

Next we prove that for $x\in S_h$, $\ubv{k}{x}\ge V^*(x)$ and $\lbv{k}{x}\le V^*(x)$. By the updating rule in %
\eqref{code:updateubv}, and \eqref{code:updatelbv} of Algorithm~\ref{alg:algorithm}, we have
\begin{align}
&\ubv{k}{x}=\max_{a\in \ak{k}{x}}\ubq{k}{x}{a}\ge\max_{a\in\ak{k}{x}}Q^*(x,a)=V^*(x,a)\\
&\lbv{k}{x}=\max_{a\in \ak{k}{x}}\lbq{k}{x}{a}\le\max_{a\in\ak{k}{x}}Q^*(x,a)=V^*(x,a)
\end{align}
Finally we will show equation~\eqref{eqn:14}.  %
Recall that by Line~\ref{code:updatelbv} of Alg.~\ref{alg:algorithm}, we have that 
$
\ubv{k}{x} = \max_{a\in A_{k}(x)} \ubq{k}{x}{a} 
$. Assume that the max is attend at $a^*$. Then, for all $a\in\ak{k+1}{x}$:
\begin{align}
\ubv{k}{x} - \lbq{k}{x}{a} & = \ubq{k}{x}{a^*} - \lbq{k}{x}{a} \nonumber\\
& = \left(\ubq{k}{x}{a^*} - \lbq{k}{x}{a^*}\right) + \left(\lbq{k}{x}{a^*}  - \ubq{k}{x}{a} \right) +  \left(\ubq{k}{x}{a}  - \lbq{k}{x}{a} \right) \nonumber\\
& \le 2 \max_{a'\in \ak{k+1}{x}}\left(\ubq{k}{x}{a'} - \lbq{k}{x}{a'}\right) \label{eqn:2}
\end{align}
where in the last inequality we use the fact that $\lbq{k}{x}{a^*}\le\lbv{k}{x}\le\ubq{k}{x}{a}$ for all $a\in\ak{k+1}{x}$. 
\end{proof}

\subsection{Proof of proposition~\ref{lm: validity of G}}

\restate{validityofG}

\begin{proof}[Proof of proposition \ref{lm: validity of G}]

	Suppose an action $a$ was excluded from the set $A_k(x)$ at episode $k$ by Alg.~\ref{alg:algorithm}. 
    It implies that $\ubq{k}{x}{a}<\lbv{k}{x}$. By Lemma~\ref{lm:valid_estimation}, we have $\ubq{k}{x}{a}<\lbv{k}{x}\le V^*(x)$, which indicates $a$ is not an optimal action. 
    Therefore, all actions that are eliminated are suboptimal and the set $A_k(x)$ always contain all the optimal actions. 

Finally because any state $x$ in $G_k$ satisfies $|A_k(x)| =1$, the set $A_k(x)$ must contain the unique optimal action.

\end{proof}

\subsection{Proof of lemma \ref{lm:decompose_regret}}

\restate{decomposeregret}

\begin{proof}[Proof of lemma \ref{lm:decompose_regret}]
Conditioning on the event $\event_{k-1}$ and the filtration $\mc{F}_{k-1}$ of all the random variables generated until the beginning of epoch $k$, we can bound the $V$-value of policy $\pi_k$.

\begin{align}
(V^*_0-V^{\pi_k}_0)\big|\event_{k-1},\mc{F}_{k-1}&=\expect\left[\sum_{h=1}^H V^*(s_{k,h})-Q^*(s_{k,h},a_{k,h})\bigg|\event_{k-1},\mc{F}_{k-1}\right] \nonumber\\
&=\expect\left[\sum_{h=1}^H\left(V^*(s_{k,h})-Q^*(s_{k,h},a_{k,h})\right)\cdot\id{s_{k,h}\notin \gk{k}}\bigg|\event_{k-1},\mc{F}_{k-1}\right]\label{line:decompose_regret_1} \tag{By Lemma~\ref{lm: validity of G}, for all $s_{k,h}\in G_k$, $\pi_k(s_{k,h})=a_{s,h}\in \zoptx{x}$ and $V^*(s_{k,h})=Q^*(s_{k,h},a_{k,h})$}\\
&\le \expect\left[\sum_{h=1}^H \left(\ubv{k-1}{s_{k,h}}-\lbq{k-1}{s_{k,h}}{a_{k,h}}\right)\cdot\id{s_{k,h}\notin \gk{k}}\bigg|\event_{k-1},\mc{F}_{k-1}\right] \label{eqn:1}
\end{align}
where the last inequality follows from by Lemma~\ref{lm:valid_estimation}

Invoking equation~\eqref{eqn:14} of Lemma~\ref{lm:valid_estimation}  with episode $k$, $x=s_{h,k}$, and $a=a_{k,h}$, noting that in Line~\ref{code:maximal_confidence_interval} of Alg.~\ref{alg:algorithm}, we chose $a_{k,h}= \argmax_{a'\in A_k(x)}\left(\ubq{k-1}{s_{k,h}}{a'} - \lbq{k-1}{s_{k,h}}{a'}\right)$, \sloppy we have 
\begin{align}
\ubv{k-1}{\skh} - \lbq{k-1}{\skh}{\akh}  & \le 2 \left(\ubq{k-1}{\skh}{\akh} - \lbq{k-1}{\skh}{\akh}\right) \label{eqn:3}
\end{align}
Plugging in~\eqref{eqn:3} into equation~\eqref{eqn:1} completes the proof.

\end{proof}

\subsection{Proofs of Proposition \ref{prop:clip_subopt}}

\begin{lem}\label{lm:decompose_q} 
	Suppose $(x,a)$ is visited at the episode $k$. Conditioning on the event $\event_{k}$, letting $n_k=n_k(x,a)$ and $k[t] = k[t](x,a)$,  we have, 
	\begin{align}
	\ulbq{k}{x}{a}\le \alpha^0_{n_k}H+4\id{{n_k}>0}\cdot\bch{k}{x}{a}+\sum_{t=1}^{n_k}\alpha^t_{n_k}\left(\ulbv{k[t]-1}{x'_{k[t]}}\right)
	\end{align}
\end{lem}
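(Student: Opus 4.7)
The plan is to expand $\ubq{k}{x}{a}$ and $\lbq{k}{x}{a}$ separately by unrolling the recursive updates (Lines~\ref{code:ubqh_update}--\ref{code:lbqh_update} of Alg.~\ref{alg:algorithm}), subtract the two expansions so that the empirical-reward terms $\hqb{k[t]}{x}{a}$ cancel, and then collapse the resulting weighted sum of bonuses into a single bonus using property~(2) of Proposition~\ref{prop:alpha_property}.

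First I would establish the closed-form upper bound \eqref{line:ubq_ineq} by induction on the visit count $n_k = n_k(x,a)$, repeatedly using the identities $\alpha^t_n = (1-\alpha_n)\alpha^t_{n-1}$ for $t<n$, $\alpha^n_n=\alpha_n$, and $\alpha^0_n = (1-\alpha_n)\alpha^0_{n-1}$; dropping the $\min\{H,\cdot\}$ cap only weakens the right-hand side, so the inequality direction is preserved. A symmetric induction applied to the lower update yields
\[
\lbq{k}{x}{a} \ge \sum_{t=1}^{n_k}\alpha^t_{n_k}\left(\hqb{k[t]}{x}{a}+\lbv{k[t]-1}{x'_{k[t]}}-\bch{t}{x}{a}\right),
\]
where dropping the $\max\{0,\cdot\}$ floor only lowers the right-hand side, and the base term $\alpha^0_{n_k}\cdot 0$ vanishes.

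Subtracting the two expansions, the $\hqb{k[t]}{x}{a}$ terms cancel exactly and I obtain
\[
\ulbq{k}{x}{a} \le \alpha^0_{n_k}H + \sum_{t=1}^{n_k}\alpha^t_{n_k}\left(\ulbv{k[t]-1}{x'_{k[t]}}\right) + 2\sum_{t=1}^{n_k}\alpha^t_{n_k}\bch{t}{x}{a}.
\]
Since $\bch{t}{x}{a} = c\sqrt{H^3\log(SAK/\delta)/t}$ is a constant multiple of $1/\sqrt{t}$, property~(2) of Proposition~\ref{prop:alpha_property} implies $\sum_{t=1}^{n_k}\alpha^t_{n_k}\bch{t}{x}{a} \le 2\,\bch{n_k}{x}{a}$ whenever $n_k\ge 1$; the sum is empty when $n_k=0$, which accounts for the indicator $\id{n_k>0}$. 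Doubling gives the claimed bonus $4\,\id{n_k>0}\,\bch{n_k}{x}{a}$, finishing the proof.

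I do not expect a serious obstacle: the argument is a mechanical recursive unrolling followed by one weighted-sum estimate on $1/\sqrt{t}$. The only point requiring mild care is verifying that the $\min$ and $\max$ clippings preserve the correct directions of the two inductions. It is worth noting that the concentration event $\event_k$ in the hypothesis is not actually used: the empirical rewards $\hqb{k[t]}{x}{a}$ cancel when we form the difference $\ubq-\lbq$, so the concentration bounds of Lemma~\ref{lm:event_b} are only needed elsewhere, where one compares these quantities with $Q^*$.
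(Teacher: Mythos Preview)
Your proposal is correct and follows essentially the same route as the paper's proof: unroll the upper and lower $Q$-updates via Lemma~\ref{lem:recuirse_simple} (handling the $\min/\max$ clippings in the right direction, as you note), subtract so the $\hqb{k[t]}{x}{a}$ terms cancel, and collapse $\sum_t \alpha^t_{n_k}\bch{t}{x}{a}\le 2\,\bch{n_k}{x}{a}$ using property~(2) of Proposition~\ref{prop:alpha_property}. Your remark that the event $\event_k$ is never actually invoked in this argument is also accurate.
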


\begin{lem}\label{lem:recuirse_simple}
	Suppose sequences $u_n$ and $w_n$ satisfy
	\begin{align}
	u_n = (1-\alpha_n)u_{n-1} + \alpha_n w_n\label{line:ineq_u}
	\end{align}
	for all $n\ge 1$ and $u_0=H$. Then, 
	\begin{align}
	u_n = \alpha^0_n H  + \sum_{1\le t\le n}  \alpha_n^t w_t
	\end{align}	
\end{lem}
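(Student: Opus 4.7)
The plan is to prove the identity by straightforward induction on $n$, since the recursion in~\eqref{line:ineq_u} is linear and the weights $\alpha_n^t$ are, by design, the coefficients that arise from recursively unfolding it. No clever trick is needed; the content of the lemma is just to verify that the definitions of $\alpha_n^0$ and $\alpha_n^t$ in~\eqref{line:alpha_def} are compatible with the one-step update.

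For the base case $n=0$, the recursion is not applied, and we directly have $u_0 = H$. On the right-hand side, $\alpha_0^0 = \prod_{j=1}^0 (1-\alpha_j) = 1$ by the empty-product convention, and the sum $\sum_{1\le t\le 0} \alpha_0^t w_t$ is empty, so the formula reduces to $H$, matching $u_0$. (This is consistent with part (1) of Proposition~\ref{prop:alpha_property} at $t=0$.)

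For the inductive step, assume the formula holds at $n-1$, i.e., $u_{n-1} = \alpha_{n-1}^0 H + \sum_{t=1}^{n-1} \alpha_{n-1}^t w_t$. Plug this into~\eqref{line:ineq_u} to get
\begin{align*}
u_n = (1-\alpha_n)\,\alpha_{n-1}^0 H + \sum_{t=1}^{n-1} (1-\alpha_n)\,\alpha_{n-1}^t w_t + \alpha_n w_n.
\end{align*}
The key step is then to apply two identities that follow immediately from the definition~\eqref{line:alpha_def}: namely, $(1-\alpha_n)\,\alpha_{n-1}^0 = \prod_{j=1}^n (1-\alpha_j) = \alpha_n^0$, and for $1\le t\le n-1$, $(1-\alpha_n)\,\alpha_{n-1}^t = \alpha_t \prod_{j=t+1}^n (1-\alpha_j) = \alpha_n^t$. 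Finally, the last term $\alpha_n w_n$ is exactly $\alpha_n^n w_n$ (again by the empty-product convention in the definition of $\alpha_n^n$). Combining these three identifications yields $u_n = \alpha_n^0 H + \sum_{t=1}^n \alpha_n^t w_t$, closing the induction.

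There is essentially no obstacle here: the whole argument is just bookkeeping that confirms the definitions in~\eqref{line:alpha_def} telescope correctly under the recursion. The only thing to be slightly careful about is handling the boundary terms (the empty product giving $\alpha_0^0 = 1$, and the term $t=n$ in the sum corresponding to $\alpha_n^n = \alpha_n$), so that the induction cleanly absorbs the new $\alpha_n w_n$ term into the summation.
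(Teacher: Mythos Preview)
Your proof is correct and takes essentially the same approach as the paper: both argue by induction that unfolding the recursion~\eqref{line:ineq_u} produces exactly the coefficients $\alpha_n^t$ from~\eqref{line:alpha_def}. The paper organizes the induction by tracking the coefficient of a fixed $w_t$ as $n$ grows, while you induct on the full formula at step $n$, but the underlying identity $(1-\alpha_n)\alpha_{n-1}^t=\alpha_n^t$ is the same in both.
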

\begin{proof}[Proof of lemma \ref{lem:recuirse_simple}]
	We recursively expand $u_{n}$ according to line~\eqref{line:ineq_u} and get a linear combination of $w_t$. We can use induction to prove that the coefficient of $w_t$ in the expansion of $u_n$ is $\alpha^t_n$. For the base case $t=n$, we have $\alpha_n=\alpha^n_n$. Supposing the coefficient of $w_t$ in $u_{n-1}$ is $\alpha^t_{n-1}$, then we can deduce that the coefficient in $u_n$ is $(1-\alpha_n)\cdot\alpha^t_{n-1}=\alpha^t_n$, according to the definition of learning rate introduced in Line~\eqref{line:alpha_def}.
\end{proof}

\begin{proof}[Proof of lemma \ref{lm:decompose_q}]
	Recall that $\ubq{k}{\skh}{\akh}$ is updated  in Line~\eqref{code:ubqh_update} of Algorithm~\ref{alg:algorithm}. Fixing $(x,a) = (\skh,\akh)$ and let $u_t = \ubq{k[t]}{x}{a}$ and $w_t = \hqb{k[t]}{x}{a}+\ubv{k[t]-1}{x'_{k[t]}}+\bch{t}{x}{a}$, then all the historical $Q$-value update for $(\skh, \akh)$ can be abstracted as 
	\begin{align}
	u_{t} = \min\{H, (1-\alpha_t)u_{t-1} + \alpha_t w_t\}
	\end{align}
	Expanding the update recursively (using Lemma~\ref{lem:recuirse_simple}), we obtain that 
	\begin{align}
	u_{n_k} = \alpha^0_{n_k} H  + \sum_{1\le t\le {n_k}}  \alpha_{n_k}^t w_t
	\end{align}
	which can be rewritten as 
	\begin{align}
	\ubq{k}{x}{a} & \le \alpha^0_nH+\sum_{t=1}^{n_k}\alpha^t_{n_k}\left(\hqb{k[t]}{x}{a}+\ubv{k[t]-1}{x'_{k[t]}}\right)+\sum_{t=1}^{n_k}\alpha^t_{n_k}\bch{t}{x}{a}\label{line:ubq_expansion}
	\end{align}
	Because $\bch{t}{x}{a}$ is decreasing in $t$ and by the second property of $\alpha$ in Proposition \ref{prop:alpha_property}, we have
	\begin{align}
	\sum_{t=1}^{n_k}\alpha^t_{n_k}\bch{t}{x}{a}\le 2\id{n_k >0}\bch{n_k}{x}{a}
	\end{align}
	Therefore, 
	\begin{align}
	\ubq{k}{x}{a}\le \alpha^0_{n_k}H+2\id{{n_k}>0}\cdot \bch{n_k}{x}{a}+\sum_{t=1}^{n_k}\alpha^t_{n_k}\hqb{k[t]}{x}{a}+\sum_{t=1}^{n_k}\alpha^t_{n_k}\ubv{k[t]-1}{x'_{k[t]}}
	\end{align}
	At last, a similar argument can be applied to obtain a lower bound for $\lbq{k}{x}{a}$. Subtracting the lower bound for $\lbq{k}{x}{a}$ from Line~\eqref{line:ubq_expansion} will complete the proof.
\end{proof}

\begin{clm}\label{clm:clipping}
For any three positive numbers $a,b$, and $c$ satisfying $a+b\ge c$, and for any $x\in (0,1)$, the following holds:
\begin{align}
a+b\le \clip{a}{\frac{xc}{2}}+(1+x)b
\end{align}
We recall that $\clip{x}{y}\triangleq\id{x\ge y}x$ is defined at the beginning of section~\ref{sec:proof_main}.
\end{clm}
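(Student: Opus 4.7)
\textbf{Proof plan for Claim \ref{clm:clipping}.} The target inequality is equivalent, after subtracting $\clip{a}{xc/2}$ and $b$ from both sides, to
\[
a - \clip{a}{\tfrac{xc}{2}} \le x\,b.
\]
The plan is to split on whether the clipping threshold is triggered, i.e., whether $a \ge xc/2$ or $a < xc/2$, and verify the inequality in each case.

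\textbf{Case 1: $a \ge xc/2$.} By the definition of the clip operator, $\clip{a}{xc/2} = a$, so the left-hand side of the rewritten inequality is exactly $0$. Since $x, b > 0$, the inequality $0 \le xb$ is immediate.

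\textbf{Case 2: $a < xc/2$.} Here $\clip{a}{xc/2} = 0$, so I need to show $a \le xb$. The assumption $a + b \ge c$ gives $b \ge c - a$, and combining with $a < xc/2$ yields
\[
b \;\ge\; c - a \;>\; c - \tfrac{xc}{2} \;=\; c\bigl(1-\tfrac{x}{2}\bigr) \;>\; \tfrac{c}{2},
\]
where the final step uses $x < 1$. On the other hand, $a < xc/2$ rearranges to $a/x < c/2$. Chaining these two bounds gives $b > c/2 > a/x$, equivalently $xb > a$, which is what was needed.

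\textbf{Main obstacle.} There is essentially no obstacle; the statement is a short numerical inequality whose only subtle point is making sure the constraint $x \in (0,1)$ is actually used. It enters precisely in Case 2, where $1 - x/2 > 1/2$ is what allows the bound $b > c/2$ to dominate the bound $a/x < c/2$. If $x$ were allowed to equal $1$, one would still get $b \ge c/2 \ge a/x$, so the inequality is in fact tight at the boundary; the role of $x \in (0,1)$ is merely to make both intermediate strict inequalities hold simultaneously so that no edge case requires separate treatment.
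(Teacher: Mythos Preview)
Your proof is correct and follows exactly the same approach as the paper: the same two-case split on whether $a \ge xc/2$, and in the nontrivial case the same chain $b \ge c-a > c(1-x/2) \ge c/2 > a/x$ (the paper writes the equivalent form $xb \ge x(c-a) \ge x(1-x/2)c \ge xc/2 \ge a$). There is nothing to add.
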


\begin{proof}
If $a \ge \frac{xc}{2}$, then $\clip{a}{\frac{xc}{2}}\ge a$ and $(1+x)b\ge b$ and the claim follows. Otherwise, assume $a < \frac{xc}{2}$. Then, because $a+b \ge c$ and $x\le 1$, we have $xb\ge x(c-a) \ge x(c-\frac{xc}{2})=x(1-\frac{x}{2})c \ge \frac{xc}{2}\ge a$. It follows that $a + b \le (1+x)b \le \clip{a}{\frac{xc}{2}}+(1+x)b$.
\end{proof}

\restate{clipsubopt}

\begin{proof}[Proof of proposition \ref{prop:clip_subopt}]

For a fixed $(x,a)=(s_{k,h},a_{k,h})$ and $x\notin \gk{k}$, note that if $n_{k-1}(x,a)=0$, then $\alpha^0_{n_{k-1}}=1$ and the inequality is true because the confidence interval has a trivial upperbound $H$. Otherwise, we can ignore the first term on (A.31), which will make the further analysis simpler. We first consider the case $a\neq a^*(x)$. We have the following upper bound for gap using confidence interval.

\begin{align}
\gap{x}{a}&=Q^*(x,a^*(x))-Q^*(x,a)\tag{By the definition of gap}\\
&=V^*(x)-Q^*(x,a)\tag{By the definition of $V$ function}\\
&\le\ubv{k-1}{x}-\lbq{k-1}{x}{a}\tag{By Lemma~\ref{lm:valid_estimation}}\\
&\le 2\max_{a'\in A_k(x)}\left(\ubq{k-1}{x}{a'} - \lbq{k-1}{x}{a'}\right)\tag{By equation~\eqref{eqn:14} of Lemma~\ref{lm:extended_valid_estimation}}\\
&\le 2\left(\ubq{k-1}{x}{a} - \lbq{k-1}{x}{a}\right)\tag{Because $a=\pi_k(x)$ maximizes confidence interval}
\end{align}

Next, we decompose confidence interval using Lemma~\ref{lm:decompose_q}.

\begin{align}
\frac{\gap{x}{a}}{2}&\le \ulbq{k-1}{x}{a}\label{line:clip_subopt_1}\\
&\le \alpha^0_{n_{k-1}}H+4\bch{n_{k-1}}{x}{a}+\sum_{t=1}^{n_{k-1}}\alpha^t_{n_{k-1}}\left(\ulbv{k[t]-1}{x'_{k[t]}}\right)
\end{align}

Then we can apply the clipping trick.

\begin{align}
&\ulbq{k-1}{x}{a} \nonumber\\
&\le \alpha^0_{n_{k-1}}H+\clip{4\bch{n_{k-1}}{x}{a}}{\frac{\gap{x}{a}}{4H}}+\left(1+\frac{1}{H}\right)\sum_{t=1}^{n_{k-1}}\alpha^t_{n_{k-1}}\left(\ulbv{k[t]-1}{x'_{k[t]}}\right)\tag{By Claim~\ref{clm:clipping}}\label{line:clip_subopt_2}\\
&\le \alpha^0_{n_{k-1}}H+\clip{4\bch{n_{k-1}}{x}{a}}{\frac{\gap{x}{a}}{4H}}\nonumber\\
&\quad +\left(1+\frac{1}{H}\right)\sum_{t=1}^{n_{k-1}}\alpha^t_{n_{k-1}}\left(\ulbq{k[t]-1}{x'_{k[t]}}{a'_{k[t]}}\right)\tag{Because $a_{k[t]}' = \pi_{k[t]}(x'_{k[t]})$ maximizes the confidence interval}\\
&= \alpha^0_{n_{k-1}}H+\clip{4\bch{n_{k-1}}{x}{a}}{\frac{\gap{x}{a}}{4H}}\nonumber\\
&\quad +\left(1+\frac{1}{H}\right)\sum_{t=1}^{n_{k-1}}\alpha^t_{n_{k-1}}\left(\ulbq{k[t]-1}{x'_{k[t]}}{a'_{k[t]}}\right)\cdot\id{x'_{k[t]}\notin \gk{k[t]}}\tag{Because, by definition, $x'_{k[t]}\notin \gk{k[t]}$}
\end{align}

At last, we can add the indicator to LHS too, because the proposition's statement only considers ``undecided'' state $x\notin \gk{k}$.

To apply similar proof of the $a\neq a^*(x)$ case, we need to get a similar lower bound for the confidence interval of the selected state action pair like Line~\eqref{line:clip_subopt_1}. We note that $x\notin \gk{k}$ means that $|\ak{k}{x}|>1$, so according to our unique optimal action assumption~\ref{asm:unique_best_arm}, at least one sub-optimal action is still in $\ak{k}{x}$. Similarly, we have the following:
\begin{align}
\gapminx{x}&\le2\max_{a'\in A_k(x)}\left(\ubq{k-1}{x}{a'} - \lbq{k-1}{x}{a'}\right)\\
&=2\left(\ubq{k-1}{x}{a^*(x)} - \lbq{k-1}{x}{a^*(x)}\right)\tag{Because $\pi_k(x)=a^*(x)$ maximizes the confidence interval}
\end{align}

The remaining deduction follows the $a\neq a^*(x)$ case.

\end{proof}

\subsection{Proof of lemma \ref{lm:iterated_clipping}}

\restate{iteratedclipping}

\begin{proof}[Proof of lemma \ref{lm:iterated_clipping}]
We repeatedly use Proposition~\ref{prop:clip_subopt} to expand the confidence interval. For the summation of confidence intervals over episodes $1,\cdots,K$ at a fixed horizon $h$, we want to express it as the sum of clipped reward at later horizons' states and prove the coefficient before each state has a desired upper bound. The expression below is the format of linear combination for the summation of confidence intervals at a fixed level $h$. Instead of calculating the exact coefficient, we will prove a coefficient upper bound only related with $h,h'$: $w(h,h')$. 
\begin{align}
\sum_{k}\left(\ulbq{k-1}{s_{k,h}}{a_{k,h}}\right)\cdot\id{s_{k,h}\notin \gk{k}}\nonumber\\
\le \sum_{k',h'} w(h,h')\cdot\cbch{n_{k'-1}}{s_{k',h'}}{a_{k',h'}}\cdot\id{s_{k',h'}\notin \gk{k'}}
\end{align}

\noindent
Considering a fixed bonus on the RHS, $\cbch{n_{k'-1}}{x'}{a'}$ on level $h'$, according to the last term in Proposition \ref{prop:clip_subopt}, it will be contained in some previous state's confidence interval. We suppose the previous not in $\gk{k'}$ state on episode $k'$ lied on horizon $h_1$, which has notation $s_{k',h_1}$, and we chose action $a_{k',h_1}$ there. We can observe that only the expansion of $(x,a)=(s_{k',h_1},a_{k',h_1})$'s confidence interval on episode $k(k\ge k')$, i.e. $\ulbqh{k}{h_1}{x}{a}$, will contain $\cbch{n_{k'-1}}{x'}{a'}$. From property (3) of proposition~\ref{prop:alpha_property}, we know $\sum_{n=t}^{\infty}\alpha^t_n\le 1+\frac{1}{H}$, so we can have the following reduction $w(h,h')\le w(h,h_1)\cdot(1+\frac{1}{H})^2$ and $w(h,h)=1$. The square here comes from the property of $\alpha$ and the leading coefficient in the last term of Proposition \ref{prop:clip_subopt}. By induction, we can prove that $w(h,h')\le (1+\frac{1}{H})^{2(h'-h)}$. Therefore the contribution of $\cbch{n_{k'-1}}{x'}{a'}$ to the whole regret summation is upper bounded by $\sum_{h\le h'}w(h,h')\le e^2H$. The calculation of $\alpha^0_nH$ is similar. Combining these two parts will produce the desired result.
\end{proof}

\subsection{Proof of Theorem \ref{thm:regret}}
\restate{mainregret}
\begin{proof}[Proof of theorem \ref{thm:regret}]
	We use the notation $\clip{x}{y}$ as defined in Proposition~\ref{prop:clip_subopt} and $\cbch{k}{x}{a}$ as defined in Lemma~\ref{lm:iterated_clipping}. 
	Recall $\event\subseteq\event_{k-1}$ and $Pr\{\event\}\ge 1-\delta$.
	Therefore, with probability at least $1-\delta$, we have the following relations on regret:
	\begin{align}
	& ~~~~~\sum_{k=1}^K (V^*_0-V^{\pi_k}_{0})\Big| \event_{k-1}\\
	&\le \sum_{k=1}^K 2\expect\left[\sum_{h=1}^H\left(\ulbq{k-1}{s_{k,h}}{a_{k,h}}\right)\cdot\id{s_{k,h}\notin \gk{k}}\bigg|\event_{k-1}\right]\label{line:regret_3}\tag{By Lemma~\ref{lm:decompose_regret}}\\
	&= 2\expect\left[\sum_{k=1}^K\sum_{h=1}^H\left(\ulbq{k-1}{s_{k,h}}{a_{k,h}}\right)\cdot\id{s_{k,h}\notin \gk{k}}\bigg|\event\right]\label{line:regret_4}\tag{Transform $\event_{k-1}$ to $\event$}\\
	&\le 2e^2H^2\ourSize A+2e^2H\expect\left[\sum_{k=1}^K\sum_{h=1}^H \cbch{n_{k-1}}{s_{k,h}}{a_{k,h}}\cdot\id{s_{k,h}\notin \gk{k}}\bigg|\event\right]\label{line:regret_cbch}\\
	&\le 2e^2H^2\ourSize A+2e^2H\expect\left[\sum_{k=1}^K\sum_{h=1}^H \cbch{n_{k-1}}{s_{k,h}}{a_{k,h}}\bigg|\event\right]\\
	&\le 2e^2H^2\ourSize A+128e^2H\sum_{x\in \ourS}\left(\left(\sum_{a\neq a^*}\frac{H^4}{\gap{x}{a}}\right)+\frac{H^4}{\gapminx{x}}\right)\log\left(\frac{\ourSize AK}{\delta}\right)\label{line:regret_6}\tag{By Claim~\ref{clm:clipping_summation} proving that $\sum_{n=1}^{\infty}\clip{\frac{c}{\sqrt{n}}}{\epsilon}\le\frac{4c^2}{\epsilon}$ for any constant $c$}\\
	&\le O\left(H^2\ourSize A+\sum_{x\in \ourS}\left(\sum_{a\neq a^*}\frac{H^5}{\gap{x}{a}}\right)\log\left(\frac{\ourSize AK}{\delta}\right)\right)
	\end{align}
where in line~\ref{line:regret_cbch}, we use Lemma~\ref{lm:iterated_clipping}.
\end{proof}
\subsection{Supporting claims}

\restate{eventb}

\begin{proof}[Proof of lemma \ref{lm:event_b}]
We prove Line~\eqref{eq:eventb} first. We can observe that $\hqb{k[t]}{x}{a}-\qb{k[t]}{x}{a}$ is a martingale difference sequence w.r.t the filtration being the sigma field generated by all the random variables until episode $k[t]$. By the property of $\alpha^i_t$(see Proposition \ref{prop:alpha_property}) and according to Azuma-Hoeffding inequality, we have for fixed $x,a,k$, w.p. $1-\frac{\delta}{SAK}$, 
\begin{align}
\left|\sum_{t=1}^{n_k}\alpha^t_{n_k}\left(\hqb{k[t]}{x}{a}-\qb{k[t]}{x}{a}\right)\right|\le \sqrt{2H^2\sum_{t=1}^{n_k}\left(\alpha^t_{n_k}\right)^2\log\left(\frac{\ourSize AK}{\delta}\right)}\le 2\sqrt{\frac{H^3}{n_k}\log\left(\frac{\ourSize AK}{\delta}\right)}
\end{align}
Next, we prove Line~\eqref{eq:eventr}. By definition, $x'_{k[t]}$ represents the reward division between $\qb{k[t]}{x}{a}$ and $\qr{k[t]}{x}{a}$, so we know that the expected $V^*$ function of $x'_{k[t]}$ equals the $Q^{*r}$ function of $x$ on episode $k[t]$, where the randomness comes from the uncertainty of $x'_{k[t]}$. Therefore $\expect[V^*(x'_k)]=\qr{k}{x}{a}$. The remaining proof is similar to proving Line~\eqref{eq:eventb}.
\end{proof}

\begin{clm}[bounded summation for clipped function]\label{clm:clipping_summation}
The summation of a clipped function which scales proportionally to the inverse of the square root of the variable $n$ has the following bound: 
\begin{align}
\sum_{n=1}^{\infty}\clip{\frac{c}{\sqrt{n}}}{\epsilon}\le\frac{4c^2}{\epsilon}
\end{align}
\end{clm}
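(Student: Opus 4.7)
The plan is to observe that the clip indicator $\mathbb{I}\bigl[c/\sqrt{n}\ge \epsilon\bigr]$ truncates the series to a finite range. Specifically, $\frac{c}{\sqrt{n}}\ge \epsilon$ is equivalent to $n \le c^2/\epsilon^2$, so letting $N = \lfloor c^2/\epsilon^2 \rfloor$, the left-hand side reduces to the finite sum
\begin{align}
\sum_{n=1}^{\infty}\clip{\frac{c}{\sqrt{n}}}{\epsilon} \;=\; \sum_{n=1}^{N}\frac{c}{\sqrt{n}}.
\end{align}

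Next I would bound this finite sum using a standard integral comparison. Since $1/\sqrt{x}$ is decreasing, for every $n\ge 2$ we have $1/\sqrt{n} \le \int_{n-1}^{n} dx/\sqrt{x}$, so telescoping yields $\sum_{n=2}^{N} 1/\sqrt{n} \le \int_{1}^{N} dx/\sqrt{x} = 2\sqrt{N}-2$, and therefore $\sum_{n=1}^{N} 1/\sqrt{n} \le 2\sqrt{N}-1 \le 2\sqrt{N}$. Plugging this in gives
\begin{align}
\sum_{n=1}^{N}\frac{c}{\sqrt{n}} \;\le\; 2c\sqrt{N} \;\le\; 2c\cdot\frac{c}{\epsilon} \;=\; \frac{2c^2}{\epsilon},
\end{align}
which is even stronger than the stated $4c^2/\epsilon$ bound, leaving room to spare.

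There is no real obstacle here; the only subtlety is handling the edge case $N=0$ (which happens when $c < \epsilon$), in which case the sum is empty and the inequality is vacuous, and taking care that the integral comparison is applied in the regime $N \ge 1$. One can equivalently write the argument without the floor by noting that the indicator vanishes for $n > c^2/\epsilon^2$ and using $\sqrt{N} \le c/\epsilon$ directly. The looseness (factor $4$ vs.\ $2$) in the stated claim gives comfortable slack for any such constant-order accounting.
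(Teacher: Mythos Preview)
Your proposal is correct and follows essentially the same approach as the paper: truncate the series at the threshold $n \le c^2/\epsilon^2$ where the clip vanishes, then bound the resulting finite sum $\sum_{n} c/\sqrt{n}$. The paper simply asserts $\sum_{n=1}^{\lceil c^2\epsilon^{-2}\rceil} c/\sqrt{n} \le 4c^2/\epsilon$ without detailing the integral comparison, whereas you spell it out explicitly (and, by using the floor rather than the ceiling, obtain the slightly sharper constant $2$).
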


\begin{proof}
When $n\ge \lceil c^2\epsilon^{-2}\rceil$, $\clip{\frac{c}{\sqrt{n}}}{\epsilon}=0$, so we only calculate the first $\lceil c^2\epsilon^{-2}\rceil$ terms. Then we have: 
\begin{align}
\sum_{n=1}^{\lceil c^2\epsilon^{-2}\rceil}\frac{c}{\sqrt{n}}\le \frac{4c^2}{\epsilon}
\end{align}
\end{proof}

\section{Regret Analysis for General MDPs}
\label{sec:proof_general}
In this section we prove Theorem~\ref{thm:regSJgeneral}.

\begin{restatethis}{thm}{thm11}{Main Regret Bound}\label{thm:regret_general}
For fixed $K$, with probability at least $1-\delta$, we have the following regret upper bound for our algorithm:
\begin{align}
\textup{Regret}_K\le O\left(H^2\ourSize A+\left(\sum_{x\in \ourS}\sum_{a\notin \zoptx{x}}\frac{H^5}{\gap{x}{a}}+\frac{H^5|\zmul|}{\gapmin}\right)\log\left(\frac{\ourSize AK}{\delta}\right)\right)
\end{align}
\end{restatethis}
Note Theorem~\ref{thm:regret_general} implies Theorem~\ref{thm:regSJgeneral} in Section~\ref{sec:intro}

\begin{defn}[Range Function]\label{def:range_function}
For each episode $k\in[K]$ and state action pair $(x,a)\in \ourS\setminus \gk{k}\times \ak{k}{x}$, $n_k$ represents abbreviation for $n_k(x,a)$, we define the following quantities as range function.
\begin{align}
\range{Q_{k-1}(x,a)}=\alpha^0_{n_{k-1}}H+4\bch{n_{k-1}}{x}{a}+\sum_{t=1}^{n_{k-1}}\alpha^t_{n_{k-1}}\range{V_{k[t]-1}(x'_{k[t]})}
\end{align}
Similarly, for each episode $k\in[K]$ and state $x\in\ourS\setminus \gk{k}$, we define 
\begin{align}
\range{V_{k-1}(x)}=\range{Q_{k-1}(x,a_k)} \textup{ where } a_k=\argmax_{a\in \ak{k}{x}} \ubq{k-1}{x}{a}-\lbq{k-1}{x}{a}
\end{align}
\end{defn}

We want to show that the range function $\range{Q}$ and $\range{V}$ defined above are valid upper bound for original confidence interval.

\begin{lem}[Valid Upper Bound for Confidence Interval]\label{lm:valid_estimation_general}
For any episode $k$, state $x$, and action $a$, we have the following lower bound for the range function: 
\begin{align}
\range{Q_{k}(x,a)}\ge \ubq{k}{x}{a}-\lbq{k}{x}{a}
\end{align}
\begin{align}
\range{V_{k}(x)}\ge \ubv{k}{x}-\lbv{k}{x}
\end{align}

\end{lem}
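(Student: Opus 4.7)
The plan is to prove both inequalities simultaneously by a backward induction on the horizon level $h$. The base case is the synthetic terminal state $\bot$ at level $H+1$, where by construction $\ubv{k}{\bot}-\lbv{k}{\bot}=0$ and $\range{V_k(\bot)}=0$. Assuming both bounds at every episode for all states on levels strictly greater than $h$, I would establish the $Q$-range bound at level $h$ first and then deduce the $V$-range bound at level $h$ from it.

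For the $Q$-bound at a pair $(x,a)$ on level $h$ and an arbitrary episode $k$, I invoke Lemma \ref{lm:decompose_q}, which gives
\[
\ulbq{k}{x}{a}\le \alpha^0_{n_k}H + 4\,\id{n_k>0}\,\bch{n_k}{x}{a}+\sum_{t=1}^{n_k}\alpha^t_{n_k}\,\ulbv{k[t]-1}{x'_{k[t]}}.
\]
When $(x,a)$ is not visited in episode $k$ the confidence bounds, $n_k$, and the sequences $k[t]$, $x'_{k[t]}$ are inherited unchanged from the last visit, so the same bound applies; when $n_k=0$ the right hand side reduces to $H$, which trivially dominates $\ulbq{k}{x}{a}$. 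By construction, $x'_{k[t]}$ lies on a level strictly greater than $h$, so the inductive hypothesis yields $\ulbv{k[t]-1}{x'_{k[t]}}\le \range{V_{k[t]-1}(x'_{k[t]})}$ for every $t$. Substituting reproduces exactly the definition of $\range{Q_k(x,a)}$.

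For the $V$-bound at state $x$ on level $h$ and episode $k-1$, let $a^U\triangleq \argmax_{a\in \ak{k-1}{x}}\ubq{k-1}{x}{a}$, so that $\ubv{k-1}{x}=\ubq{k-1}{x}{a^U}$. Since $\lbv{k-1}{x}\ge \lbq{k-1}{x}{a^U}$, this forces $\ubv{k-1}{x}-\lbv{k-1}{x}\le \ubq{k-1}{x}{a^U}-\lbq{k-1}{x}{a^U}$. The crucial observation is that $a^U$ survives elimination from $\ak{k-1}{x}$ into $\ak{k}{x}$: elimination would require $\ubq{k-1}{x}{a^U}<\lbv{k-1}{x}$, but by choice of $a^U$ we have $\ubq{k-1}{x}{a^U}=\ubv{k-1}{x}\ge \lbv{k-1}{x}$ unconditionally. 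Hence $a^U\in \ak{k}{x}$, and since $a_k$ maximizes the confidence-interval length over $\ak{k}{x}$, the confidence interval at $a^U$ is dominated by that at $a_k$. Combining this with the just-proved $Q$-bound at level $h$ and episode $k-1$ yields $\ubv{k-1}{x}-\lbv{k-1}{x}\le \range{Q_{k-1}(x,a_k)}=\range{V_{k-1}(x)}$, closing the induction.

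The main obstacle is the index mismatch between the $V$-value, which is a max over the larger action set $\ak{k-1}{x}$, and the range function $\range{V_{k-1}(x)}$, which is defined through the argmax $a_k$ over the smaller set $\ak{k}{x}$. The monotonicity observation that the $\ubq$-maximizer over $\ak{k-1}{x}$ cannot be eliminated is precisely what bridges the two sets; once this is in place the remainder of the proof is a routine recursion on the horizon layered on top of Lemma \ref{lm:decompose_q}.
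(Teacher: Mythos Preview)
Your proof is correct and follows essentially the same route as the paper: backward induction on the horizon level, invoking Lemma~\ref{lm:decompose_q} for the $Q$-range bound and then deducing the $V$-range bound by passing from the action set $\ak{k-1}{x}$ to $\ak{k}{x}$. Your explicit observation that the $\ubq$-maximizer $a^U$ over $\ak{k-1}{x}$ cannot be eliminated (since $\ubq{k-1}{x}{a^U}=\ubv{k-1}{x}\ge\lbv{k-1}{x}$) is exactly the bridge the paper uses implicitly in its chain $\range{V_k(x)}\ge\max_{a\in\ak{k}{x}}\ulbq{k}{x}{a}\ge\ubv{k}{x}-\lbv{k}{x}$, and you state it more clearly than the paper does.
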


\begin{proof}[Proof of Lemma~\ref{lm:valid_estimation_general}]
We use induction to prove this lemma. For the base case where we denote any transition destination after horizon $H$ to be $\perp$, we define $\range{V_k(\perp)}=0$ for any $k\in[K]$ and the argument is valid. Now, we assume that for $x\in\{\mc{S}_i\}_{i>h}\cup\perp$, we have $\range{Q_k(x,a)}\ge\ulbq{k}{x}{a}$ for any $k,a$ and $\range{V_k(x)}\ge\ulbv{k}{x}$ for any $k$. We want to prove that the argument is also valid for $x\in \mc{S}_h$.
\begin{align}
\ulbq{k}{x}{a}&\le \alpha^0_{n_k}H+\id{n>0}4\bch{k}{x}{a}+\sum_{t=1}^{n_k}\alpha^t_{n_k}\left(\ulbv{k[t]-1}{x'_{k[t]}}\right)\tag{By Lemma~\ref{lm:decompose_q}}\\
&\le \alpha^0_{n_k}H+4\bch{k}{x}{a}+\sum_{t=1}^{n_k}\alpha^t_{n_k}\range{V_{k[t]-1}(x'_{k[t]})}\tag{By the induction argument}\\
&=\range{Q_k(x,a)}
\end{align}
According to the updating rule of upper and lower bound of $V$ function in Alg.\ref{alg:algorithm}, we have
\begin{align}
\range{V_k(x)}&=\range{Q_k(x,a'_{k+1})}\tag{By Definition~\ref{def:range_function}, $a'_{k+1}=\argmax_{a\in \ak{k+1}{x}} \ulbq{k}{x}{a}$}\\
&\ge\ulbq{k}{x}{a'_k}\\
&=\max_{a\in \ak{k}{x}} \ulbq{k}{x}{a}\\
&\ge\max_{a\in \ak{k}{x}} \ubq{k}{x}{a}-\lbv{k}{x}\\
&=\left(\max_{a\in \ak{k}{x}} \ubq{k}{x}{a}\right)-\lbv{k}{x}\\
&=\ulbv{k}{x}
\end{align}
Therefore the induction argument is also valid for $x\in\mc{S}_h$.

\end{proof}

We next utilize the half-clipping trick to clip $\bch{n_k}{x}{a^*(x)}$ at $\Omega(\frac{\gapmin}{H})$ and get a gap-dependent regret upper bound for our algorithm.
 
\begin{defn}[Half-Clipped Range Function]\label{def:half_clip}
$\forall k\in[K], x\in \ourS\setminus \gk{k}, a\in\ak{k}{x}$, we define half-clipped range functions by directly clipping $\bch{n_{k-1}}{x}{a}$ at $\Omega(\frac{\gapmin}{H})$: 
\begin{align}
\range{\halfclip{Q}_{k-1}(x,a)}=\alpha^0_{n_{k-1}}H+\clip{4\bch{n_{k-1}}{x}{a}}{\frac{\gapmin}{4H}}+\sum_{t=1}^{n_{k-1}}\alpha^t_{n_{k-1}}\range{\halfclip{V}_{k-1}(x'_{k[t]})}
\end{align}
\begin{align}
\range{\halfclip{V}_{k-1}(x)}=\range{\halfclip{Q}_{k-1}(x,a_k)} \textup{ where } a_k=\argmax_{a\in \ak{k}{x}} \ubq{k-1}{x}{a}-\lbq{k-1}{x}{a}
\end{align}
\end{defn}

The half-clipped range function defined above only lose at most $O(\gapmin)$ compared with their unclipped counterparts $\range{Q}$ and $\range{V}$.

\begin{prop}[lower bound for half-clipped range function]\label{prop:half_clip_lb} $\forall k\in[K], x\in \ourS, a\in \mc{A}$, we have the following lower bound for half-clipped range function defined in Definition~\ref{def:half_clip} :
\begin{align}
\range{\halfclip{Q}_k(x,a)}\ge \range{Q_k(x,a)}-\frac{\gapmin}{4}
\end{align}
\begin{align}
\range{\halfclip{V}_k(x,a)}\ge \range{V_k(x,a)}-\frac{\gapmin}{4}
\end{align}
\end{prop}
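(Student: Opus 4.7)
The plan is to prove both inequalities simultaneously by reverse induction on the horizon level $h$ at which the state $x$ lies. The key observation is that $\range{\halfclip{Q}_k(x,a)}$ differs from $\range{Q_k(x,a)}$ only in two places: the bonus term $4\bch{n_k}{x}{a}$ is replaced by its clipped version $\clip{4\bch{n_k}{x}{a}}{\gapmin/(4H)}$, and each inner $\range{V_{k[t]-1}(x'_{k[t]})}$ is replaced by $\range{\halfclip{V}_{k[t]-1}(x'_{k[t]})}$. Since clipping a nonnegative quantity at threshold $\theta$ costs at most $\theta$, the local error introduced at any single level is at most $\gapmin/(4H)$, and by item~(1) of Proposition~\ref{prop:alpha_property} the learning-rate coefficients $\alpha^t_{n_k}$ sum to at most $1$, so the recursion passes this error along without amplification.

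Formally, define
\[
L_h \;\triangleq\; \sup_{k\in [K],\; x\in \mc{S}_h,\; a\in \mc{A}} \bigl(\range{Q_k(x,a)} - \range{\halfclip{Q}_k(x,a)}\bigr),
\]
with the terminal convention $L_{H+1}=0$, since at the absorbing state $\bot$ both range functions vanish. I will show by reverse induction on $h$ that $L_h \le (H+1-h)\cdot \gapmin/(4H)$ for every $h\in\{1,\ldots,H+1\}$. The base case $h=H+1$ is immediate. For the inductive step, fix $k,\ x\in \mc{S}_h,\ a\in \mc{A}$ and subtract the two defining recursions in Definitions~\ref{def:range_function} and~\ref{def:half_clip}:
\begin{align*}
\range{Q_k(x,a)} - \range{\halfclip{Q}_k(x,a)}
&= \bigl[4\bch{n_k}{x}{a} - \clip{4\bch{n_k}{x}{a}}{\tfrac{\gapmin}{4H}}\bigr] \\
&\quad + \sum_{t=1}^{n_k}\alpha^t_{n_k}\bigl[\range{V_{k[t]-1}(x'_{k[t]})} - \range{\halfclip{V}_{k[t]-1}(x'_{k[t]})}\bigr].
\end{align*}
The first bracket is at most $\gapmin/(4H)$. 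Each $x'_{k[t]}$ lies on some level $h'\ge h+1$, so combining the inductive hypothesis with the $V$-to-$Q$ reduction given below, every bracket in the sum is at most $(H-h)\gapmin/(4H)$. Using $\sum_{t=1}^{n_k}\alpha^t_{n_k}\le 1$, we conclude $L_h \le (H+1-h)\gapmin/(4H)$, which at $h=1$ yields the desired $\gapmin/4$.

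The $V$-case reduces cleanly to the $Q$-case because $\range{V_k(x)}$ and $\range{\halfclip{V}_k(x)}$ are, by Definitions~\ref{def:range_function} and~\ref{def:half_clip}, evaluated at the \emph{same} action $a_k = \argmax_{a\in A_{k+1}(x)}(\ubq{k}{x}{a} - \lbq{k}{x}{a})$, which depends only on the unclipped upper and lower $Q$-estimates maintained by Algorithm~\ref{alg:algorithm}. Hence
\[
\range{V_k(x)} - \range{\halfclip{V}_k(x)} \;=\; \range{Q_k(x,a_k)} - \range{\halfclip{Q}_k(x,a_k)} \;\le\; \tfrac{\gapmin}{4}
\]
by the $Q$-bound just proved. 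The only mildly subtle point is uniformity over episodes: when unfolding $\range{Q_k(x,a)}$, the inner states $x'_{k[t]}$ live at past episodes $k[t]\le k$, so the inductive hypothesis must hold for \emph{arbitrary} episode indices at the relevant level, which is precisely why $L_h$ is defined as a supremum over $k$. Beyond that, the argument is essentially the telescoping observation that the per-level clipping error $\gapmin/(4H)$ accumulates at most $H$ times across horizons.
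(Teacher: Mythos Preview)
Your proof is correct and follows essentially the same approach as the paper: the paper's own argument is a one-paragraph sketch observing that each level of recursion loses at most $\gapmin/(4H)$ from the clipping, and with horizon $H$ this accumulates to at most $\gapmin/4$. Your version simply makes this rigorous via an explicit reverse induction on $h$ with the quantity $L_h$, using Proposition~\ref{prop:alpha_property}(1) to prevent amplification and the shared-action observation for the $V$-to-$Q$ reduction.
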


\begin{proof}
According to Definition~\ref{def:half_clip}, one step expansion will lose at most $\frac{\gapmin}{4H}$ because of the clipping function. Our MDP has horizon $H$, so any half-clipped range function will lose at most $\frac{\gapmin}{4}$ compared with its corresponding range function.
\end{proof}

Recall that in Lemma \ref{lm:decompose_regret}, we use the sum of Q-functions' confidence intervals to upper bound regret. Now, we will prove that this upper bound is still valid if we replace actual confidence interval with half-clipped range function.

\begin{lem}[decompose regret into sum of half-clipped range function]\label{lm:decompose_regret_half_clip}
$\forall k\in[K]$, conditioning on event $\event_{k-1}$, our algorithm's regret can be upper bounded by half-clipped range functions. 
\begin{align}
V^*_{0}-V^{\pi_k}_{0}\bigg| \event_{k-1},\mc{F}_{k-1}\le 4\expect\left[\sum_{h=1}^H \range{\halfclip{Q}_{k-1}(s_{k,h},a_{k,h})}\cdot\id{s_{k,h}\notin \gk{k}}\bigg| \event_{k-1},\mc{F}_{k-1}\right]
\end{align}
\end{lem}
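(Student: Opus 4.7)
} The plan is to start from the analogous decomposition in Lemma~\ref{lm:decompose_regret}, substitute the range function and then the half-clipped range function for the confidence interval, and absorb the half-clipping slack using a lower bound on $\range{\halfclip{Q}_{k-1}}$ that holds precisely on the event contributing positive regret. Concretely, I would first invoke Lemma~\ref{lm:decompose_regret} together with Lemma~\ref{lm:valid_estimation_general} to obtain
\[
V_0^*-V_0^{\pi_k}\bigm|\event_{k-1},\mc{F}_{k-1} \;\le\; 2\,\expect\Big[\sum_{h=1}^H \range{Q_{k-1}(s_{k,h},a_{k,h})}\cdot\id{s_{k,h}\notin G_k}\;\Big|\;\event_{k-1},\mc{F}_{k-1}\Big].
\]
By Proposition~\ref{prop:half_clip_lb}, $\range{Q_{k-1}(x,a)}\le \range{\halfclip{Q}_{k-1}(x,a)}+\gapmin/4$, so if I can show $\range{\halfclip{Q}_{k-1}(s_{k,h},a_{k,h})}\ge \gapmin/4$ on every sample path contributing to the regret, then $\range{Q_{k-1}}\le 2\range{\halfclip{Q}_{k-1}}$ on that path and the factor $4$ on the right-hand side of the lemma falls out immediately.

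The key step, and what I expect to be the main obstacle, is establishing the lower bound $\range{\halfclip{Q}_{k-1}(s_{k,h},a_{k,h})}\ge \gapmin/4$ whenever the state--action pair $(s_{k,h},a_{k,h})$ actually incurs regret. To pin this down, I would sharpen the decomposition by first observing that $V^*(x)-Q^*(x,a)=0$ whenever $a$ is any optimal action at $x$, so the only summands contributing to the regret are those $h$ for which $a_{k,h}$ is strictly sub-optimal at $s_{k,h}$; this event implies $s_{k,h}\notin G_k$ and that $A_k(s_{k,h})$ contains at least one sub-optimal action $a_s$ with $\gap{s_{k,h}}{a_s}\ge \gapmin$. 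Applying Lemma~\ref{lm:valid_estimation} and the triangle-style inequality \eqref{eqn:14} from the extended version (Lemma~\ref{lm:extended_valid_estimation}) then yields
\[
\gapmin \;\le\; \ubv{k-1}{s_{k,h}}-\lbq{k-1}{s_{k,h}}{a_s} \;\le\; 2\max_{a'\in A_k(s_{k,h})}\ulbq{k-1}{s_{k,h}}{a'} \;=\; 2\,\ulbq{k-1}{s_{k,h}}{a_{k,h}},
\]
where the last equality uses that $\pi_k$ selects the action maximizing the confidence-interval width. Combining this with Lemma~\ref{lm:valid_estimation_general} gives $\range{Q_{k-1}(s_{k,h},a_{k,h})}\ge \gapmin/2$, and then Proposition~\ref{prop:half_clip_lb} yields the desired $\range{\halfclip{Q}_{k-1}(s_{k,h},a_{k,h})}\ge \gapmin/4$.

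To wrap up, I would follow the derivation in the proof of Lemma~\ref{lm:decompose_regret} but restrict attention to the event $Y_h\triangleq\id{a_{k,h}\text{ is sub-optimal at }s_{k,h}}$, getting
\[
V_0^*-V_0^{\pi_k}\bigm|\event_{k-1},\mc{F}_{k-1}\;\le\; 2\,\expect\Big[\sum_{h=1}^H\ulbq{k-1}{s_{k,h}}{a_{k,h}}\cdot Y_h\;\Big|\;\event_{k-1},\mc{F}_{k-1}\Big].
\]
Chaining the two bounds $\ulbq{k-1}{s_{k,h}}{a_{k,h}}\le \range{Q_{k-1}}$ and $\range{Q_{k-1}}\le 2\,\range{\halfclip{Q}_{k-1}}$ on the support of $Y_h$, and then using $Y_h\le \id{s_{k,h}\notin G_k}$ together with $\range{\halfclip{Q}_{k-1}}\ge 0$ to re-introduce the indicator stated in the lemma, gives exactly
\[
V_0^*-V_0^{\pi_k}\bigm|\event_{k-1},\mc{F}_{k-1}\;\le\; 4\,\expect\Big[\sum_{h=1}^H \range{\halfclip{Q}_{k-1}(s_{k,h},a_{k,h})}\cdot\id{s_{k,h}\notin G_k}\;\Big|\;\event_{k-1},\mc{F}_{k-1}\Big].
\]
The delicate point is the step-by-step justification that the clipping-induced slack $\gapmin/4$ can always be absorbed; the argument above shows this is exactly where the confidence-interval width produced by the action-elimination mechanism is needed, and it is also why the constant in the statement increases from $2$ in Lemma~\ref{lm:decompose_regret} to $4$ here.
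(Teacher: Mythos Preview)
Your proposal is correct and follows essentially the same route as the paper: restrict to the steps where $a_{k,h}$ is strictly sub-optimal, lower-bound $\range{Q_{k-1}(s_{k,h},a_{k,h})}\ge\gapmin/2$ there so Proposition~\ref{prop:half_clip_lb} yields $\range{\halfclip{Q}_{k-1}}\ge\tfrac12\range{Q_{k-1}}$, and finally relax $Y_h$ to $\id{s_{k,h}\notin G_k}$. The only cosmetic difference is that the paper obtains the lower bound on the range function directly from the chosen action's own gap via Line~\eqref{line:clip_subopt_1} (since $a_{k,h}$ is already sub-optimal on $Y_h$, one has $\ulbq{k-1}{s_{k,h}}{a_{k,h}}\ge\gap{s_{k,h}}{a_{k,h}}/2\ge\gapmin/2$), rather than detouring through an auxiliary $a_s$ and equation~\eqref{eqn:14}.
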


\begin{proof}
We use $\mc{F}_{k-1}$ to denote the filtration generated by all the random variables before episode $k$. By similar expansion used in the proof of Lemma~\ref{lm:decompose_regret}, we have 
\begin{align}
&\quad V^*_{k,0}-V^{\pi_k}_{k,0}\bigg| \event_{k-1},\mc{F}_{k-1}\\
&\le 2\expect\left[\sum_{h=1}^H \left(\ulbq{k-1}{s_{k,h}}{a_{k,h}}\right)\cdot\id{a_{k,h}\notin \zoptx{s_{k,h}}}\bigg| \event_{k-1},\mc{F}_{k-1}\right]\\
&\le 2\expect\left[\sum_{h=1}^H \range{Q_{k-1}(s_{k,h},a_{k,h})}\cdot\id{a_{k,h}\notin \zoptx{s_{k,h}}}\bigg| \event_{k-1},\mc{F}_{k-1}\right]
\end{align}

For the selected sub-optimal action on episode k, it satisfies 
\begin{align}
\range{Q_{k-1}(s_{k,h},a_{k,h})}\ge \ubq{k-1}{s_{k,h}}{a_{k,h}}-\lbq{k-1}{s_{k,h}}{a_{k,h}}\ge \frac{\gap{s_{k,h}}{a_{k,h}}}{2}\tag{By Line~\eqref{line:clip_subopt_1}}
\end{align}
According to the previous Proposition \ref{prop:half_clip_lb} that the half-clipped range function decreases by at most $\frac{\gapmin}{4}$, when $a_{k,h}\notin Z_{opt}(s_{k,h})$, we have 
\begin{align}
\range{\halfclip{Q}_{k-1}(s_{k,h},a_{k,h})}\ge\frac{1}{2}\range{Q_{k-1}(s_{k,h},a_{k,h})}\label{line:lb_halfcliprangeq}
\end{align}
Finally, $\forall k,h,x,a$, we have $\id{a_{k,h}\notin \zoptx{s_{k,h}}}\le \id{s_{k,h}\notin \gk{k}}$(i.e. our algorithm never recommends a sub-optimal action after it has found the best action). Replacing $\range{Q_{k-1}}$ with $\range{\halfclip{Q}_{k-1}}$ and $\id{s_{k,h}\notin \gk{k}}$ with $\id{a_{k,h}\notin \zoptx{s_{k,h}}}$ will yield the desired result.
\end{proof}

In the following proposition, we incorporate previous clipping trick for suboptimal actions and unique best action into current range function analysis.

\begin{prop}[upper bound for range function]
Suppose $\event_{k-1}$ happens. Suppose $(x,a) = (\skh, \akh)$ is a state-action pair visited in the $k$-th episode where $x\not\in G_k$ is an undecided state. 
Let $\hqpast$ be a shorthand for 
\begin{align}
\hqpast \triangleq \sum_{t=1}^{n_{k-1}}\alpha^t_{n_{k-1}}\range{\halfclip{Q}_{k[t]-1}(x'_{k[t]},a'_{k[t]})}\cdot\id{x'_{k[t]}\notin \gk{k[t]}}
\end{align}
Then, we have the recursion for the CI length of undecided state: 

\begin{align}
&\quad \range{\halfclip{Q}_{k-1}(x,a_k)}\cdot\id{x\notin\gk{k}}\nonumber\\
&\le\alpha^0_{n_{k-1}}H+ (1+\frac{1}{H})\hqpast + \left\{\begin{array}{ll}
\clip{4\bch{n_{k-1}}{x}{a}}{\max\left(\frac{\gap{x}{a}}{8H},\frac{\gapmin}{4H}\right)} & \textup{if } a\notin \zopthx{h}{x}\\
\clip{4\bch{n_{k-1}}{x}{a}}{\max\left(\frac{\gapminx{x}}{8H},\frac{\gapmin}{4H}\right)} & \textup{if } a\in \zoptx{x}
\end{array}\right.\label{line:half_clip_bound}
\end{align} 
\end{prop}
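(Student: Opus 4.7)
The plan is to adapt the template of Proposition~\ref{prop:clip_subopt} to the half-clipped range function $\range{\halfclip{Q}}$, with one new ingredient: a case split on $|Z_{opt}(x)|$ to handle the situation that $\gapminx{x}$ can vanish when $x$ has multiple optimal actions. As a first step I would dispose of the degenerate case $n_{k-1}(x,a)=0$: then $\alpha^0_{n_{k-1}}=1$ and $\hqpast = 0$, so the $\alpha^0_{n_{k-1}} H = H$ term on the RHS already dominates $\range{\halfclip{Q}_{k-1}(x,a)}$ (which is bounded by $H$). From now on assume $n_{k-1}(x,a) \ge 1$, so $\alpha^0_{n_{k-1}}=0$. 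Unfolding Definition~\ref{def:half_clip} via $\range{\halfclip{V}_{k[t]-1}(x'_{k[t]})} = \range{\halfclip{Q}_{k[t]-1}(x'_{k[t]},a'_{k[t]})}$ with $a'_{k[t]}$ the confidence-interval-maximizing action at $x'_{k[t]}$, and noting that $x'_{k[t]} \notin G_{k[t]}$ by its very definition, I rewrite
\[\range{\halfclip{Q}_{k-1}(x,a)} = \clip{4\bch{n_{k-1}}{x}{a}}{\gapmin/(4H)} + \hqpast.\]

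Next, I would derive a gap-dependent lower bound $c$ for $\range{\halfclip{Q}_{k-1}(x,a)}$. Because $a = \pi_k(x)$ maximizes the confidence interval length in $A_k(x)$, equation~\eqref{eqn:14} of Lemma~\ref{lm:extended_valid_estimation} gives, for every $a' \in A_k(x)$, $\gap{x}{a'} \le \ubv{k-1}{x} - \lbq{k-1}{x}{a'} \le 2(\ubq{k-1}{x}{a} - \lbq{k-1}{x}{a})$. Combining this with Lemma~\ref{lm:valid_estimation_general} (so that $\range{Q_{k-1}(x,a)}$ is a valid upper bound on the confidence interval length) and Proposition~\ref{prop:half_clip_lb} (which loses at most $\gapmin/4$ in passing from $\range{Q}$ to $\range{\halfclip{Q}}$), I obtain three sub-cases. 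When $a \notin \zopthx{h}{x}$, setting $a'=a$ and using $\gap{x}{a} \ge \gapmin$ yields $\range{\halfclip{Q}_{k-1}(x,a)} \ge \gap{x}{a}/2 - \gapmin/4 \ge \gap{x}{a}/4$, so I take $c = \gap{x}{a}/4$. When $a \in \zoptx{x}$ and $|\zoptx{x}| = 1$, the fact that $x \notin G_k$ forces $|A_k(x)| \ge 2$ while Proposition~\ref{lm: validity of G} keeps the unique optimum in $A_k(x)$, so there must exist a suboptimal $a' \in A_k(x)$ with $\gap{x}{a'} \ge \gapminx{x} \ge \gapmin$, giving $\range{\halfclip{Q}_{k-1}(x,a)} \ge \gapminx{x}/4$, and I take $c = \gapminx{x}/4$. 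When $a \in \zoptx{x}$ and $|\zoptx{x}| \ge 2$, we have $\gapminx{x} = 0$, the $\max$ in the target clip reduces to $\gapmin/(4H)$, and the asserted inequality follows directly from the identity above (after the trivial $1 \le 1+1/H$), so no further argument is needed.

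Finally I would apply Claim~\ref{clm:clipping} under the substitution $a_{\mathrm{cl}} \leftarrow \clip{4\bch{n_{k-1}}{x}{a}}{\gapmin/(4H)}$, $b_{\mathrm{cl}} \leftarrow \hqpast$, $c_{\mathrm{cl}} \leftarrow c$, $x_{\mathrm{cl}} \leftarrow 1/H$; the hypothesis $a_{\mathrm{cl}} + b_{\mathrm{cl}} = \range{\halfclip{Q}_{k-1}(x,a)} \ge c$ holds by the previous paragraph, and the claim yields
\[\range{\halfclip{Q}_{k-1}(x,a)} \le \clip{\clip{4\bch{n_{k-1}}{x}{a}}{\gapmin/(4H)}}{c/(2H)} + (1+1/H)\hqpast.\]
Using the elementary identity $\clip{\clip{z}{\alpha}}{\beta} = \clip{z}{\max(\alpha,\beta)}$, the first term collapses to $\clip{4\bch{n_{k-1}}{x}{a}}{\max(\gapmin/(4H),\, c/(2H))}$, which matches both branches of the proposition since $c/(2H) \in \{\gap{x}{a}/(8H),\,\gapminx{x}/(8H)\}$. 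The one subtlety worth flagging is the $a \in \zoptx{x}$ branch: without the $\gapmin/(4H)$ clip hard-coded into Definition~\ref{def:half_clip}, a straight port of Proposition~\ref{prop:clip_subopt}'s argument would break for $|\zoptx{x}| \ge 2$ because the derived confidence-interval lower bound degenerates to zero; the half-clipping mechanism is precisely what absorbs this case with no further loss.
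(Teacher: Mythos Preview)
Your proposal is correct and follows essentially the same route as the paper's proof. Both arguments (i) derive a gap-dependent lower bound of the form $\range{\halfclip{Q}_{k-1}(x,a)}\ge c$ with $c\in\{\gap{x}{a}/4,\gapminx{x}/4\}$ by combining the confidence-interval lower bound from Line~\eqref{line:clip_subopt_1}/equation~\eqref{eqn:14} with the fact that half-clipping loses at most $\gapmin/4$, and then (ii) feed this into Claim~\ref{clm:clipping} exactly as in Proposition~\ref{prop:clip_subopt}. The paper reaches step (i) via the intermediate inequality $\range{\halfclip{Q}_{k-1}(x,a)}\ge\tfrac12\range{Q_{k-1}(x,a)}$ of Line~\eqref{line:lb_halfcliprangeq}, whereas you invoke Proposition~\ref{prop:half_clip_lb} directly; the two give the same constant. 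Your explicit treatment of the $|\zoptx{x}|\ge 2$ sub-case and the nested-clip identity $\clip{\clip{z}{\alpha}}{\beta}=\clip{z}{\max(\alpha,\beta)}$ just make visible what the paper leaves implicit in the sentence ``Note $\gapminx{x}=0$ if $|\zoptx{x}|>1$.''

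One small point worth tightening: your handling of $n_{k-1}=0$ asserts $\range{\halfclip{Q}_{k-1}(x,a)}\le H$, which is only literally true under the convention $b_0=0$ (equivalently, reading the $4b_{n_{k-1}}$ term in Definition~\ref{def:range_function} with the indicator $\id{n_{k-1}>0}$ as in Lemma~\ref{lm:decompose_q}). The paper adopts the same convention implicitly, so this is not a substantive gap, but you may want to state it.
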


\begin{proof}
We first prove the $a\notin \zoptx{x}$ for~\eqref{line:half_clip_bound}. To use the clipping trick here, we need to prove a lower bound for $\range{\halfclip{Q}_{k-1}(x,a)}$ like Line~\eqref{line:clip_subopt_1}.
\begin{align}
\range{\halfclip{Q}_{k-1}(x,a)}&\ge\frac12\range{Q_{k-1}(x,a)}\tag{By Line~\eqref{line:lb_halfcliprangeq}}\\
&\ge\frac12\left(\ulbq{k-1}{x}{a}\right)\tag{By Lemma~\ref{lm:valid_estimation_general}}\\
&\ge\frac{\gap{x}{a}}{4}\tag{By Line~\eqref{line:clip_subopt_1}}
\end{align}
Then, we can apply the clipping trick used in Proposition~\ref{prop:clip_subopt}:

\begin{align}
\range{\halfclip{Q}_{k-1}(x,a)}&=\alpha^0_{n_{k-1}}H+\clip{4\bch{n_{k-1}}{x}{a}}{\frac{\gapmin}{4H}}+\sum_{t=1}^{n_{k-1}}\alpha^t_{n_{k-1}}\range{\halfclip{V}_{k-1}(x'_{k[t]})}\\
&\le \alpha^0_{n_{k-1}}H+\clip{4\bch{n_{k-1}}{x}{a}}{\max\left(\frac{\gap{x}{a}}{8H},\frac{\gapmin}{4H}\right)}\\
&~~~+\left(1+\frac1H\right)\sum_{t=1}^{n_{k-1}}\alpha^t_{n_{k-1}}\range{\halfclip{V}_{k[t]-1}(x'_{k[t]})}\\
&= \alpha^0_{n_{k-1}}H+\clip{4\bch{n_{k-1}}{x}{a}}{\max\left(\frac{\gap{x}{a}}{8H},\frac{\gapmin}{4H}\right)}\\
&~~~+\left(1+\frac1H\right)\sum_{t=1}^{n_{k-1}}\alpha^t_{n_{k-1}}\range{\halfclip{Q}_{k[t]-1}(x'_{k[t]},a'_{k[t]})}\cdot\id{x'_{k[t]}\notin \gk{k[t]}}
\end{align}
where in the last line we can use $\range{\halfclip{Q}_{k[t]-1}(x'_{k[t]},a'_{k[t]})}$ to replace $\range{\halfclip{V}_{k[t]-1}(x'_{k[t]})}$ because by Definition \ref{def:half_clip} that $\range{\halfclip{V}_{k-1}(x)}=\range{\halfclip{Q}_{k-1}(x,a_k)} \textup{ where } a_k=\argmax_{a\in\ak{k}{x}} \ubq{k-1}{x}{a}-\lbq{k-1}{x}{a}$.

Finally, adding an indicator to LHS will produce the desired result.

Proof for the second case $a\in \zoptx{x}$ of~\eqref{line:half_clip_bound} is similar to Proposition~\ref{prop:clip_subopt}. Note $\gapminx{x}=0$ if $|\zoptx{x}|>1$

\end{proof}

\begin{lem}[iterated clipping]\label{lm:iterated_clipping_half_clip}
Conditioning on event $\event$, 
we can upper bound the regret by a linear combination of clipped reward defined as: 
\begin{align}
\hcbch{k}{s_{k,h}}{a_{k,h}}=\clip{4\bch{k}{s_{k,h}}{a_{k,h}}}{\max\left(\frac{\gap{s_{k,h}}{a_{k,h}}}{8H},\frac{\gapminx{s_{k,h}}}{8H}\cdot\id{\left|\zoptx{s_{k,h}}\right|=1},\frac{\gapmin}{4H}\right)}
\end{align}
which is a generalized version of the clipped reward defined in Lemma~\ref{lm:iterated_clipping}
:

\begin{align}
\sum_{k=1}^K\sum_{h=1}^H\range{\halfclip{Q}_{k-1}(s_{k,h},a_{k,h})}\cdot\id{s_{k,h}\notin \gk{k}}\le e^2H^2\ourSize A+e^2H\sum_{k=1}^K\sum_{h=1}^H \hcbch{k-1}{s_{k,h}}{a_{k,h}}\cdot\id{s_{k,h}\notin\gk{k}}
\end{align}
\end{lem}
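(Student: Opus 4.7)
The plan is to mirror the strategy of Lemma~\ref{lm:iterated_clipping}: iteratively apply the single-step recursion for $\range{\halfclip{Q}}$ given in the preceding proposition, and account carefully for the coefficient with which each clipped bonus $\hcbch{k'-1}{s_{k',h'}}{a_{k',h'}}\cdot\id{s_{k',h'}\notin \gk{k'}}$ appears in the final linear combination.

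First, fix a level $h\in[H]$ and consider the quantity $\sum_{k=1}^K \range{\halfclip{Q}_{k-1}(s_{k,h},a_{k,h})}\cdot\id{s_{k,h}\notin \gk{k}}$. Expanding each summand by the preceding proposition produces an $\alpha^0_{n_{k-1}}H$ term, a clipped bonus term of exactly the form $\hcbch{k-1}{s_{k,h}}{a_{k,h}}$ (by the two cases $a\notin\zoptx{x}$ and $a\in\zoptx{x}$ and the convention $\gapminx{x}=0$ when $|\zoptx{x}|>1$, noting that the clip threshold never falls below $\gapmin/(4H)$), and a third term bounded by $(1+1/H)\hqpast$. The key point is that $\hqpast$ is itself a weighted sum of $\range{\halfclip{Q}_{k[t]-1}(x'_{k[t]},a'_{k[t]})}\cdot\id{x'_{k[t]}\notin \gk{k[t]}}$ at a strictly deeper level $h'>h$, so we may expand it again by the same proposition.

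Next I would do the bookkeeping. Fix an arbitrary state-action visit $(s_{k',h'},a_{k',h'})$ with $s_{k',h'}\notin \gk{k'}$ and ask: with what coefficient does $\hcbch{k'-1}{s_{k',h'}}{a_{k',h'}}$ appear after the recursion is fully unrolled starting from some earlier horizon $h\le h'$? By induction on $h'-h$, the coefficient $w(h,h')$ satisfies $w(h,h')\le (1+1/H)\cdot(1+1/H)\cdot w(h_1,h')$ where $h_1>h$ is the next undecided horizon along the rollout; the two $(1+1/H)$ factors come respectively from the explicit multiplier in the recursion and from property~(3) of Proposition~\ref{prop:alpha_property}, i.e.\ $\sum_{n\ge t}\alpha^t_n\le 1+1/H$. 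Telescoping gives $w(h,h')\le (1+1/H)^{2(h'-h)}$. Summing the contribution of this fixed bonus over all possible starting horizons $h\le h'$ yields
\begin{align*}
\sum_{h\le h'} w(h,h') \le \sum_{j=0}^{H-1}(1+1/H)^{2j}\le e^2 H.
\end{align*}
The $\alpha^0_{n_{k-1}}H$ terms are handled similarly: since $\alpha_1=1$, we have $\alpha^0_n=0$ for $n\ge 1$, so such a term is only non-zero on the first visit to each $(x,a)$ at each horizon, of which there are at most $SA$ instances per horizon; multiplying by the same $w(h,h')\le e^2$ coefficient bound and summing over $h$ yields $e^2 H^2 SA$.

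Combining the two accounting bounds produces the desired inequality
\begin{align*}
\sum_{k,h}\range{\halfclip{Q}_{k-1}(s_{k,h},a_{k,h})}\cdot\id{s_{k,h}\notin \gk{k}}
\le e^2 H^2 S A + e^2 H \sum_{k,h}\hcbch{k-1}{s_{k,h}}{a_{k,h}}\cdot\id{s_{k,h}\notin \gk{k}}.
\end{align*}
I expect the main technical obstacle to be verifying that the indicator $\id{x'_{k[t]}\notin \gk{k[t]}}$ propagates correctly through the recursion: only undecided next-level states contribute to $\hqpast$, which is what guarantees the final expansion is a non-negative linear combination of bonuses indexed exactly by visits to undecided states, matching the form on the right-hand side. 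Once this is handled, the rest is essentially a geometric summation.
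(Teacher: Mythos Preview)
Your proposal is correct and follows essentially the same approach as the paper: the paper's own proof of this lemma is a one-line reference back to Lemma~\ref{lm:iterated_clipping}, and you have faithfully reproduced that argument---iterated expansion via the recursion proposition, the coefficient bound $w(h,h')\le(1+1/H)^{2(h'-h)}$ coming from property~(3) of Proposition~\ref{prop:alpha_property} together with the explicit $(1+1/H)$ factor, and the $e^2H$ bound from summing over $h\le h'$. The only minor slip is the phrase ``at most $SA$ instances per horizon'' for the $\alpha^0_{n_{k-1}}H$ contributions; since each state lives at a single horizon there are $SA$ such first-visit instances in total, but your final count $e^2H^2SA$ is still what the paper obtains.
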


\begin{proof}
The proof idea follows the same as Lemma~\ref{lm:iterated_clipping}.

\end{proof}

Now we are ready to bound the regret.

\restate{thm11}

\begin{proof}
First, we transform regret into the summation of clipped reward. 
Recall $\event\subseteq\event_{k-1}$ and $Pr\{\event\}\ge 1-\delta$. 
With probability at least $1-\delta$, we have the following relations on regret:
\begin{align}
&~~~~~\sum_{k=1}^K (V^*_0-V^{\pi_k}_{0})| \event_{k-1}\\
&\le \sum_{k=1}^K 4\expect\left[\sum_{h=1}^H\range{\halfclip{Q}_{k-1}(s_{k,h},a_{k,h})}\cdot\id{s_{k,h}\notin \gk{k}}\bigg|\event_{k-1}\right]\label{line:general_regret_3}\tag{By Lemma~\ref{lm:decompose_regret_half_clip}}\\
&= 4\expect\left[\sum_{k=1}^K\sum_{h=1}^H\range{\halfclip{Q}_{k-1}(s_{k,h},a_{k,h})}\cdot\id{s_{k,h}\notin \gk{k}}\bigg|\event\right]\label{line:general_regret_4}\tag{Transform $\event_{k-1}$ to $\event$}\\
&\le 4e^2H^2\ourSize A+4e^2H\expect\left[\sum_{k=1}^K\sum_{h=1}^H \hcbch{k-1}{s_{k,h}}{a_{k,h}}\cdot\id{s_{k,h}\notin\gk{k}}\bigg|\event\right]\tag{By Lemma~\ref{lm:iterated_clipping_half_clip}}\\
&\le 4e^2H^2\ourSize A+4e^2H\expect\left[\sum_{k=1}^K\sum_{h=1}^H \hcbch{k-1}{s_{k,h}}{a_{k,h}}\bigg|\event\right]\label{line:general_regret_cbch}
\end{align}

Next, we use Claim~\ref{clm:clipping_summation} to upper bound $\expect\left[\sum_{k=1}^K\sum_{h=1}^H \hcbch{k-1}{s_{k,h}}{a_{k,h}}\bigg|\event\right]$.
\begin{align}
&~~~\expect\left[\sum_{k=1}^K\sum_{h=1}^H \hcbch{k-1}{s_{k,h}}{a_{k,h}}\bigg|\event\right]\label{line:general_sum_bch_0}\\
&\le 128\sum_{x\in \ourS}\left(\left(\sum_{a\notin\zoptx{x}}\frac{H^4}{\gap{x}{a}}\right)+\frac{H^4\id{\left|\zoptx{x}\right|=1}}{\gapminx{x}}+\frac{H^4\left|\zoptx{x}\right|\id{|\zoptx{x}|>1}}{\gapmin}\right)\log\left(\frac{\ourSize AK}{\delta}\right)\tag{By Claim~\ref{clm:clipping_summation}}\\
&\le O\left(\sum_{x\in \ourS}\left(\left(\sum_{a\notin\zoptx{x}}\frac{H^4}{\gap{x}{a}}\right)+\frac{H^4|\zoptx{x}|\id{|\zoptx{x}|>1}}{\gapmin}\right)\log\left(\frac{\ourSize AK}{\delta}\right)\right)\\
&\le O\left(\left(\sum_{x\in \ourS}\sum_{a\notin\zoptx{x}}\frac{H^4}{\gap{x}{a}}+\frac{H^4|\zmul|}{\gapmin}\right)\log\left(\frac{\ourSize AK}{\delta}\right)\right)\label{line:general_sum_bch_1}
\end{align}

Plugging Line~\eqref{line:general_sum_bch_1} into Line~\eqref{line:general_regret_cbch} will produce the wanted result.

\end{proof}

By discarding the clipping, we can also get a gap independent expected regret upper bound for our algorithm.
We remark that the dependency on $H$ in our bounds are not tight. We leave it as a future work to obtain a bound with a tight dependency on $H$.

\begin{cor}[Gap-independent Bound]\label{cor:gap_independent_bound}
For general MDPs and fixed $K$, with probability at least $1-\delta$, our algorithm has the following gap independent regret upper bound:
\begin{align}
\textup{Regret}_K\le O\left(H^2\ourSize A+\sqrt{H^5\ourSize AT\log(\ourSize AK/\delta)}\right)
\end{align}
\end{cor}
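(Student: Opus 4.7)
The plan is to repeat the proof of Theorem~\ref{thm:regret_general} but skip the clipping step. Starting from Lemma~\ref{lm:decompose_regret_half_clip} (or more directly Lemma~\ref{lm:decompose_regret} together with Lemma~\ref{lm:valid_estimation_general}), we have, conditional on the good event $\event$ (which holds with probability at least $1-\delta$),
\begin{align*}
\textup{Regret}_K \;\le\; 2\,\expect\!\left[\sum_{k=1}^{K}\sum_{h=1}^{H}\range{Q_{k-1}(s_{k,h},a_{k,h})}\cdot\id{s_{k,h}\notin \gk{k}}\,\Big|\,\event\right].
\end{align*}
Applying the iterated-expansion argument of Lemma~\ref{lm:iterated_clipping_half_clip} \emph{without} invoking the clipping operation—that is, using the raw recursion of Proposition~\ref{prop:clip_subopt} with the trivial ``clip at $0$'' in place of the gap-based thresholds—each $\range{Q_{k-1}}$ is dominated by a linear combination of $\alpha^0_{n_{k-1}}H$ terms and raw bonuses $4\bch{n_{k-1}}{s_{k,h}}{a_{k,h}}$ accumulated across later horizons. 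The per-level multiplicative blowup is still $(1+1/H)^{2(h'-h)} \le e^2$, so after summing levels we obtain
\begin{align*}
\sum_{k,h}\range{Q_{k-1}(s_{k,h},a_{k,h})}\cdot\id{s_{k,h}\notin \gk{k}} \;\le\; e^2 H^2 SA \;+\; 4e^2 H \sum_{k,h} \bch{n_{k-1}}{s_{k,h}}{a_{k,h}},
\end{align*}
where the first term comes from summing $\alpha^0_{n_{k-1}}H$ (which is non-zero only on the first visit to each $(s,a)$, and whose cumulative contribution over the recursive expansion gives $O(H^2 SA)$).

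Next, I would bound the cumulative bonus sum by Cauchy--Schwarz across state-action pairs. Since $\bch{n}{x}{a}=c\sqrt{H^3\log(SAK/\delta)/n}$, summing visits to a fixed $(x,a)$ gives $\sum_{n=1}^{N(x,a)} c\sqrt{H^3\iota/n} \le 2c\sqrt{H^3 \iota\, N(x,a)}$, where $\iota = \log(SAK/\delta)$ and $N(x,a)$ is the total number of visits. Then
\begin{align*}
\sum_{k,h} \bch{n_{k-1}}{s_{k,h}}{a_{k,h}} \;\le\; 2c\sqrt{H^3\iota}\sum_{(x,a)} \sqrt{N(x,a)} \;\le\; 2c\sqrt{H^3\iota}\cdot \sqrt{SA \cdot \sum_{(x,a)} N(x,a)},
\end{align*}
and since $\sum_{(x,a)} N(x,a) = KH$, this is $O(\sqrt{H^4 SA K \iota})$. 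Multiplying by the $H$ outside from the iterated expansion yields the $\sqrt{H^5 SA K \log(SAK/\delta)}$ contribution, and adding the $H^2 SA$ term gives the claimed bound (identifying $T$ with $K$).

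The only real obstacle is bookkeeping: one has to verify that dropping the clipping still yields a valid recursion with the same $(1+1/H)$ per-level factor (it does, since that factor came from property (3) of Proposition~\ref{prop:alpha_property}, not from the clipping operation) and that the $\alpha^0_{n_{k-1}}H$ contributions, which are supported only on first visits, aggregate under the recursion to the additive $H^2 SA$ overhead rather than something worse; this can be checked by the same induction that established $w(h,h')\le e^2$ in Lemma~\ref{lm:iterated_clipping}. Everything else is a routine Cauchy--Schwarz and the standard $\sum_{n=1}^{N} 1/\sqrt{n}\le 2\sqrt{N}$ bound.
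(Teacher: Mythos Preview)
Your approach is correct and essentially the same as the paper's: the paper simply reuses the already-established inequality at Line~\eqref{line:general_regret_cbch} and bounds the half-clipped bonus $\hcbch{k}{s}{a}$ by the raw bonus $\bch{k}{s}{a}$, then performs the same $\sum_{n=1}^{N}1/\sqrt{n}\le 2\sqrt{N}$ plus concavity/Cauchy--Schwarz argument you describe. One small bookkeeping slip at the end: multiplying the outer $H$ into $\sqrt{H^4 SAK\iota}$ gives $\sqrt{H^6 SAK\iota}$, not $\sqrt{H^5 SAK\iota}$; in the paper $T$ denotes the total number of steps $KH$ (since $\sum_{(x,a)} n_K(x,a)=KH$), not $K$, which is what reconciles the exponent.
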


\begin{proof}
Starting at Line~\eqref{line:general_regret_cbch}, we use another way to upper bound Line~\eqref{line:general_sum_bch_0}.

\begin{align}
\expect\left[\sum_{k=1}^K\sum_{h=1}^H \hcbch{k}{s_{k,h}}{a_{k,h}}\bigg| \event\right]&\le \expect\left[\sum_{k=1}^K\sum_{h=1}^H \bch{k}{s_{k,h}}{a_{k,h}}\bigg| \event\right]\label{line:unclipped}\\
&\le \expect\left[\sum_{(x,a)\in \ourS\times A}\sum_{n=1}^{n_K(x,a)}\sqrt{\frac{H^3\log(\frac{SAK}{\delta})}{n}}\bigg|\event\right]\\
&\le \ourSize \cdot A\cdot\sqrt{H^3\log\left(\frac{SAK}{\delta}\right)\frac{T}{\ourSize \cdot A}}\\
&\le 2\sqrt{H^3\log\left(\frac{SAK}{\delta}\right)\ourSize AT}\label{line:sum_bch_independent}
\end{align}

Plugging Line~\eqref{line:sum_bch_independent} into Line~\eqref{line:general_regret_cbch} will produce the desired result.

\end{proof}

\section{New Instance Dependent Lower Bound Regarding Minimal Gap: Proof of Theorem \ref{thm:lb_thm}}
\label{sec:lb_proof}
In this section, we prove our new lower bound. We first introduce some necessary definitions.

\begin{defn}[consistent algorithm]\label{def:consistent_policy}
We say an algorithm $\alg$ is consistent if for $\forall 0< \alpha < 1$ and any MDP $M$, when $K$ approaches infinity, its incurred regret satisfies
\begin{align}
\lim_{K\to +\infty}\frac{\textup{Regret}_K(M,\alg)}{K^\alpha}=0
\end{align}
\end{defn}

\begin{defn}\label{def:relative_entropy}
Let $\prob{P}$ and $\prob{Q}$ be probability measures on the same measurable space $(\Omega,F)$. Relative entropy is defined as 
\begin{align}
\relentropy(\prob{P},\prob{Q})=\expect\left[\log\left(\frac{d\prob{P}}{d\prob{Q}}\right)\right]
\end{align}
\end{defn}

\begin{lem}[Divergence Decomposition]\label{lm:div_decompose}
Let one MDP $M$ has transition probability and reward distribution $\{\prob{P}_{s,a}, \prob{R}_{s,a}\}$ and another MDP $M'$ has the same transition probability but different reward $\{\prob{P}_{s,a},\prob{R}'_{s,a}\}$. We fix an algorithm $\alg$, and let $\prob{P}_{M,\alg}$ and $\prob{P}_{M',\alg}$ be the probability measure over state-action pairs of running algorithm $\alg$ on model $M$ and $M'$. Then, we have the following equality: 
\begin{align}
\relentropy(\prob{P}_{M,\alg},\prob{P}_{M',\alg})=\sum_{(s,a)\in S\times A}\expect_{M,\alg}[n_K(s,a)]\relentropy\left(\prob{R}_{s,a},\prob{R}'_{s,a}\right)
\end{align}
\end{lem}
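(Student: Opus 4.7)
The plan is to carry out the standard chain-rule argument for the KL divergence between trajectory distributions induced by running the same algorithm on two MDPs that differ only in their reward distributions. First, I would make explicit the sample space: running $\alg$ for $K$ episodes of length $H$ produces a trajectory $\tau = (s_{k,h}, a_{k,h}, r_{k,h})_{k\in[K], h\in[H]}$, together with whatever internal randomness $\alg$ uses. The density of $\tau$ under either MDP factorizes as a product of three kinds of conditional factors along the sequence: (i) the algorithm's action-selection probabilities $\pi_{k,h}(a_{k,h} \mid \mathcal{H}_{k,h})$, where $\mathcal{H}_{k,h}$ is the observed history; (ii) the transition factors $\mathcal{P}(s_{k,h+1} \mid s_{k,h}, a_{k,h})$ (including the initial-state factor $\mu(s_{k,1})$); and (iii) the reward factors $\mathcal{R}_{s_{k,h}, a_{k,h}}(r_{k,h})$.

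Second, I would write down the Radon-Nikodym derivative of $\prob{P}_{M,\alg}$ with respect to $\prob{P}_{M',\alg}$. The algorithm $\alg$ is fixed and its output depends only on past observations, so all type-(i) factors cancel pointwise on $\tau$. The transition kernels are identical by hypothesis, so all type-(ii) factors also cancel. Only the reward factors remain, giving
\begin{align}
\log \frac{d\prob{P}_{M,\alg}}{d\prob{P}_{M',\alg}}(\tau) = \sum_{k=1}^{K}\sum_{h=1}^{H} \log \frac{\mathcal{R}_{s_{k,h},a_{k,h}}(r_{k,h})}{\mathcal{R}'_{s_{k,h},a_{k,h}}(r_{k,h})}.
\end{align}

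Third, I would take expectation under $\prob{P}_{M,\alg}$ and apply the tower rule, conditioning on $(s_{k,h}, a_{k,h})$. Under $M$, given that the visited pair equals $(s,a)$, the reward $r_{k,h}$ is drawn from $\mathcal{R}_{s,a}$, so by Definition~\ref{def:relative_entropy} the conditional expectation of the log-ratio is exactly $\relentropy(\mathcal{R}_{s,a}, \mathcal{R}'_{s,a})$. Grouping the double sum by the identity of the visited state-action pair and recognizing that the number of indices $(k,h)$ for which $(s_{k,h}, a_{k,h}) = (s,a)$ is precisely $n_K(s,a) = \sum_{k,h} \id{s_{k,h}=s, a_{k,h}=a}$, linearity of expectation yields
\begin{align}
\relentropy(\prob{P}_{M,\alg}, \prob{P}_{M',\alg}) = \sum_{(s,a)\in\ourS\times\actions} \expect_{M,\alg}[n_K(s,a)]\, \relentropy(\mathcal{R}_{s,a}, \mathcal{R}'_{s,a}),
\end{align}
which is the claimed identity.

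The only subtle point, and the one most worth flagging, is that although the policy factors in (i) cancel \emph{pointwise} on each $\tau$, the algorithm's action distribution genuinely depends on the rewards observed so far, so the marginal laws of the state-action visits differ between $M$ and $M'$. This is exactly why the right-hand side involves $\expect_{M,\alg}[n_K(s,a)]$ rather than some symmetric quantity, and it is crucial that we take the expectation in the same measure (namely $\prob{P}_{M,\alg}$) that appears on the left side of the relative-entropy identity. Once that bookkeeping is done carefully, the rest is routine chain-rule manipulation and no probabilistic inequality is needed.
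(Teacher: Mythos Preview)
Your proposal is correct and follows essentially the same approach as the paper's proof: write the trajectory density as a product of initial-state, policy, transition, and reward factors, observe that all but the reward factors cancel in the Radon--Nikodym derivative, then take expectation under $\prob{P}_{M,\alg}$ and regroup by state-action pair to obtain $\expect_{M,\alg}[n_K(s,a)]$. Your explicit remark about the pointwise cancellation of the policy factors (despite the marginals differing) is a welcome clarification but does not change the argument.
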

\begin{proof}[Proof of Lemma~\ref{lm:div_decompose}]
The proof mostly follows that of Lemma 15.1 in \cite{lattimore2020bandit}.
	We use $\pdf{P},\pdf{R},\pdf{R}'$ to denote these distributions' respective probability density function and let $\pi_k$ be the policy inducced by $\alg$. According to our MDP's procedure, we write down the expression for distribution $\prob{P}_{M,\alg}$'s density function. To make the expression compact, we concatenate $K$ episodes and use subscript $s_{k,h}$, $a_{k,h}$, and $r_{k,h}$ to represent the variables on the episode $k$, horizon $h$.
	\begin{align} 
	&\pdf{P}_{M,\alg}(s_{1,1},a_{1,1},r_{1,1},...s_{K,H},a_{K,H},r_{K,H})\\
	=&\prod_{k=1}^{K}\pdf{P}_0(s_{k,1})\prod_{h=1}^{H}\pi_k(s_{k,h},a_{k,h})\pdf{R}_{s_{k,h},a_{k,h}}(r_{k,h})\pdf{P}_{s_{k,h},a_{k,h}}(s_{k,h+1})
	\end{align}
	
	We can similarly get the expression for $\pdf{P}_{M',\alg}$, using its $\prob{R}'$. By canceling out the shared function, we have the following equality: 
	
	\begin{align}
	\log\left(\frac{d\prob{P}_{M,\alg}}{d\prob{P}_{M',\alg}}\right)=\sum_{k=1}^{K}\sum_{h=1}^H\log\left(\frac{\pdf{R}_{s_{k,h},a_{k,h}}(r_{k,h})}{\pdf{R}'_{s_{k,h},a_{k,h}}(r_{k,h})}\right)\label{line:cancel_out}
	\end{align}
	
	\begin{align}
	\relentropy(\prob{P}_{M,\alg},\prob{P}_{M',\alg})&=\expect_{M,\alg}\left[\log\left(\frac{d\prob{P}_{M,\alg}}{d\prob{P}_{M',\alg}}\right)\right]\tag{By Definition~\ref{def:relative_entropy}}\\
	&=\expect_{M,\alg}\left[\sum_{k=1}^{K}\sum_{h=1}^H\log\left(\frac{\pdf{R}_{s_{k,h},a_{k,h}}(r_{k,h})}{\pdf{R}'_{s_{k,h},a_{k,h}}(r_{k,h})}\right)\right]\tag{By Line~\eqref{line:cancel_out}}\\
	&=\sum_{k=1}^{K}\sum_{h=1}^H\expect_{M,\alg}\left[\log\left(\frac{\pdf{R}_{s_{k,h},a_{k,h}}(r_{k,h})}{\pdf{R}'_{s_{k,h},a_{k,h}}(r_{k,h})}\right)\right]\\
	&=\sum_{k=1}^{K}\sum_{h=1}^H\expect_{M,\alg}\left[\relentropy(\prob{R}_{s_{k,h},a_{k,h}},\prob{R}'_{s_{k,h},a_{k,h}})\right]\\
	&=\sum_{(s,a)\in\mc{S}\times\mc{A}}\expect_{M,\alg}\left[\sum_{k=1}^{K}\sum_{h=1}^H\mb{I}[(s_{k,h},a_{k,h})=(s,a)]\relentropy(\prob{R}_{s_{k,h},a_{k,h}},\prob{R}'_{s_{k,h},a_{k,h}})\right]\\
	&=\sum_{(s,a)\in S\times A}\expect_{M,\alg}[n_K(s,a)]\relentropy\left(\prob{R}_{s,a},\prob{R}'_{s,a}\right)
	\end{align}
	
\end{proof}
We also need the following inequality.
\begin{lem}[Bretagnolle–Huber inequality]\label{lm:bh_ineq}
	Let $\prob{P}$ and $\prob{Q}$ be probability measures on the same measurable space $(\Omega, F)$, and let $A\in F$ be an arbitrary event, $A^c=\Omega\setminus A$ be its complement. Then we have the following inequality: 
	\begin{align}
	\prob{P}(A)+\prob{Q}(A^c)\ge \frac12 e^{-\relentropy(\prob{P},\prob{Q})}
	\end{align}
\end{lem}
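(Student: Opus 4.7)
The plan is to reduce the claim to an elementary statement about the Bhattacharyya coefficient $BC(\prob{P},\prob{Q}) \triangleq \int \sqrt{dP\,dQ}$ via three short steps. First I would pass to a common dominating measure $\mu$ (for instance $\mu = (\prob{P}+\prob{Q})/2$) and write densities $p = d\prob{P}/d\mu$, $q = d\prob{Q}/d\mu$. The initial observation is the trivial pointwise bound
$$
\prob{P}(A) + \prob{Q}(A^c) = \int_A p\,d\mu + \int_{A^c} q\,d\mu \ \ge\ \int \min(p,q)\,d\mu,
$$
so it suffices to prove $\int \min(p,q)\,d\mu \ge \tfrac{1}{2}\,e^{-\relentropy(\prob{P},\prob{Q})}$. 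Note that this lower bound is independent of $A$, which is exactly what the statement demands.

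Next, I would connect $\int \min(p,q)\,d\mu$ to $BC(\prob{P},\prob{Q})$ via Cauchy–Schwarz. Writing $\sqrt{pq} = \sqrt{\min(p,q)}\cdot\sqrt{\max(p,q)}$ and applying Cauchy–Schwarz gives
$$
BC(\prob{P},\prob{Q})^2 \ \le\ \Bigl(\int \min(p,q)\,d\mu\Bigr)\Bigl(\int \max(p,q)\,d\mu\Bigr) \ \le\ 2\int \min(p,q)\,d\mu,
$$
where the last step uses $\int \max(p,q)\,d\mu = 2 - \int\min(p,q)\,d\mu \le 2$. Rearranging yields $\int\min(p,q)\,d\mu \ge BC(\prob{P},\prob{Q})^2/2$.

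The last step is to lower bound $BC(\prob{P},\prob{Q})$ using Jensen's inequality applied to the concave function $\log$. Writing $BC(\prob{P},\prob{Q}) = \int \sqrt{dQ/dP}\,dP$ (assuming $\prob{P}\ll\prob{Q}$, else $\relentropy(\prob{P},\prob{Q})=\infty$ and the bound is vacuous), Jensen's inequality gives
$$
\log BC(\prob{P},\prob{Q}) \ =\ \log \int \sqrt{dQ/dP}\,dP \ \ge\ \int \log\sqrt{dQ/dP}\,dP \ =\ -\tfrac{1}{2}\relentropy(\prob{P},\prob{Q}).
$$
Exponentiating yields $BC(\prob{P},\prob{Q}) \ge e^{-\relentropy(\prob{P},\prob{Q})/2}$, and chaining the three inequalities produces $\prob{P}(A)+\prob{Q}(A^c) \ge \tfrac{1}{2}\,e^{-\relentropy(\prob{P},\prob{Q})}$.

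None of these steps is technically demanding; the only care needed is in bookkeeping around absolute continuity (handling sets where $p=0$ or $q=0$) and in verifying the orientation of Jensen in the third step (it must be applied to the measure $\prob{P}$ so that the integrand $\log\sqrt{dQ/dP}$ integrates to $-\tfrac{1}{2}\relentropy(\prob{P},\prob{Q})$, not to $+\tfrac{1}{2}\relentropy(\prob{Q},\prob{P})$). A clean alternative route is to first reduce to the Bernoulli case via the data-processing inequality $\relentropy(\prob{P},\prob{Q})\ge d\bigl(\prob{P}(A),\prob{Q}(A)\bigr)$ and then verify the scalar inequality $p + (1-q) \ge \tfrac{1}{2}e^{-d(p,q)}$ directly; however, the Bhattacharyya route above avoids any case analysis on $p,q$ and seems the cleanest to present.
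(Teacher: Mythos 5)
Your proof is correct. The paper does not actually prove this lemma---it is imported as a standard result from the bandit literature (it is Theorem 14.2 in \cite{lattimore2020bandit})---and your three-step argument, namely $\prob{P}(A)+\prob{Q}(A^c)\ge\int\min(p,q)\,d\mu\ge\frac12\bigl(\int\sqrt{pq}\,d\mu\bigr)^2\ge\frac12 e^{-\relentropy(\prob{P},\prob{Q})}$ via Cauchy--Schwarz and Jensen, is precisely the standard proof given there. The absolute-continuity caveats you flag are handled exactly as you describe: when $\prob{P}\not\ll\prob{Q}$ the divergence is $+\infty$ and the right-hand side is $0$, so the inequality is trivial.
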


\begin{clm}
    For two Bernoulli distribution $\mb{B}(\frac12)$ and $\mb{B}(\frac12+x)$ with $x\le 1/4$, their relative entropy satisfies $\relentropy\left(\mb{B}(\frac12),\mb{B}(\frac12+x)\right)\le \frac{8x^2}{3}$
\end{clm}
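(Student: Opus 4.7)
The plan is to compute the KL divergence in closed form and then bound it by a simple analytic inequality. Starting from the definition, for the two Bernoulli measures with parameters $\tfrac12$ and $\tfrac12+x$ we have
\begin{align*}
\relentropy\!\left(\mb{B}(\tfrac12),\mb{B}(\tfrac12+x)\right)
&= \tfrac12\log\tfrac{1/2}{1/2+x}+\tfrac12\log\tfrac{1/2}{1/2-x}\\
&= -\tfrac12\log(1+2x)-\tfrac12\log(1-2x)\\
&= -\tfrac12\log(1-4x^2).
\end{align*}
So the claim reduces to showing $-\log(1-4x^2)\le \tfrac{16x^2}{3}$ whenever $|x|\le 1/4$.

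Next I would apply the elementary inequality $-\log(1-y)\le \tfrac{y}{1-y}$, valid for all $y\in[0,1)$ (which follows, for instance, from $\log(1-y)\ge -\tfrac{y}{1-y}$ via the Taylor expansion $-\log(1-y)=\sum_{n\ge 1}y^n/n$ compared term-by-term with $\tfrac{y}{1-y}=\sum_{n\ge 1}y^n$). Setting $y=4x^2$, the hypothesis $x\le 1/4$ gives $y\le 1/4$, and therefore
\begin{align*}
-\log(1-4x^2)\;\le\;\frac{4x^2}{1-4x^2}\;\le\;\frac{4x^2}{1-1/4}\;=\;\frac{16x^2}{3}.
\end{align*}
Dividing by $2$ yields the claimed bound $\relentropy\le \tfrac{8x^2}{3}$.

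There is essentially no obstacle here; the only points requiring a modicum of care are (i) writing the KL divergence as $-\tfrac12\log(1-4x^2)$ rather than going through a Taylor expansion of each logarithm separately, which keeps constants clean, and (ii) verifying that the range $x\le 1/4$ is exactly what is needed to make the denominator $1-4x^2\ge 3/4$, matching the constant $8/3$ in the statement. A slightly sharper Taylor-series argument would give the standard $2x^2$ bound, but the looser constant $8/3$ is precisely what the above one-line inequality produces for the range of $x$ considered in the lower-bound construction.
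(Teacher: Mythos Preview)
Your proof is correct and is essentially identical to the paper's own argument: both compute $\relentropy=-\tfrac12\log(1-4x^2)$, apply the inequality $-\log(1-y)\le y/(1-y)$ (the paper phrases it as $\tfrac{t}{1+t}\le\ln(1+t)$ with $t=-4x^2$), and then use $x\le 1/4$ to bound the denominator $1-4x^2\ge 3/4$.
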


\begin{proof}
\begin{align}
    \relentropy\left(\mb{B}\left(1/2\right),\mb{B}\left(1/2+x\right)\right)&=\frac12\left(\ln\frac{1/2}{(1/2+x)}+\ln\frac{1/2}{(1/2-x)}\right)\\
    &=-\frac{1}{2}\ln 4(1/2+x)(1/2-x)\\
    &=-\frac{1}{2}\ln 1-4x^2\\
    &\le -\frac{1}{2}\cdot \frac{-4x^2}{1-4x^2}\tag{$\frac{x}{1+x}<\ln 1+x<x$  for $x>-1$}\\
    &=\frac{2x^2}{1-4x^2}\\
    &\le \frac{8x^2}{3}\tag{$x\le\frac14$}
\end{align}
\end{proof}

Now we are ready to show this hard instance gives us the desired lower bound.
\begin{thm}[Regret Lower Bound for a Hard Instance]\label{thm:lb_hard_instance}
	For the hard instance described above Figure \ref{fig:instance2}, any consistent algorithm incurs expected regret at least $\frac{3(n-1)A\ln K}{32\gamma}$, larger than $\frac{SA\ln K}{32\gapmin}$ in terms of $S,A,K,\gapmin$.
\end{thm}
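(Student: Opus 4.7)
\textbf{Proof proposal for Theorem \ref{thm:lb_hard_instance}.}

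The plan is to mimic the classical change-of-measure lower bound for multi-armed bandits, exploiting the fact that the tree MDP structure with deterministic transitions and bounded reward support $\{0, 1/2, 1/2+\gamma, 1/2+2\gamma\}$ effectively turns the problem into a bandit instance with $(n-1)A$ ``hidden'' arms: the state–action pairs $(x_i, a_j)$ with $i \neq 1$ (and $j \in [A]$). Each such pair is suboptimal in $M$ but becomes uniquely optimal in the alternative MDP $M_{ij}$, and the only reward distributions that differ between $M$ and $M_{ij}$ are those at $(x_i,a_j)$.

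The first step is to argue that for every $i \geq 2$ and $j \in [A]$, a consistent algorithm must visit $(x_i, a_j)$ at least $\Omega(\ln K / \gamma^2)$ times in expectation under $M$. To see this, fix $(i,j)$ and consider the event $A = \{n_K(x_i, a_j) \geq K/2\}$. Under $M$, whenever an episode's trajectory passes through $(x_i, a_j)$ we lose exactly $\gamma$ of reward (the optimal value from the root is $1/2+\gamma$, achieved only by the path to $(x_1,a_1)$, while any path through $x_i$ for $i \neq 1$ yields at most $1/2$), hence $\mathrm{Regret}_K(M,\alg) \geq (\gamma K/2)\,\mathbb{P}_{M,\alg}(A)$. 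Under $M_{ij}$ the optimal value becomes $1/2+2\gamma$; on $A^c$ at least $K/2$ episodes fail to play the unique optimal $(x_i,a_j)$ and each such episode loses at least $\gamma$, so $\mathrm{Regret}_K(M_{ij},\alg) \geq (\gamma K/2)\,\mathbb{P}_{M_{ij},\alg}(A^c)$. Consistency (Definition \ref{def:consistent_policy}) yields $\mathbb{P}_{M,\alg}(A) + \mathbb{P}_{M_{ij},\alg}(A^c) = o(K^{\alpha-1})$ for every $\alpha \in (0,1)$.

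The second step combines this with the Bretagnolle–Huber inequality (Lemma \ref{lm:bh_ineq}) and the divergence decomposition (Lemma \ref{lm:div_decompose}). Since $M$ and $M_{ij}$ share transitions and all reward distributions except at $(x_i,a_j)$,
\begin{align*}
\relentropy(\prob{P}_{M,\alg},\prob{P}_{M_{ij},\alg})
= \expect_{M,\alg}[n_K(x_i,a_j)]\,\relentropy\!\left(\mathrm{Ber}(\tfrac12),\mathrm{Ber}(\tfrac12+2\gamma)\right)
\leq \expect_{M,\alg}[n_K(x_i,a_j)]\cdot\tfrac{32\gamma^2}{3},
\end{align*}
using the Bernoulli KL bound from the claim preceding Theorem \ref{thm:lb_hard_instance}. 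Bretagnolle–Huber gives $\tfrac12 \exp(-\relentropy) \leq \mathbb{P}_{M,\alg}(A) + \mathbb{P}_{M_{ij},\alg}(A^c) = o(K^{\alpha-1})$, so taking logarithms and rearranging yields $\expect_{M,\alg}[n_K(x_i,a_j)] \geq \tfrac{3(1-\alpha)\ln K}{32\gamma^2}(1+o(1))$, and letting $\alpha \to 0$ leaves $\expect_{M,\alg}[n_K(x_i,a_j)] \geq \tfrac{3\ln K}{32\gamma^2}$ in the limit.

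Finally, summing the per-arm regret over all $(n-1)A$ pairs $(x_i,a_j)$ with $i \geq 2$ gives
\begin{align*}
\textup{Regret}_K(M,\alg) \;\geq\; \gamma\sum_{i=2}^{n}\sum_{j=1}^{A}\expect_{M,\alg}[n_K(x_i,a_j)] \;\geq\; (n-1)A\cdot\gamma\cdot\frac{3\ln K}{32\gamma^2} \;=\; \frac{3(n-1)A\ln K}{32\gamma},
\end{align*}
and substituting $n-1 = (S-1)/2 \asymp S$ and $\gamma = \gapmin$ produces the claimed $\Omega(SA\ln K / \gapmin)$ bound. The only subtlety I foresee is carefully justifying the $o(K^{\alpha-1})$ estimates for both $\mathbb{P}_{M,\alg}(A)$ and $\mathbb{P}_{M_{ij},\alg}(A^c)$ under the same consistency parameter $\alpha$ uniformly over the $(n-1)A$ pairs; since $(n-1)A$ is a fixed problem constant independent of $K$, one can simply take $\alpha$ arbitrarily small at the end, so this is a mild issue rather than a real obstacle.
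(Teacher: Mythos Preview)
Your proposal is correct and follows essentially the same route as the paper: define the event $A_{i,j}=\{n_K(x_i,a_j)\ge K/2\}$, lower-bound $\mathrm{Regret}_K(M)+\mathrm{Regret}_K(M_{ij})$ by $\tfrac{K\gamma}{2}(\mathbb{P}_{M}(A_{i,j})+\mathbb{P}_{M_{ij}}(A_{i,j}^c))$, combine Bretagnolle--Huber with the divergence decomposition and the Bernoulli KL bound to get $\liminf_{K\to\infty}\expect_{M}[n_K(x_i,a_j)]/\ln K\ge 3/(32\gamma^2)$, and then sum the per-arm regret $\gamma\cdot\expect_M[n_K(x_i,a_j)]$ over the $(n-1)A$ pairs. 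The only cosmetic difference is that the paper first forms the sum $\mathrm{Regret}_K(M)+\mathrm{Regret}_K(M_{ij})$ and then invokes consistency, whereas you bound each probability separately before combining; your remark that the $(n-1)A$ pairs are a fixed constant so $\alpha$ can be sent to $0$ at the end matches the paper's handling exactly.
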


\begin{proof}[Proof of Theorem~\ref{thm:lb_hard_instance}]
	In Figrure \ref{fig:instance2}, we construct our family of hard instance for $|\zmul|\approx\frac{SA}{2}$. For other $S\le |\zmul|\le \frac{SA}{2}$, we can similarly construct their instance family by reducing the number of state action pairs on the last layer.
	For any consistent algorithm $\alg$, any fixed $i\in [2,n]$, we define event $A_{i,j}=\{n_K(x_i,a_j)\ge \frac{K}{2}\}$. We use $a^*(x)$ to denote the optimal action for state $x$.
	By Bretagnolle–Huber inequality in Lemma \ref{lm:bh_ineq}, we have 
	\begin{align}
	\prob{P}_{M,\alg}(A_{i,j})+\prob{P}_{M_{i,j},\alg}(A^c_{i,j})&\ge \frac12e^{-\relentropy(\prob{P}_{M,\alg},\prob{P}_{M_{i,j},\alg})}\\
	&\ge \frac12 e^{-\expect\left[n_K(x_i,a_j)\right]\cdot \relentropy\left(\mb{B}(1/2),\mb{B}(1/2+2\gamma)\right)}\\
	&\ge \frac12 e^{-\expect\left[n_K(x_i,a_j)\right]\cdot\frac{32\gamma^2}{3}}
	\end{align}
	By our assumption that $\alg$ is consistent, we have the following inequality 
	\begin{align}
	\textup{Regret}_K(M,\alg)+\textup{Regret}_K(M_{i,j},\alg)&\ge \prob{P}_{M,\alg}(A_{i,j})\cdot\frac{K}{2}\cdot\gamma+\prob{P}_{M_{i,j},\alg}(A^c_{i,j})\cdot\frac{K}{2}\cdot\gamma\label{line:instance_lb}\\
	&=\frac{K\gamma}{2}\left(\prob{P}_{M,\alg}(A_{i,j})+\prob{P}_{M_{i,j},\alg}(A^c_{i,j})\right)\\
	&\ge \frac{K\gamma}{4}e^{-\expect_{M,\alg}\left[n_K(x_i,a_j)\right]\cdot\frac{32\gamma^2}{3}}\label{line:regretsum_mmij}
	\end{align}
	In line~\ref{line:instance_lb}, visiting $(x_i,a_j)$ in $M$ incurs regret $\gamma$ and not visiting $(x_i,a_j)$ in $M_{i,j}$ incurs regret $\gamma$, so the two terms on the RHS lower bounds the cumulative regret in $M$ and $M_{i,j}$.
	Now, let's lower bound the value $\expect_{M,\alg}\left[n_K(x_i,a_j)\right]$ starting from an algebraic manipulations on line~\ref{line:regretsum_mmij} and then divide it by $\ln K$.
	\begin{align}
	\liminf_{K\to +\infty} \frac{\expect_{M,\alg}\left[n_K(x_i,a_j)\right]}{\ln K}&\ge \liminf_{K\to +\infty}\frac{3}{32\gamma^2\ln K}\ln \frac{K\gamma}{4\left(\textup{Regret}_K(M,\alg)+\textup{Regret}_k(M_{i,j},\alg)\right)}\\
	&\ge\frac{3}{32\gamma^2}\liminf_{K\to +\infty}\frac{\ln \frac{K\gamma}{4K^\alpha}}{\ln K}\tag{Valid for any $0<\alpha<1$}\\
	&\ge\frac{3}{32\gamma^2}\liminf_{K\to +\infty}\frac{(1-\alpha)\ln K+\ln \frac{\gamma}{4}}{\ln K}\\
	&\ge \frac{3(1-\alpha)}{32\gamma^2}\\
	&\ge \frac{3}{32\gamma^2}\label{line:lb_nk}
	\end{align}
	Line~\ref{line:lb_nk} is valid because we can arbitrarily set $0<\alpha<1$.
	Moreover, Line~\ref{line:lb_nk} works for any $i\in[2,n],j\in[1,A]$. Then we have the lower bound for algorithm $\alg$'s regret on MDP $M$.
	\begin{align}
	\expect\left[\textup{Regret}_K(M,\alg)\right]&\ge \sum_{i=2}^n\sum_{j=1}^A \expect_{M,\alg}\left[n_K(x_i,a_j)\right]\cdot\gamma\\
	&\ge \sum_{i=2}^n\sum_{j=1}^A \frac{3\ln K}{32\gamma^2}\cdot\gamma\\
	&=\frac{3(n-1)A\ln K}{32\gamma}\\
	&\ge \frac{SA\ln K}{32\gapmin}
	\end{align}
\end{proof}

Note Theorem~\ref{thm:lb_thm} follows from Theorem~\ref{thm:lb_hard_instance} directly because the instance in Theorem~\ref{thm:lb_hard_instance} satisfies the requirements in Theorem~\ref{thm:lb_thm}.

\end{document}